\documentclass[]{fairmeta}

\usepackage{amsmath}
\usepackage{amssymb}
\usepackage{amsfonts}
\usepackage{nicefrac}
\usepackage{algorithm}
\usepackage{algorithmic}
\usepackage{enumitem}
\usepackage{tikz}
\usetikzlibrary{positioning, calc, arrows.meta, fit}
\usepackage{amsthm}
\usepackage{pifont}
\usepackage{wrapfig}

\newtheorem{theorem}{Theorem}
\newtheorem{proposition}[theorem]{Proposition}
\newtheorem{lemma}[theorem]{Lemma}
\newtheorem{corollary}[theorem]{Corollary}

\newtheorem{remark}[theorem]{Remark}

\def\1{\bm{1}}

\DeclareMathAlphabet{\mathsfit}{\encodingdefault}{\sfdefault}{m}{sl}
\SetMathAlphabet{\mathsfit}{bold}{\encodingdefault}{\sfdefault}{bx}{n}

\title{LoopFM: Learning frOm HistOrical RePresentations of Foundation Model for Recommendation}

\author[*]{Shali Jiang}
\author[*]{Hua Zheng}
\author[*]{Boyang Liu}
\author[]{Laming Chen}
\author[]{Kenny Lov}
\author[]{Chuanqi Xu}
\author[]{Lisang Ding}
\author[]{Qinghai Zhou}
\author[]{Can Cui}
\author[]{Xiaolong Liu}
\author[]{Xiaoyi Liu}
\author[]{Yasmine Badr}
\author[]{Xin Xu}
\author[]{Jiyan Yang}
\author[]{Ellie Dingqiao Wen}
\author[]{Gerard Jonathan Mugisha Akkerhuis}
\author[]{Chenxiao Guan}
\author[]{Rong Jin}
\author[]{Ruichao Qiu}
\author[]{Xian Chen}
\author[]{Shifu Xu}
\author[]{Zhehui Zhou}
\author[]{Ping Chen}
\author[]{Rui Yang}
\author[]{Haicheng Chen}
\author[]{Xiangge Meng}
\author[]{Song Zhou}
\author[]{Dharak Kharod}
\author[]{Shuyu Xu}
\author[]{Qiang Jin}
\author[]{Qiao Yang}
\author[]{Wankun Zhu}
\author[]{Qin Huang}
\author[]{Yuzhen Huang}
\author[]{Darren Liu}
\author[]{Parish Aggarwal}
\author[]{Hui Zhou}
\author[]{Erzhuo Wang}
\author[]{Shuo Chang}
\author[]{Xiaorui Gan}
\author[]{Wenlin Chen}
\author[]{Santanu Kolay}
\author[\dagger]{Huayu Li}

\contribution[*]{Equal contribution}
\contribution[\dagger]{Correspondence, AI at Meta}
\abstract{Knowledge distillation (KD) transfers a single scalar prediction from a large foundation model (FM) to compact vertical models (VMs), suffering from diminishing transfer ratio---the fraction of FM improvement captured by the VM---as a single scalar cannot convey the rich intermediate knowledge that larger FMs learn. To address this bottleneck, we propose \textbf{LoopFM} (\textbf{L}earning fr\textbf{O}m Hist\textbf{O}rical Re\textbf{P}resentations of \textbf{FM}), a framework that opens a high-bandwidth transfer channel by structuring FM intermediate embeddings as \emph{input features} (e.g., user history sequence) for downstream VMs, without requiring real-time FM inference at serving and architectural coupling between FM and VM.
We provide a theoretical framework for LoopFM with a gain decomposition and transfer-ratio analysis.
On three public benchmarks, LoopFM demonstrates strong AUC improvements (e.g., 6\%+ on TaobaoAd) and complementary knowledge transfer capability with KD. On industrial-scale systems (billions of examples, trillion-parameter FMs), LoopFM approximately \textbf{doubles} the knowledge transfer ratio on top of KD, delivering a \textbf{+0.5\%} conversion improvement in the first half after its initial launch, and \textbf{+1.03\%} and \textbf{+1.22\%} conversion improvement from two individual launches in the subsequent half.}

\date{\today}
\correspondence{Huayu Li at \email{huayuli@meta.com}}

\begin{document}

\raggedbottom
\hbadness=4000
\maketitle

\section{Introduction}
\label{sec:intro}

Industrial recommendation relies on a two-tier architecture: a large foundation model (FM) with up to trillions of parameters learns rich representations offline, while compact vertical models (VMs) serve predictions under strict latency constraints~\citep{he2014practical, liang2025exfm, anil2022factory}.
Knowledge distillation (KD)~\citep{hinton2015distilling} bridges the two by transferring the FM's scalar predictions as soft labels to supervise VM training.
However, as FMs scale to multi-trillion parameters, we observe that the \emph{transfer ratio} (TR)---the fraction of FM improvement captured by the VM ($\text{TR} = \Delta\text{NE}_{\text{VM}} / \Delta\text{NE}_{\text{FM}}$)---continues to deteriorate, corroborating the findings that KD degrades under large teacher-student capacity gaps~\citep{cho2019efficacy}.

We hypothesize that this deterioration stems from a \emph{bandwidth bottleneck}: a single scalar prediction compresses all of the FM's learned knowledge---rich cross-domain features, multi-level interaction patterns, contextual signals---into one number.
As FMs grow more capable, the gap between what they learn and what a scalar can convey widens.
Furthermore, FMs are typically trained on richer cross-domain features than VMs, creating a \emph{feature gap} that scalar KD cannot bridge.

We propose \textbf{LoopFM}, a novel knowledge transfer framework to address this bottleneck by \emph{materializing} the FM's intermediate representations as structured input features for VM consumption (Figure~\ref{fig:overview}).
LoopFM defines three modular stages---extraction, compression, and structuring---each independently configurable.
The structuring stage groups compressed embeddings by a \emph{grouping key} (e.g., user ID, item ID), yielding, e.g., sequences or graphs, as VM input features.
In this paper we focus on user-keyed temporal sequences.
Crucially, only \emph{historical} embeddings are used---enabling VM serving without requiring real-time FM inference.

We provide a theoretical analysis (Section~\ref{sec:theory}) showing that LoopFM's information gain decomposes into temporal history and cross-feature components minus the compression cost, and derive a lower bound on the transfer ratio that increases monotonically with the FM's feature gap over the VM.

We validate LoopFM on public benchmarks (TaobaoAd, KuaiVideo, Amazon Electronics) and industrial-scale systems with trillion-parameter FMs, where it doubles the knowledge transfer ratio and has delivered significant ad conversion improvement in production. Our \textbf{contributions} are:
\begin{enumerate}[leftmargin=2em, itemsep=1pt, topsep=2pt]
    \item A modular framework that transfers FM intermediate representations as structured VM input features, requiring no architectural coupling or real-time FM inference.
    \item Theoretical analysis decomposing LoopFM's information gain into temporal, cross-feature, and compression-loss components, with a transfer-ratio lower bound that grows with the FM-VM feature gap (Corollary~\ref{cor:tr-monotone}) and tightens with better compression (Theorem~\ref{thm:loopfm-tr}), and an information gain that non-decreases with sequence length $L$ (Theorem~\ref{prop:monotone-L-main}).
    \item Extensive experiments on three public benchmarks and internal trillion-parameter systems, including ablations on layer selection, checkpoint freshness, sequence length, embedding dimension etc. In production, LoopFM delivered +0.5\% conversion improvement in the first deployment half, and +1.03\% and +1.22\% conversion improvements from two individual launches in the subsequent half.
\end{enumerate}

\begin{figure}[t]
    \centering
    \begin{tikzpicture}
    \node[inner sep=0pt, anchor=west] (img) at (0,0) {\includegraphics[width=0.62\textwidth]{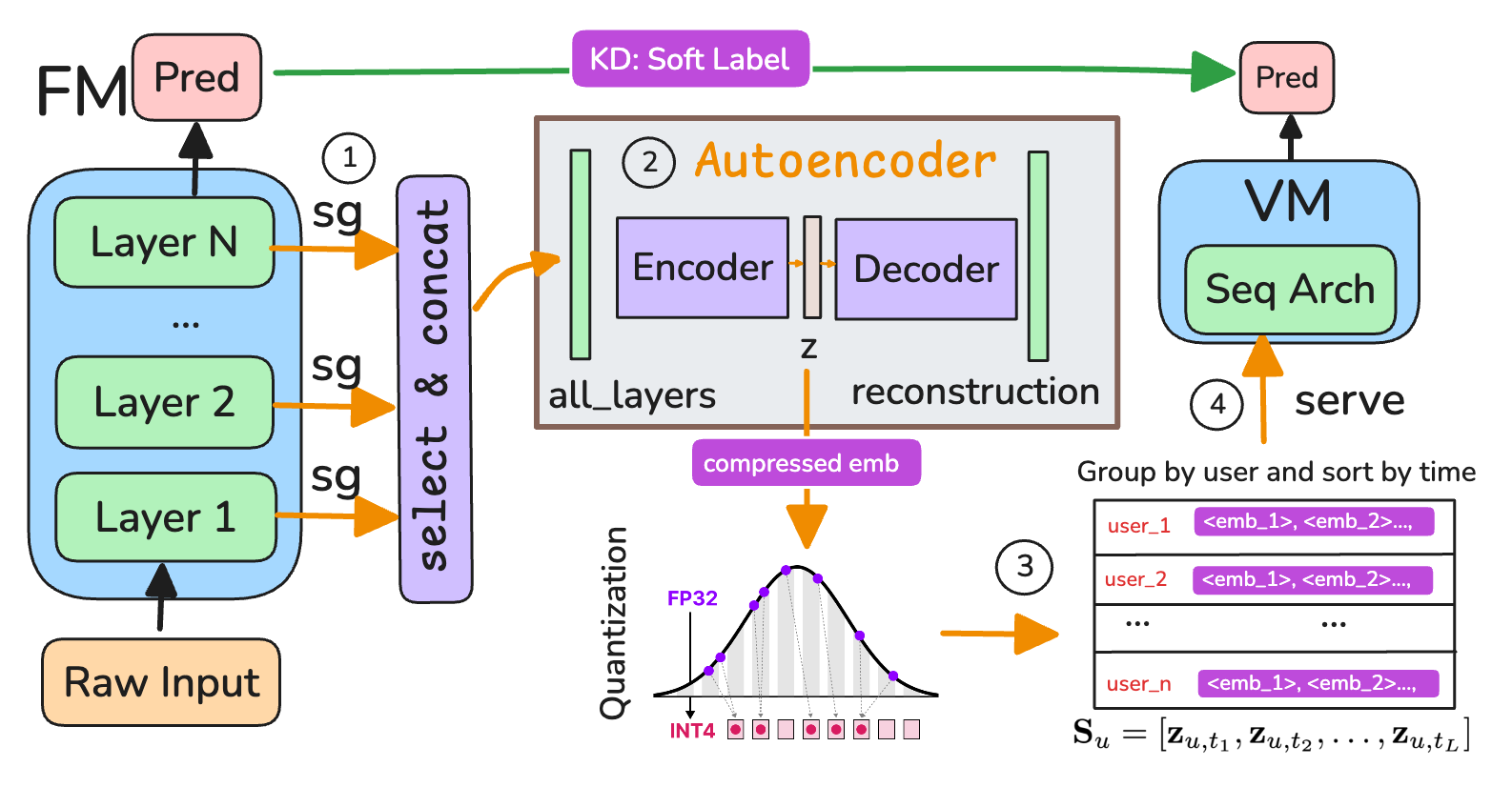}};

    \node[draw=orange!70, fill=orange!5, rounded corners=4pt, text width=4.55cm, font=\scriptsize,
          align=left, anchor=south west] (prod) at ([xshift=4pt, yshift=2pt]img.east) {%
        \textbf{\color{orange!70!black}Internal (Production)}\\[2pt]
        * Transfer ratio: $\sim$\textbf{doubled}\\[1pt]
        * Conversions (Since initial launch):\\
        \quad 1st Half: \textbf{+0.50\%}\\
        \quad 2nd Half: \textbf{+1.03\%} and \textbf{+1.22\%}\\[1pt]
    };

    \node[draw=blue!60, fill=blue!5, rounded corners=4pt, text width=4.55cm, font=\scriptsize,
          align=left, anchor=north west] (pub) at ([xshift=4pt, yshift=-8pt]img.east) {%
        \textbf{\color{blue!70!black}Public Benchmarks AUC gain}\\[2pt]
        * TaobaoAd: \textbf{+6.4\%} avg (+6.1--6.6\%)\\
        * KuaiVideo: \textbf{+1.0\%} avg (+0.6--1.6\%)\\
        * Amazon: \textbf{+0.5\%} avg (+0.02--1.14\%) \\
    };
    \end{tikzpicture}
    \caption{LoopFM overview. \textbf{Left:} KD transfers via a soft label (top). LoopFM adds a high-bandwidth embedding channel: \ding{192}~extract intermediate FM embeddings; \ding{193}~compress (e.g., autoencoder) and quantize (e.g., INT4) for storage efficiency; \ding{194}~group by key and structure into temporal sequences $\mathbf{S}_k$ (here, user-keyed: $k{=}u$); \ding{195}~serve as input features to the VM's encoder. No real-time FM inference is required. \textbf{Right:} Key results on internal production (top) and public benchmarks (bottom).
    }
    \label{fig:overview}
\end{figure}

\section{Related Work}
\label{sec:related}

\paragraph{Knowledge distillation in recommendation.}
KD~\citep{hinton2015distilling} is widely used to transfer knowledge from large teachers to compact students in recommendation~\citep{kang2024unbiased, kang2023distillation, chen2023unbiased, tang2018ranking}.
External distillation~\citep{liang2025exfm, khani2024bridging} separates teacher and student training, enabling the teacher to serve as a foundation model.
Beyond scalar KD, embedding-based methods transfer intermediate representations: FitNets~\citep{romero2015fitnets} matches hidden activations, CRD~\citep{tian2020crd} maximizes representation mutual information, and privileged features distillation~\citep{xu2020privileged, yang2022privileged} leverages training-time-only signals.
\citet{cui2024dllm2rec} distill LLM embeddings to lightweight sequential models.
All these methods either transfer scalars or match \emph{current-sample} representations via auxiliary losses.
LoopFM instead materializes \emph{historical} FM embeddings as structured input features.

\paragraph{Sequence modeling and FM representations.}
User behavior sequence modeling is central to modern recommendation: DIN~\citep{zhou2018din} and DIEN~\citep{zhou2019dien} use attention over action histories, SASRec~\citep{kang2018sasrec} and BERT4Rec~\citep{sun2019bert4rec} apply Transformers, and SIM~\citep{pi2020sim} handles lifelong sequences.
These model raw behavioral signals (clicked IDs, categories).
Separately, foundation model research~\citep{hou2026kunlun, zhang2024wukong, anil2022factory} has explored transferable entity embeddings: PinSage~\citep{ying2018graph} for items, entity-level user embeddings~\citep{zhang2024sum, li2023catart} for users, which miss interaction-level information.
Recent industrial systems~\citep{xiong2026latte, chen2025vista} have adopted asynchronous caching of sequence model representations for serving efficiency.
Concurrent and independent of our work, IAT~\citep{li2026iat} proposes a similar compress-and-sequence pipeline for historical interaction representations from a sequence feature engineering perspective.
While the high-level idea is shared, LoopFM is a general framework motivated by FM-to-VM transfer ratio improvement, with theoretical insights.

\section{Method}
\label{sec:method}

\paragraph{Overview and problem formulation.}
\label{sec:overview}
We consider a standard industrial recommendation setup: a large foundation model (FM) with trillions of parameters and multiple compact vertical models (VMs) with millions of parameters for serving.
The FM is trained on aggregated cross-domain data while each VM handles a specific ranking stage or domain.
In standard external KD~\citep{liang2025exfm}, the FM generates scalar predictions $\hat{y}^F$ as soft labels for VM training.
LoopFM opens a \emph{high-bandwidth embedding channel} alongside this scalar channel, through three modular stages---\emph{extraction}, \emph{compression}, and \emph{structuring}---each admitting different algorithmic choices.

\paragraph{Stage 1: Embedding extraction.}
\label{sec:extraction}
Given an FM with layers $l_1, \ldots, l_M$, we select a subset of $K$ layers and concatenate their activations to form a raw embedding $\mathbf{e}^{(i)} = [\mathbf{h}_{l_1}^{(i)}; \ldots; \mathbf{h}_{l_K}^{(i)}] \in \mathbb{R}^{D}$ for each example (\emph{e.g.}, an item impression $a$ for a user $u$: $(u, a)$), where $D = \sum_{k=1}^{K} d_{l_k}$ ($D$ could be $O(10^5)$--$O(10^6)$ for industrial FMs). One principle is that the selected layers should be shallow enough to retain rich input information~\citep{tishby2015deep}, but also deep enough to capture well-learned interactions.
Extraction shares inference pass with KD, adding negligible overhead.

\paragraph{Stage 2: Compression.}
\label{sec:compression}
The raw embeddings $\mathbf{e}^{(i)} \in \mathbb{R}^D$ are too high-dimensional for direct storage.
We compress them to $\mathbf{z}^{(i)} \in \mathbb{R}^d$ ($d \ll D$) using an autoencoder co-trained with the FM, though any dimensionality reduction method (PCA, random projection, learned codebook) could serve this role:
\begin{equation}
    \mathbf{z}^{(i)} = f_{\text{enc}}(\mathbf{e}^{(i)}), \quad \hat{\mathbf{e}}^{(i)} = f_{\text{dec}}(\mathbf{z}^{(i)}), \quad \mathcal{L}_{\text{AE}} = \|\mathbf{e}^{(i)} - \hat{\mathbf{e}}^{(i)}\|_2^2.
\end{equation}
A stop-gradient ensures no autoencoder gradients flow into the FM backbone.
The encoder's last layer uses $\tanh$ activation to bound outputs in $[-1, 1]$, facilitating INT4 quantization ($z_{\text{quant}} = \text{round}(z \cdot 8).\text{clamp}(-8, 7)/8$), which achieves 4$\times$ storage reduction from FP16.

\paragraph{Matryoshka compression.}
To enable flexible embedding dimension tuning for deployment allowing trade-off of feature value vs. storage/latency cost, we use Matryoshka-style autoencoders~\citep{kusupati2022matryoshka} so that any prefix $\mathbf{z}_{1:d'}$ is a valid representation, by summing losses over target dimensions $\mathcal{D}$: $\mathcal{L}_{\text{MAE}} = \sum_{d' \in \mathcal{D}} \|\mathbf{e} - f_{\text{dec}}^{(d')}(\mathbf{z}_{1:d'})\|_2^2$.

\paragraph{Stage 3: Structuring.}
\label{sec:sequence}
Historical compressed embeddings within a configurable time window are first grouped by a \emph{key} $k$ (e.g., user ID, item ID, or other semantic IDs), and then stored in appropriate structures, such as temporal sequences:
\begin{equation}
    \mathbf{S}_k = [\mathbf{z}_{k,t_1}, \mathbf{z}_{k,t_2}, \ldots, \mathbf{z}_{k,t_L}], \quad t_L < \cdots < t_2 < t_1 < t_{cur},
\end{equation}
where $L$ is the maximum sequence length, $t_i$ is the timestamp of event $i$, and $t_{cur}$ is current timestamp when the user is being served.
Note we \emph{exclude} the current sample being served to avoid requiring real-time FM inference at serving.
More generally, the grouped embeddings can be structured as a graph (e.g., a user-item bipartite graph with FM embeddings on edges).
In this paper, we focus on user-keyed sequences ($k = u$): for each user $u$, we collect their historical FM embeddings within a retention window (e.g., 30 days) and truncate to the most recent $L$ entries (e.g., $200$) to form $\mathbf{S}_u$.

\paragraph{VM-side integration.}
\label{sec:downstream}
The VM processes $\mathbf{S}_u$ via a sequence encoder (denoted ``Seq Arch'' in Figure~\ref{fig:overview}), such as mean/sum pooling or attention, and concatenates the pooled representation with other feature embeddings before the interaction layers.
The training loss combines task loss and KD:
$\mathcal{L} = \mathcal{L}_{\text{task}}(\hat{y}^V, y) + \lambda \cdot \mathcal{L}_{\text{KD}}(\hat{y}^V, \hat{y}^F)$,
where $\hat{y}^V = g(\mathbf{x}_{\text{VM}}, \mathbf{S}_u; \Theta_V)$.


\section{Theoretical Analysis}
\label{sec:theory}

We formalize the sources of LoopFM gain and analyze how FM improvement transfers to the VM.

\paragraph{Setup.}
Although the LoopFM framework applies to any grouping key (Section~\ref{sec:sequence}), we present the theoretical analysis using user ID key without loss of generality---all results hold verbatim by replacing ``user'' with any key type.
Consider predicting label $y$ given current VM features $\mathbf{x}_{\text{VM}}^{(t)}$ for a user $u$ with $L$ past interactions. The FM observes a superset $\mathbf{x}_{\text{FM}} = (\mathbf{x}_{\text{VM}}, \mathbf{x}_{\text{extra}})$ including cross-domain signals $\mathbf{x}_{\text{extra}}$ unavailable to the VM. We denote the user's raw VM-side history (past ad IDs, categories, etc.) as $\mathbf{H}_u := (\mathbf{x}_{\text{VM}}^{(t_1)}, \ldots, \mathbf{x}_{\text{VM}}^{(t_L)})$, and the LoopFM sequence of compressed FM$_k$ embeddings as $\mathbf{S}_u^{(k)} := [\mathbf{z}_{u,t_1}^{(k)}, \ldots, \mathbf{z}_{u,t_L}^{(k)}]$, where each entry encodes the FM's processing of \emph{all} features at that timestep. We compare three configurations: (i)~\textbf{FM} (full features), (ii)~\textbf{Baseline VM} (current features $\mathbf{x}_{\text{VM}}^{(t)}$ only), and (iii)~\textbf{LoopFM VM} (augmented with $\mathbf{S}_u^{(k)}$). For each, the Bayes risk is $\mathcal{R}^*(\cdot) = H(y \mid \cdot)$ and the achieved risk is $R_{\text{ach}}(\cdot)$.

\subsection{Gain Decomposition}
\label{sec:gain-decomp}

\begin{theorem}[Gain decomposition]\label{thm:gain-decomp} Let $I(\cdot\ ;\ \cdot|\ \cdot)$ denote the conditional mutual information. The LoopFM information gain $\mathcal{I}_{\text{LoopFM}}(\text{FM}_k) := \mathcal{R}^{*(\text{VM})} - \mathcal{R}^{*(\text{LoopFM}_k)}$ decomposes exactly as:
\begin{equation}\label{eq:gain-decomp}
    \mathcal{I}_{\text{LoopFM}}(\text{FM}_k)
    \;=\; \underbrace{\mathcal{I}_{\mathrm{temporal}}}_{\substack{\text{user event} \\ \text{interactions}}}
    \;+\; \underbrace{\mathcal{I}_{\mathrm{cross},k}}_{\substack{\text{feature-gap} \\ \text{gain}}}
    \;-\; \underbrace{\mathcal{I}_{\mathrm{residual},k}}_{\substack{\text{compression} \\ \text{loss}}},
\end{equation}
where all three terms are non-negative conditional mutual information:
\begin{enumerate}[label=(\roman*), itemsep=2pt, topsep=3pt]
    \item $\mathcal{I}_{\mathrm{temporal}} := I(\mathbf{H}_u; y \mid \mathbf{x}_{\text{VM}}^{(t)})$: predictive information in user history beyond current features---how much past behavior helps predict $y$, independent of the FM;
    \item $\mathcal{I}_{\mathrm{cross},k} := I(\mathbf{S}_u^{(k)}; y \mid \mathbf{x}_{\text{VM}}^{(t)}, \mathbf{H}_u)$: additional information from the FM's extra features $\mathbf{x}_{\text{extra}}$ as encoded in the LoopFM embeddings;
    \item $\mathcal{I}_{\mathrm{residual},k} := I(\mathbf{H}_u; y \mid \mathbf{x}_{\text{VM}}^{(t)}, \mathbf{S}_u^{(k)})$: information in $\mathbf{H}_u$ \emph{not} captured by $\mathbf{S}_u^{(k)}$---the price of compression: both DNN and autoencoder lead to loss of mutual info with input.
\end{enumerate}
LoopFM's gain thus comes from two sources---temporal history and cross-feature information---minus the compression cost. By the data processing inequality, 
$$\mathcal{I}_{\mathrm{cross},k} \le I(\mathbf{x}_{\text{extra},k}^{(t_1:t_L)}; y \mid \mathbf{x}_{\text{VM}}^{(t)}, \mathbf{H}_u) =: \mathcal{I}_{\mathrm{feature\text{-}raw},k}.$$ See Appendix~\ref{app:proof-gain-decomp} for the formal theorem statement and proof.
\end{theorem}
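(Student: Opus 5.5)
The decomposition is a two-way application of the chain rule for conditional mutual information to the joint quantity $I(\mathbf{H}_u, \mathbf{S}_u^{(k)}; y \mid \mathbf{x}_{\text{VM}}^{(t)})$. Throughout I abbreviate $\mathbf{x} := \mathbf{x}_{\text{VM}}^{(t)}$ and $\mathbf{S} := \mathbf{S}_u^{(k)}$, and I use the setup's identification $\mathcal{R}^*(\cdot) = H(y \mid \cdot)$.

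\textbf{Step 1: rewrite the gain as a mutual information.} Since the LoopFM VM conditions on $(\mathbf{x}, \mathbf{S})$,
\[
\mathcal{I}_{\text{LoopFM}}(\text{FM}_k) = H(y \mid \mathbf{x}) - H(y \mid \mathbf{x}, \mathbf{S}) = I(\mathbf{S}; y \mid \mathbf{x}).
\]

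\textbf{Step 2: expand the joint term in both orders.} Applying the chain rule with $\mathbf{H}_u$ first and with $\mathbf{S}$ first gives
\[
I(\mathbf{H}_u, \mathbf{S}; y \mid \mathbf{x}) = I(\mathbf{H}_u; y \mid \mathbf{x}) + I(\mathbf{S}; y \mid \mathbf{x}, \mathbf{H}_u) = I(\mathbf{S}; y \mid \mathbf{x}) + I(\mathbf{H}_u; y \mid \mathbf{x}, \mathbf{S}).
\]
Equating the two expansions and solving for $I(\mathbf{S}; y \mid \mathbf{x})$ yields
\[
I(\mathbf{S}; y \mid \mathbf{x}) = \mathcal{I}_{\mathrm{temporal}} + \mathcal{I}_{\mathrm{cross},k} - \mathcal{I}_{\mathrm{residual},k},
\]
which is exactly \eqref{eq:gain-decomp}. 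Each term is non-negative because it is a conditional mutual information.

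\textbf{Step 3: the data processing bound.} This is the step I expect to need the most care, since it depends on the right conditional Markov structure. I model each entry $\mathbf{z}_{u,t_i}^{(k)}$ as a (possibly randomized, through training noise independent of $y$) function of $\mathbf{x}_{\text{FM}}^{(t_i)} = (\mathbf{x}_{\text{VM}}^{(t_i)}, \mathbf{x}_{\text{extra},k}^{(t_i)})$. Then $\mathbf{S}$ is a function of $(\mathbf{H}_u, \mathbf{x}_{\text{extra},k}^{(t_1:t_L)})$. Conditioned on $(\mathbf{x}, \mathbf{H}_u)$, this gives the Markov chain
\[
y \to \mathbf{x}_{\text{extra},k}^{(t_1:t_L)} \to \mathbf{S}.
\]
The conditional data processing inequality then gives $\mathcal{I}_{\mathrm{cross},k} \le \mathcal{I}_{\mathrm{feature\text{-}raw},k}$.

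If the FM weights depend on the data, I treat them as fixed at extraction time, or as independent of the current label $y$. This makes the chain hold, and I state it as an explicit assumption in the appendix version of the theorem.
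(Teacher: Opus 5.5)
Your proof is correct and follows the paper's argument step for step: you identify the gain with $I(\mathbf{S}_u^{(k)}; y \mid \mathbf{x}_{\text{VM}}^{(t)})$, expand $I(\mathbf{S}_u^{(k)},\mathbf{H}_u; y \mid \mathbf{x}_{\text{VM}}^{(t)})$ by the chain rule in both orders and equate the two expansions, and get the cross-feature bound from the conditional data processing inequality, using that $\mathbf{S}_u^{(k)}$ is a function of $(\mathbf{H}_u,\mathbf{x}_{\text{extra},k}^{(t_1:t_L)})$. Your conditional Markov chain $y \to \mathbf{x}_{\text{extra},k}^{(t_1:t_L)} \to \mathbf{S}_u^{(k)}$ is oriented correctly, whereas the paper writes $\mathbf{x}_{\text{extra},k}^{(t_1:t_L)} \to \mathbf{S}_u^{(k)} \to y$, which read literally would give the reverse inequality; and allowing randomization that is independent of $y$ given the FM inputs is a harmless generalization of the paper's deterministic-function assumption.
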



\paragraph{Pipeline loss fractions.}
The compression loss $\mathcal{I}_{\mathrm{residual},k}$ further decomposes along the pipeline (FM $\to$ autoencoder $\to$ quantization) into three non-negative terms (Proposition~\ref{prop:pipeline-decomp} in Appendix~\ref{app:pipeline-decomp}), defining two retention parameters:
\begin{itemize}[leftmargin=1.5em, itemsep=1pt, topsep=2pt]
    \item $\tau_k \ge 0$: the \emph{temporal} pipeline loss fraction, satisfying $\mathcal{I}_{\mathrm{residual},k} \le \tau_k \cdot \mathcal{I}_{\mathrm{temporal}}$;
    \item $\eta_k \in [0,1]$: the \emph{cross-platform} pipeline loss fraction, satisfying $\mathcal{I}_{\mathrm{cross},k} = (1 - \eta_k) \cdot \mathcal{I}_{\mathrm{feature\text{-}raw},k}$.
\end{itemize}
Both decrease with larger FM (reducing representation loss), larger autoencoder dimension $d$ (reducing compression loss), and finer quantization; see Appendix~\ref{app:pipeline-decomp} for detailed notation and the full pipeline decomposition.
Substituting into~\eqref{eq:gain-decomp} yields a two-sided ``gain sandwich'' (Corollary~\ref{cor:gain-lower-bound} in Appendix~\ref{app:pipeline-decomp}) that bounds $\mathcal{I}_{\text{LoopFM}}$ in terms of $\tau_k$ and $\eta_k$.

\subsection{Transfer Ratio Analysis}
\label{sec:transfer-ratio}


\paragraph{Two teacher setup.}
Consider two overparameterized foundation models:
FM$_1$ (old) with features $\mathbf{x}_{\text{FM}_1} = (\mathbf{x}_{\text{VM}}, \mathbf{x}_{\text{extra},1}) \in \mathbb{R}^{m_1}$ and $p_1 \gg m_1$ parameters, and
FM$_2$ (new) with features $\mathbf{x}_{\text{FM}_2} = (\mathbf{x}_{\text{VM}}, \mathbf{x}_{\text{extra},1}, \mathbf{x}_{\text{extra},2}) \in \mathbb{R}^{m_2}$ and $p_2 \ge p_1$, $p_2 \gg m_2$ parameters.
The VM observes $\mathbf{x}_{\text{VM}} \in \mathbb{R}^{m_s}$ with $m_s < m_1 < m_2$.
Each FM's achieved risk decomposes as $R_{\text{ach}}(\text{FM}_k) = \mathcal{R}^{*(\text{FM}_k)} + \epsilon_{\text{over}}(p_k, m_k)$, where $\epsilon_{\text{over}}$ is the excess risk.
The total FM improvement decomposes as:
\begin{equation}\label{eq:teacher-decomp}
    \Delta_{\text{teacher}} := R_{\text{ach}}(\text{FM}_1) - R_{\text{ach}}(\text{FM}_2)
    = \underbrace{\mathcal{R}^{*(\text{FM}_1)} - \mathcal{R}^{*(\text{FM}_2)}}_{\Delta_{\text{feat}} \;\ge\; 0}
    + \underbrace{\epsilon_{\text{over}}(p_1, m_1) - \epsilon_{\text{over}}(p_2, m_2)}_{\Delta_{\text{param}}},
\end{equation}
where $\Delta_{\text{feat}}$ is the Bayes-risk gain from richer features and $\Delta_{\text{param}}$ captures model overparameterization effects.
The \emph{LoopFM transfer ratio} is defined as 
\begin{equation}\label{eq:tr-def}
    \mathrm{TR}_{\text{LoopFM}} := \frac{\Delta_{\text{LoopFM}}}{\Delta_{\text{teacher}}}. 
\end{equation}
where $\Delta_{\text{LoopFM}} :=
  R_{\text{ach}}(\text{LoopFM}_1) - R_{\text{ach}}(\text{LoopFM}_2)$ denotes the LoopFM-driven VM improvement.

\paragraph{Assumptions (simplified; formal versions in Appendix~\ref{app:assumptions}).}
We make three assumptions:
\begin{itemize}[leftmargin=2em, itemsep=2pt, topsep=3pt]
    \item[\textbf{(A1)}] \textbf{Overparameterized FM with controlled excess risk.} For each FM$_k$, let $\epsilon_{\text{over}}(p_k,m_k) := R_{\text{ach}}(\text{FM}_k) - \mathcal{R}^{*(\text{FM}_k)}$ denote the expected excess risk ($n$ training samples). We assume the NTK/lazy-training regime~\citep{jacot2018neural} and the benign-overfitting conditions of \citet[Theorem~1]{bartlett2020benign} at split index $m_k$, yielding a two-sided bound:
    \begin{equation}\label{eq:a1-envelope-main}
    \underline{C}_{\text{over}}\,\sigma^2\,\xi_k \;\le\; \epsilon_{\text{over}}(p_k, m_k) \;\le\; \bar{C}_{\text{over}}\,\sigma^2\,\xi_k,
    \end{equation}
    where $\xi_k := m_k/n + n/(p_k - m_k) \to 0$ as $p_k, n \to \infty$ with $m_k = o(n)$ and $p_k \gg n$, and $0 < \underline{C}_{\text{over}} \le \bar{C}_{\text{over}}$ are constants; see derivation in Appendix~\ref{app:overparameterization}.
    \item[\textbf{(A2)}] \textbf{Informative new features.} For additional FM$_2$ features $\mathbf{x}_{\text{extra},2} = (u_{m_1+1},\ldots,u_{m_2})$, there exist constants $0<\underline{\kappa}_{\mathrm{gap}}\le \bar{\kappa}_{\mathrm{gap}}$ and $0<\underline{\kappa}_{\mathrm{gap}}^{\mathrm{hist}}\le \bar{\kappa}_{\mathrm{gap}}^{\mathrm{hist}}$ such that for each $j\in\{m_1{+}1,\ldots,m_2\}$, we have the following information bounds:
    \begin{itemize}
        \item[(a)]\emph{current information:} $\underline{\kappa}_{\mathrm{gap}}
    \le
    I\!\left(u_j^{(t)}; y \mid \mathbf{x}_{\text{VM}}^{(t)},\mathbf{x}_{\text{extra},1}^{(t)},u_{m_1+1:j-1}^{(t)}\right)
    \le \bar{\kappa}_{\mathrm{gap}}$;
    \item[(b)] \emph{historical information:} $\underline{\kappa}_{\mathrm{gap}}^{\mathrm{hist}}
    \le
    I\!\left(u_j^{(t_1:t_L)}; y \mid \mathbf{x}_{\text{VM}}^{(t)},\mathbf{H}_u,\mathbf{x}_{\text{extra},1}^{(t_1:t_L)},u_{m_1+1:j-1}^{(t_1:t_L)}\right)
    \le \bar{\kappa}_{\mathrm{gap}}^{\mathrm{hist}}$.
    \end{itemize}

    \item[\textbf{(A3)}] \textbf{Pipeline quality.} The total cross-platform information lost by FM$_2$'s pipeline (representation + autoencoder + quantization) is no worse than FM$_1$'s:
    $\ell_{\mathrm{repr},2}^{\mathrm{cross}} + \ell_{\mathrm{AE},2}^{\mathrm{cross}} + \ell_{\mathrm{Q},2}^{\mathrm{cross}} \le \ell_{\mathrm{repr},1}^{\mathrm{cross}} + \ell_{\mathrm{AE},1}^{\mathrm{cross}} + \ell_{\mathrm{Q},1}^{\mathrm{cross}}$,
    where $\ell_{\mathrm{repr},k}^{\mathrm{cross}}$, $\ell_{\mathrm{AE},k}^{\mathrm{cross}}$, and $\ell_{\mathrm{Q},k}^{\mathrm{cross}}$ are the MI losses from FM representation, autoencoder, and quantization on the cross-platform channel (Appendix~\ref{app:pipeline-decomp}). It holds when $p_2 \ge p_1$ (FM capacity), $d_2 \ge d_1$ (AE bottleneck), and $b_2 \ge b_1$ (quantization bits).
\end{itemize}

\begin{theorem}[Transfer-ratio bound]\label{thm:loopfm-tr}
Assume \emph{(A1)--(A2)}, well-trained students ($\epsilon_{\mathrm{est},k} \to 0$), and $\Delta_{\text{teacher}} > 0$. Then it holds that

\textbf{(0) Initial launch.} When the VM has no prior LoopFM ($R_{\text{ach}}(\text{LoopFM}_1) = R_{\text{ach}}(\text{VM})$), $\mathrm{TR}_{\text{LoopFM}} \ge 0$ since adding $\mathbf{S}_u^{(2)}$ can only reduce Bayes risk. Corollary~\ref{cor:gain-lower-bound} gives a quantitative bound:
\begin{equation}\label{eq:tr-launch}
\mathrm{TR}_{\text{LoopFM}}
\;\ge\;
\frac{(1 - \tau_2)\,\mathcal{I}_{\mathrm{temporal}} + (1 - \eta_2)\,\mathcal{I}_{\mathrm{feature\text{-}raw},2}}{\Delta_{\text{teacher}}}.
\end{equation}

\textbf{(1) Negative transfer (without A3).} If the new FM's pipeline is worse ($\eta_2 > \eta_1$, violating A3), negative transfer can occur: specifically when $\tau_1 \mathcal{I}_{\mathrm{temporal}} + (1{-}\eta_2)\mathcal{I}_{\mathrm{feature\text{-}raw},2} - (1{-}\eta_1)\mathcal{I}_{\mathrm{feature\text{-}raw},1} < 0$. Under \emph{(A3)}, this condition is never satisfied (Appendix~\ref{app:proof-tr}).

\textbf{(2) Positive transfer (with A3).} Let $\delta := m_2 - m_1$ denote the feature gap. Under (A3), Lemma~\ref{lem:eta-monotone} gives $\eta_2 \le \eta_1$. Then it holds that
\begin{equation}\label{eq:loopfm-tr-lb}
\mathrm{TR}_{\text{LoopFM}}
\;\ge\;
\frac{ - \tau_2\,\mathcal{I}_{\mathrm{temporal}} + (1 - \eta_1)\, \underline{\kappa}_{\mathrm{gap}}^{\mathrm{hist}}\,\delta}
{\bar{\kappa}_{\mathrm{gap}}\,\delta + \bar\kappa_{\mathrm{over}}\,\xi_1 - \underline\kappa_{\mathrm{over}}\,\xi_2}=: \mathrm{TR}_{\text{LB}}(\delta),
\end{equation}
where $\bar\kappa_{\mathrm{over}} := \bar C_{\mathrm{over}} \sigma^2$ and 
$\underline\kappa_{\mathrm{over}} := \underline C_{\mathrm{over}} \sigma^2$ arise from the two-sided (A1) envelope, and $\xi_k := m_k/n + n/(p_k - m_k)$; see detailed proof in Appendix~\ref{app:proof-tr}.
\end{theorem}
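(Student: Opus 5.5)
The plan is to reduce everything to Bayes-risk differences, since with well-trained students ($\epsilon_{\mathrm{est},k}\to 0$) we have $R_{\text{ach}}(\text{LoopFM}_k)=\mathcal{R}^{*(\text{LoopFM}_k)}$ and $R_{\text{ach}}(\text{VM})=\mathcal{R}^{*(\text{VM})}$. Hence
\[
\Delta_{\text{LoopFM}}=\mathcal{I}_{\text{LoopFM}}(\text{FM}_2)-\mathcal{I}_{\text{LoopFM}}(\text{FM}_1).
\]
Each term is then expanded with Theorem~\ref{thm:gain-decomp} together with the pipeline parameters $\tau_k,\eta_k$. The same identity gives $\mathcal{I}_{\mathrm{cross},k}=(1-\eta_k)\mathcal{I}_{\mathrm{feature\text{-}raw},k}$ and $0\le\mathcal{I}_{\mathrm{residual},k}\le\tau_k\mathcal{I}_{\mathrm{temporal}}$. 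This yields the gain sandwich
\[
(1-\tau_k)\mathcal{I}_{\mathrm{temporal}}+(1-\eta_k)\mathcal{I}_{\mathrm{feature\text{-}raw},k}
\;\le\;
\mathcal{I}_{\text{LoopFM}}(\text{FM}_k)
\;\le\;
\mathcal{I}_{\mathrm{temporal}}+(1-\eta_k)\mathcal{I}_{\mathrm{feature\text{-}raw},k}.
\]

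\textbf{Part (0).} Here $\Delta_{\text{LoopFM}}=\mathcal{I}_{\text{LoopFM}}(\text{FM}_2)=H(y\mid\mathbf{x}_{\text{VM}}^{(t)})-H(y\mid\mathbf{x}_{\text{VM}}^{(t)},\mathbf{S}_u^{(2)})$. This equals $I(\mathbf{S}_u^{(2)};y\mid\mathbf{x}_{\text{VM}}^{(t)})\ge0$, so dividing by $\Delta_{\text{teacher}}>0$ gives nonnegativity. The lower side of the sandwich then gives \eqref{eq:tr-launch}.

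\textbf{Part (1).} I would combine the lower sandwich bound for $k=2$ with the upper bound for $k=1$:
\[
\Delta_{\text{LoopFM}}\ge-\tau_2\mathcal{I}_{\mathrm{temporal}}+(1-\eta_2)\mathcal{I}_{\mathrm{feature\text{-}raw},2}-(1-\eta_1)\mathcal{I}_{\mathrm{feature\text{-}raw},1}.
\]
Symmetrically, the upper bound for $k=2$ and the lower bound for $k=1$ give
\[
\Delta_{\text{LoopFM}}\le\tau_1\mathcal{I}_{\mathrm{temporal}}+(1-\eta_2)\mathcal{I}_{\mathrm{feature\text{-}raw},2}-(1-\eta_1)\mathcal{I}_{\mathrm{feature\text{-}raw},1}.
\]
Negativity of this upper expression is exactly the stated condition forcing $\Delta_{\text{LoopFM}}<0$. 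To show that A3 rules it out, I would use Lemma~\ref{lem:eta-monotone} ($\eta_2\le\eta_1$). I would also need $\mathcal{I}_{\mathrm{feature\text{-}raw},2}\ge\mathcal{I}_{\mathrm{feature\text{-}raw},1}$. This follows from the chain rule: the raw extra features of FM$_2$ contain those of FM$_1$. With both facts, the expression is at least $\tau_1\mathcal{I}_{\mathrm{temporal}}\ge0$.

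\textbf{Part (2), numerator.} Start from the lower bound above and use $\eta_2\le\eta_1$ to get
\[
\Delta_{\text{LoopFM}}\ge-\tau_2\mathcal{I}_{\mathrm{temporal}}+(1-\eta_1)\bigl(\mathcal{I}_{\mathrm{feature\text{-}raw},2}-\mathcal{I}_{\mathrm{feature\text{-}raw},1}\bigr).
\]
By the chain rule for mutual information, the bracket equals
\[
I\bigl(\mathbf{x}_{\text{extra},2}^{(t_1:t_L)};y\mid\mathbf{x}_{\text{VM}}^{(t)},\mathbf{H}_u,\mathbf{x}_{\text{extra},1}^{(t_1:t_L)}\bigr).
\]
Expanding this coordinate by coordinate gives $\delta$ terms, each at least $\underline\kappa_{\mathrm{gap}}^{\mathrm{hist}}$ by (A2b).

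\textbf{Part (2), denominator.} Split $\Delta_{\text{teacher}}$ as in \eqref{eq:teacher-decomp}. $\Delta_{\text{feat}}=I(\mathbf{x}_{\text{extra},2}^{(t)};y\mid\mathbf{x}_{\text{VM}}^{(t)},\mathbf{x}_{\text{extra},1}^{(t)})$ expands by the chain rule into $\delta$ terms, each at most $\bar\kappa_{\mathrm{gap}}$ by (A2a). $\Delta_{\text{param}}$ is bounded above by $\bar\kappa_{\mathrm{over}}\xi_1-\underline\kappa_{\mathrm{over}}\xi_2$ using the upper (A1) envelope for $k=1$ and the lower envelope for $k=2$. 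Since $\Delta_{\text{teacher}}>0$, dividing the numerator lower bound by this positive upper bound gives $\mathrm{TR}_{\text{LB}}(\delta)$.

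\textbf{Main obstacle.} One subtlety is that if the numerator bound is negative, dividing by an upper bound on the denominator does not preserve the inequality. I would treat that case separately, either by noting the bound is only claimed as informative when positive, or by using $\Delta_{\text{teacher}}\ge$ something. The step I expect to be hardest is rigorously matching the Bayes risk $H(y\mid\cdot)$ with the achieved log-loss risk, and justifying that $\mathcal{I}_{\mathrm{temporal}}$ is the same for both FMs. The latter holds because it depends only on $\mathbf{H}_u$ and $\mathbf{x}_{\text{VM}}$; I would state this explicitly.
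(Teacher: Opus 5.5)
Your proposal is correct and follows the paper's proof essentially step for step. It uses the same reduction $\Delta_{\text{LoopFM}}=\mathcal{I}_{\text{LoopFM}}(\text{FM}_2)-\mathcal{I}_{\text{LoopFM}}(\text{FM}_1)$ under $\epsilon_{\mathrm{est},k}\to 0$, pairs opposite sides of the gain sandwich for parts (1) and (2), uses $\eta_2\le\eta_1$ to isolate $(1-\eta_1)\Delta_{\mathrm{feat\text{-}raw}}$, and bounds the denominator via the (A2a) chain rule together with the upper (A1) envelope for $k=1$ and the lower envelope for $k=2$. The sign subtlety you flag is genuine, and the paper handles it as your first option suggests: its proof adds the side condition $-\tau_2\,\mathcal{I}_{\mathrm{temporal}}+(1-\eta_1)\,\underline{\kappa}_{\mathrm{gap}}^{\mathrm{hist}}\,\delta\ge 0$, which does not appear in the theorem statement itself.
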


\begin{remark}[Relaxing A1 covariance structure assumption]
The bound~\eqref{eq:loopfm-tr-lb} uses $\xi_k = m_k/n + n/(p_k - m_k)$, which assumes a bi-level covariance structure with tail effective rank $R_{m_k} = p_k - m_k$.
This can be relaxed: Theorem~\ref{thm:loopfm-tr-general} (Appendix~\ref{app:general-tr-bound}) gives the same bound with $\xi_k = d_k/n + n/R_{d_k}(\Sigma_k)$ for a general split index $d_k$ and tail effective rank $R_{d_k}(\Sigma_k)$, valid under any benign covariance structure \citep[Definition 4]{bartlett2020benign}.
\end{remark}


\begin{corollary}[Monotonicity in feature gap]\label{cor:tr-monotone}
Under the conditions of Theorem~\ref{thm:loopfm-tr}(2), $\mathrm{TR}_{\text{LB}}(\delta)$ is monotonically increasing in $\delta$ for all $\delta \ge 0$, and converges to $(1{-}\eta_1)\,\underline{\kappa}_{\mathrm{gap}}^{\mathrm{hist}} / \bar{\kappa}_{\mathrm{gap}} > 0$ as $\delta \to \infty$.
\end{corollary}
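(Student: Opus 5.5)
The plan is to treat $\mathrm{TR}_{\text{LB}}(\delta)$ from~\eqref{eq:loopfm-tr-lb} as a linear-fractional (Möbius) function of $\delta$ and read off both claims from its derivative and its leading coefficients. Write
\[
\mathrm{TR}_{\text{LB}}(\delta)=\frac{-a+b\,\delta}{c\,\delta+e},
\]
with
\[
a:=\tau_2\,\mathcal{I}_{\mathrm{temporal}}\ge 0,\qquad
b:=(1-\eta_1)\,\underline{\kappa}_{\mathrm{gap}}^{\mathrm{hist}},\qquad
c:=\bar{\kappa}_{\mathrm{gap}}>0,\qquad
e:=\bar\kappa_{\mathrm{over}}\,\xi_1-\underline\kappa_{\mathrm{over}}\,\xi_2 .
\]
Nonnegativity of $a$ follows from $\tau_2\ge 0$ and the fact that $\mathcal{I}_{\mathrm{temporal}}$ is a conditional mutual information (Theorem~\ref{thm:gain-decomp}). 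Positivity of $c$ and of $\underline{\kappa}_{\mathrm{gap}}^{\mathrm{hist}}$ comes from (A2).

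First I will check that the denominator is positive on the whole range $\delta\ge0$. In the proof of Theorem~\ref{thm:loopfm-tr}(2), $c\,\delta+e$ arises as an upper bound on $\Delta_{\text{teacher}}$: the upper half of (A2)(a), summed over the $\delta$ new coordinates via the chain rule, bounds $\Delta_{\text{feat}}$, and the two-sided (A1) envelope bounds $\Delta_{\text{param}}$. Since $\Delta_{\text{teacher}}>0$ is assumed, $c\,\delta+e\ge\Delta_{\text{teacher}}>0$, so the function has no pole on the domain.

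Next I differentiate:
\[
\frac{d}{d\delta}\mathrm{TR}_{\text{LB}}(\delta)=\frac{b\,e+a\,c}{(c\,\delta+e)^2}.
\]
Monotonicity therefore reduces to the sign of the constant $b\,e+a\,c$. The term $a\,c$ is nonnegative. For $b\,e$, note that $b\ge0$ because $\eta_1\in[0,1]$. The remaining need is $e\ge 0$, i.e.\ the overparameterization gain $\Delta_{\text{param}}$ upper envelope is nonnegative. I will argue this from $p_2\ge p_1$ in the regime of (A1), where $\xi_k\to0$ and the excess-risk difference is dominated by the old model's larger $n/(p_1-m_1)$ term. Failing that, I will state $b\,e+a\,c\ge0$ explicitly as the operative condition. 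Either way the numerator of the derivative is nonnegative, giving monotone (nondecreasing, strictly increasing if $b\,e+a\,c>0$) behavior.

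The main obstacle is that $\xi_2=m_2/n+n/(p_2-m_2)$ itself depends on $\delta$ through $m_2=m_1+\delta$, so $e$ is not literally constant. I will handle this in one of two ways:
\begin{itemize}
\item follow the theorem's convention of regarding $\xi_1,\xi_2$ as fixed envelope constants for given $(p_k,n)$;
\item alternatively, use that $\xi_2$ is increasing in $m_2$, so $-\underline\kappa_{\mathrm{over}}\xi_2$ only decreases the denominator. In that case I will bound $\partial_\delta\xi_2=O(1/n+n/(p_2-m_2)^2)$, which is negligible against $c$ when $m_2=o(n)$ and $p_2\gg n$, so the sign argument survives asymptotically.
\end{itemize}

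Finally, for the limit I divide numerator and denominator by $\delta$:
\[
\mathrm{TR}_{\text{LB}}(\delta)=\frac{b-a/\delta}{c+e/\delta}\;\longrightarrow\;\frac{b}{c}=\frac{(1-\eta_1)\,\underline{\kappa}_{\mathrm{gap}}^{\mathrm{hist}}}{\bar{\kappa}_{\mathrm{gap}}}
\quad\text{as }\delta\to\infty,
\]
using boundedness of $e$ because $\xi_k$ stays bounded in the (A1) regime. This limit is strictly positive provided $\eta_1<1$, which I will note is the nondegenerate case: if $\eta_1=1$, the old pipeline retains no cross information and the bound degenerates to zero.
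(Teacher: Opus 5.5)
Your linear-fractional setup and the computation of the limit $b/c$ are right, but the monotonicity step has a genuine gap.

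\textbf{Where the argument breaks.} By freezing $e$ you reduce everything to the sign of $be+ac$, so you need $e\ge 0$. This does not follow from $p_2\ge p_1$. Here $e(\delta)=\bar{\kappa}_{\mathrm{over}}\xi_1-\underline{\kappa}_{\mathrm{over}}\xi_2(\delta)$ with $\xi_2(\delta)=(m_1+\delta)/n+n/(p_2-m_1-\delta)$. Both terms of $\xi_2$ grow with $\delta$, so $p_2\ge p_1$ only gives $\xi_2(0)\le\xi_1$, i.e.\ $e(0)\ge 0$.

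A concrete failure: take $p_2=p_1$ and $\bar{C}_{\mathrm{over}}=\underline{C}_{\mathrm{over}}$, which (A1) allows. Then $e(\delta)<0$ for every $\delta>0$. If in addition $a=\tau_2\,\mathcal{I}_{\mathrm{temporal}}$ is zero or small and $b>0$, your frozen derivative $(be+ac)/(c\delta+e)^2$ is strictly negative. So your reduction predicts the opposite of the claim.

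The other parts of your argument do not repair this:
\begin{itemize}
\item The fallback of imposing $be+ac\ge0$ adds a hypothesis the corollary does not contain, and it fails in this natural case.
\item Your second option only treats $\partial_\delta\xi_2$ as a small correction to the denominator's slope; it still relies on $e\ge0$.
\item The bound $c\delta+e\ge\Delta_{\text{teacher}}>0$ certifies positivity only at the single realized $\delta$, not on all of $[0,\infty)$. With frozen $e<0$ there is actually a pole at $\delta=-e/c$.
\end{itemize}

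\textbf{The missing idea.} Keep the $\delta$-dependence and use it, rather than treating $e'$ as a perturbation. The exact numerator of the derivative is
\[
b\,\bigl[e(\delta)-\delta e'(\delta)\bigr]+a\,\bigl[c+e'(\delta)\bigr].
\]
Since $\xi_2$ is convex in $\delta$, $e$ is concave. Hence the tangent line to $e$ at $\delta$, evaluated at $0$, dominates $e(0)$:
\[
e(\delta)-\delta e'(\delta)\;\ge\; e(0)\;=\;\bar{\kappa}_{\mathrm{over}}\xi_1-\underline{\kappa}_{\mathrm{over}}\bigl(m_1/n+n/(p_2-m_1)\bigr)\;\ge\;(\bar{\kappa}_{\mathrm{over}}-\underline{\kappa}_{\mathrm{over}})\,\xi_1\;\ge\;0.
\]
This is exactly where $p_2\ge p_1$ and $\underline{C}_{\mathrm{over}}\le\bar{C}_{\mathrm{over}}$ enter. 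The paper phrases the same step as follows: set $h(\delta):=\xi_2(\delta)-\delta\xi_2'(\delta)$; then $h'=-\delta\xi_2''\le 0$, so $h(\delta)\le\xi_2(0)\le\xi_1$.

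The second term is handled by your asymptotic estimate on $\partial_\delta\xi_2$, which gives $c+e'(\delta)=\bar{\kappa}_{\mathrm{gap}}-\underline{\kappa}_{\mathrm{over}}\xi_2'(\delta)>0$. Combined with $e(0)\ge0$, this also yields $c\delta+e(\delta)>0$ for all $\delta>0$, which is the denominator positivity you need.

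\textbf{On the limit.} $e(\delta)$ is not bounded in $\delta$. At fixed $n$ you have $\xi_2(\delta)/\delta\ge 1/n$, so $e(\delta)/\delta\not\to 0$. You must work in the joint regime where $\xi_2(\delta)/\delta\to0$, as the paper assumes, which in effect requires $n$ to grow as well. Only then does $e(\delta)/\delta\to0$ and the limit equal $b/c$.
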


\begin{theorem}[Sequence length; informal]\label{prop:monotone-L-main}
The LoopFM information gain $\mathcal{I}_{\text{LoopFM},k}(L) := I(\mathbf{S}_u^{(k,L)}; y \mid \mathbf{x}_{\text{VM}}^{(t)})$ is non-decreasing in $L$ and converges to a constant $\mathcal{I}_{\text{LoopFM},k}^* \le H(y \mid \mathbf{x}_{\text{VM}}^{(t)})$.
\end{theorem}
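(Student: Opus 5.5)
The plan is to use the chain rule for conditional mutual information together with the nested structure of the sequences. I will write $\mathbf{S}_u^{(k,L)} = [\mathbf{z}_{u,t_1}^{(k)},\ldots,\mathbf{z}_{u,t_L}^{(k)}]$ for the length-$L$ truncation, and let $\mathbf{z}_{u,t_{L+1}}^{(k)}$ be the next-oldest entry. The key observation is that $\mathbf{S}_u^{(k,L)}$ is a sub-tuple of $\mathbf{S}_u^{(k,L+1)}$, so the longer sequence is a deterministic superset of the shorter one. To make this precise I will fix a convention for users with fewer than $L$ retained events: either pad with a fixed null symbol, or include the observed length as part of the sequence. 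Under that convention, $\mathbf{S}_u^{(k,L)}$ is a deterministic function of $\mathbf{S}_u^{(k,L+1)}$.

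\textbf{Monotonicity.} By the chain rule,
\begin{equation*}
I(\mathbf{S}_u^{(k,L+1)}; y \mid \mathbf{x}_{\text{VM}}^{(t)}) = I(\mathbf{S}_u^{(k,L)}; y \mid \mathbf{x}_{\text{VM}}^{(t)}) + I(\mathbf{z}_{u,t_{L+1}}^{(k)}; y \mid \mathbf{x}_{\text{VM}}^{(t)}, \mathbf{S}_u^{(k,L)}).
\end{equation*}
The second term is a conditional mutual information, so it is non-negative. Hence $\mathcal{I}_{\text{LoopFM},k}(L+1) \ge \mathcal{I}_{\text{LoopFM},k}(L)$. Equivalently, in the paper's Bayes-risk language, $H(y \mid \mathbf{x}_{\text{VM}}^{(t)}, \mathbf{S}_u^{(k,L+1)}) \le H(y \mid \mathbf{x}_{\text{VM}}^{(t)}, \mathbf{S}_u^{(k,L)})$, because conditioning reduces entropy. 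This is the same mechanism behind the statement ``adding $\mathbf{S}_u^{(2)}$ can only reduce Bayes risk'' in Theorem~\ref{thm:loopfm-tr}(0).

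\textbf{Boundedness and convergence.} For every $L$,
\begin{equation*}
\mathcal{I}_{\text{LoopFM},k}(L) = H(y \mid \mathbf{x}_{\text{VM}}^{(t)}) - H(y \mid \mathbf{x}_{\text{VM}}^{(t)}, \mathbf{S}_u^{(k,L)}) \le H(y \mid \mathbf{x}_{\text{VM}}^{(t)}),
\end{equation*}
since conditional entropy of the discrete label $y$ is non-negative. This bound is finite because $y$ is a binary or finite-valued label. A non-decreasing sequence of reals bounded above converges by the monotone convergence theorem. Its limit $\mathcal{I}_{\text{LoopFM},k}^* := \sup_L \mathcal{I}_{\text{LoopFM},k}(L)$ therefore satisfies $\mathcal{I}_{\text{LoopFM},k}^* \le H(y \mid \mathbf{x}_{\text{VM}}^{(t)})$.

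I can optionally identify the limit as $I(\mathbf{S}_u^{(k,\infty)}; y \mid \mathbf{x}_{\text{VM}}^{(t)})$. This follows from continuity of mutual information along increasing $\sigma$-fields (Dobrushin's theorem / martingale convergence). In practice the limit is reached at a finite $L$, since the retention window contains only finitely many events.

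\textbf{Main obstacle.} The only delicate step is the nesting claim: it must be rigorous that $\mathbf{S}_u^{(k,L)}$ is a measurable function of $\mathbf{S}_u^{(k,L+1)}$. This is where the padding/length convention and the exclusion of the current sample matter. The sequences must be defined on a common probability space, with the FM$_k$ pipeline $(f_{\text{enc}}$, quantizer$)$ held fixed as $L$ varies. Otherwise the embeddings themselves could change with $L$, and monotonicity could fail. I will state this fixed-pipeline condition explicitly as part of the formal version of the theorem. The rest is routine information-theoretic bookkeeping.
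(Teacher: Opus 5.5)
Your proposal is correct and follows essentially the same route as the paper's proof (Proposition~\ref{prop:monotone-L}). Both use the chain rule to split $\mathcal{I}_{\text{LoopFM},k}(L+1)$ into $\mathcal{I}_{\text{LoopFM},k}(L)$ plus the non-negative term $I(\mathbf{z}_{k,t_{L+1}}; y \mid \mathbf{x}_{\text{VM}}^{(t)}, \mathbf{S}_u^{(k,L)})$, then get convergence from monotonicity and the bound $H(y \mid \mathbf{x}_{\text{VM}}^{(t)}) \le \log 2$. Your padding convention and fixed-pipeline condition, which make $\mathbf{S}_u^{(k,L)}$ a function of $\mathbf{S}_u^{(k,L+1)}$, are sensible explicit additions that the paper leaves implicit.
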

\noindent The formal statement and proof appear in Appendix~\ref{app:seq-length-theory} (Proposition~\ref{prop:monotone-L}).

\paragraph{Implications.}
\textbf{(1)~Pipeline quality controls transfer:} reducing $\tau_k$ and $\eta_k$---via larger AE dimension~$d$, finer quantization~$b$, or stronger FM---directly raises the transfer-ratio bound~\eqref{eq:loopfm-tr-lb}. When pipeline losses dominate, the transfer ratio becomes smaller or even turns negative: the student worsens despite a better teacher (negative transfer). This motivates careful autoencoder design and the Matryoshka multi-granularity approach.
\textbf{(2)~Impact of feature gap $\delta$:} Corollary~\ref{cor:tr-monotone} shows that a larger feature gap $\delta = m_2 - m_1$ between FM generations monotonically increases the transfer-ratio lower bound, with diminishing marginal returns as $\mathrm{TR}_{\text{LB}}$ saturates at $(1{-}\eta_1)\underline{\kappa}_{\mathrm{gap}}^{\mathrm{hist}}/\bar{\kappa}_{\mathrm{gap}}$.
This means LoopFM benefits most from FM upgrades that introduce many new features, and the pipeline loss $\eta_1$ is the dominant bottleneck limiting how much of this benefit is realized---motivating the \emph{Matryoshka autoencoder} for flexible compression.
\textbf{(3)~Diminishing returns in sequence length:} Theorem~\ref{prop:monotone-L-main} guarantees monotone gains in~$L$ with diminishing marginal returns (since the total is bounded by $H(y)$), consistent with Table~\ref{tab:ablation}.
\textbf{(4)~FM capacity saturation and TR decay:} scaling FM$_2$'s parameters $p_2$ improves both the denominator $\Delta_{\text{teacher}}$ (via reduced $\epsilon_{\text{over}}(p_2,m_2)$) and the numerator $\Delta_{\text{LoopFM}}$ (via reduced $\tau_2$ as representation loss decreases), but the numerator saturates once representation loss plateau ($\ell_{\mathrm{repr},k}\to0$; see Appendix~\ref{app:pipeline-decomp} for definition), while the denominator continues growing as $\epsilon_{\text{over}}(p_2,m_2)\to 0$. Consequently, TR can \emph{decrease} with FM scaling---consistent with Table~\ref{tab:fm_capacity} (AUC spread $< 0.001$ across $16\times$ FM capacity).

\section{Experiments}
\label{sec:experiments}

\subsection{Experiment Setup}
\label{sec:setup}

\paragraph{Public datasets.}
We use three public datasets: \textbf{TaobaoAd}~\citep{tianchi2018taobao} (25M samples, 22 features), \textbf{KuaiVideo}~\citep{kuaishou} (13.7M samples, 9 features), and \textbf{Amazon Electronics}~\citep{he2016ups} (3M samples, 6 features).
We present main results and ablations on TaobaoAd (the richest feature set); results on KuaiVideo and Amazon Electronics are in Appendix~\ref{app:kuaivideo} and~\ref{app:amazon}.
TaobaoAd is partitioned into 8 calendar days, and we follow a temporal streaming protocol by~\citet{liang2025exfm} and a typical VM training setup for Ads CTR models: FM (DMIN~\citep{xiao2020deep}) is trained on days 1--4 and evaluated on days 5--8 to log labels and embeddings for KD and LoopFM sequence building; VM uses \emph{one-pass} streaming (no shuffle) training on days 5--7 (no validation for early stopping needed), and we report test metrics on day 8.

\paragraph{Models.}
The FM is DMIN~\citep{xiao2020deep} with BARS-optimal hyperparameters (embedding dim 32, DNN [512, 256, 128]) and a co-trained autoencoder ($d{=}32$).
We evaluate six VMs: non-sequential (\underline{FM}, Fm\underline{FM}, Deep\underline{FM}~\citep{guo2017deepfm}) and sequential (DIEN~\citep{zhou2019dien}, DMR~\citep{lyu2020deep}, DMIN), each extended (if needed) with a DMIN-style sequence module for LoopFM ($d{=}32$, $L{=}50$).
KD uses the auxiliary head framework from~\citet{liang2025exfm} ($\alpha{=}5$, $\gamma{=}10$, $\beta{=}5$ on TaobaoAd; tuned per-dataset in appendix).
Internal experiments use trillion-parameter FMs with billions of daily training examples and INT4 quantization for storage efficiency; public benchmark experiments use FP32 embeddings without quantization.
Our public benchmark implementation is built on FuxiCTR~\citep{zhu2022bars, zhu2021open}.

\paragraph{Metrics.}
\textbf{NE} (Normalized Entropy)~\citep{he2014practical}: normalized LogLoss (lower is better). \textbf{AUC} (higher is better). \textbf{Transfer ratio}: $\text{TR} = \Delta\text{NE}_{\text{VM}} / \Delta\text{NE}_{\text{FM}}$. ``KD'' refers to auxiliary-head-based external KD~\citep{liang2025exfm} throughout.

\subsection{Research Questions}
\label{sec:rqs}

\begin{itemize}[leftmargin=3.2em, labelsep=0.5em, labelwidth=2.5em, align=left, itemsep=1pt, topsep=2pt]
    \item[\textbf{RQ1}] Does LoopFM improve over scalar KD? (Section~\ref{sec:rq1})
    \item[\textbf{RQ2}] How does LoopFM compare to other embedding-based transfer methods? (Section~\ref{sec:rq2})
    \item[\textbf{RQ3}] Does sequence structure matter, or is pooling sufficient? (Section~\ref{sec:rq3})
    \item[\textbf{RQ4}] Are FM-learned sequences better than hand-crafted raw ID sequences? (Section~\ref{sec:rq4})
    \item[\textbf{RQ5}] How do layer selection, checkpoint frequency, sequence length, and embedding dimension affect performance? (Section~\ref{sec:rq5})
    \item[\textbf{RQ6}] How does FM capacity affect LoopFM's benefit? (Appendix~\ref{sec:rq6})
\end{itemize}

\subsection{Public Dataset Results}
\label{sec:public}

\subsubsection{RQ1: Does LoopFM Improve over KD?}
\label{sec:rq1}

Table~\ref{tab:public_taobao} presents results on TaobaoAd across six VMs (we underline \underline{FM} for factorization machine to distinguish from FM).
LoopFM consistently improves over KD across all architectures.
When the VM architecture is identical to the FM (DMIN), KD shows little gain as expected, yet LoopFM still brings 6.52\% AUC gain---strong evidence of complementary value.
A variance study with 5 seeds on Deep\underline{FM} (Appendix~\ref{app:variance}) confirms the gains are statistically significant ($p < 0.001$, paired $t$-test); KD+LoopFM achieves $0.6342 \pm 0.0001$ AUC with the lowest variance across all methods.

\begin{table}[ht]
\caption{LoopFM on TaobaoAd across six VMs (FM: DMIN). Non-sequential (\underline{FM}, Fm\underline{FM}, Deep\underline{FM}) and sequential (DIEN, DMR, DMIN) architectures, each extended with a DMIN-style module for LoopFM.}
\label{tab:public_taobao}
\centering
\resizebox{\textwidth}{!}{%
\begin{tabular}{lcccccccccccc}
\toprule
& \multicolumn{2}{c}{\underline{FM}} & \multicolumn{2}{c}{Fm\underline{FM}} & \multicolumn{2}{c}{Deep\underline{FM}} & \multicolumn{2}{c}{DIEN} & \multicolumn{2}{c}{DMR} & \multicolumn{2}{c}{DMIN} \\
\cmidrule(lr){2-3} \cmidrule(lr){4-5} \cmidrule(lr){6-7} \cmidrule(lr){8-9} \cmidrule(lr){10-11} \cmidrule(lr){12-13}
Method & AUC & LogLoss & AUC & LogLoss & AUC & LogLoss & AUC & LogLoss & AUC & LogLoss & AUC & LogLoss \\
\midrule
w/o distill & 0.5790 & 0.1979 & 0.5833 & 0.1977 & 0.5886 & 0.1979 & 0.5945 & 0.1963 & 0.5967 & 0.1966 & 0.5932 & 0.1966 \\
KD & 0.5855 & 0.1979 & 0.5885 & 0.1990 & 0.5980 & 0.1964 & 0.5958 & 0.1966 & 0.6002 & 0.1962 & 0.6002 & 0.1961 \\
LoopFM-only & 0.6203 & 0.1953 & 0.6066 & 0.2020 & 0.6245 & 0.1955 & 0.6335 & 0.1951 & \textbf{0.6361} & 0.1935 & 0.6319 & 0.1950 \\
KD + LoopFM & \textbf{0.6205} & \textbf{0.1950} & \textbf{0.6106} & \textbf{0.1967} & \textbf{0.6344} & \textbf{0.1940} & \textbf{0.6354} & \textbf{0.1934} & 0.6356 & \textbf{0.1934} & \textbf{0.6346} & \textbf{0.1935} \\
\bottomrule
\end{tabular}%
}
\end{table}

\subsubsection{RQ2: Comparison with Embedding-Based Transfer Methods}
\label{sec:rq2}

\begin{wraptable}{r}{0.44\textwidth}
\vspace{-12pt}
\caption{Embedding transfer methods (TaobaoAd, FM: DMIN, VM: Deep\underline{FM}, all build on KD).}
\label{tab:ablation_rq2}
\centering
\footnotesize
\begin{tabular}{lcc}
\toprule
Method & AUC & LogLoss \\
\midrule
KD (baseline) & 0.5980 & 0.1964 \\
+ Current-Emb-as-Feature & 0.5985 & 0.1964 \\
+ Current-Emb-Loss (FitNets) & 0.5984 & 0.1965 \\
+ Entity-Only Embedding & 0.5983 & 0.1965 \\
+ LoopFM & \textbf{0.6344} & \textbf{0.1940} \\
\bottomrule
\end{tabular}
\vspace{-10pt}
\end{wraptable}

Table~\ref{tab:ablation_rq2} compares LoopFM against three alternative embedding-based transfer methods: using the FM's current-sample embedding as a VM input feature (requiring real-time FM inference), matching representations via a FitNets-style~\citep{romero2015fitnets} auxiliary loss, and using user-only embeddings without interaction information.
All three provide negligible improvement ($+$0.0003--0.0005 AUC), while LoopFM achieves $+$0.0364 ($+$6.09\% relative), demonstrating the unique value of \emph{historical embedding sequences}.

\subsubsection{RQ3: Sequence Structure vs.\ Pooled Embeddings}
\label{sec:rq3}

\begin{wraptable}{r}{0.42\textwidth}
\caption{Sequence aggregation ablation (TaobaoAd, FM: DMIN, VM: Deep\underline{FM}, all build on KD).}
\label{tab:ablation_rq3}
\centering
\footnotesize
\begin{tabular}{lcc}
\toprule
Method & AUC & LogLoss \\
\midrule
+ Mean-Pool Historical Emb & 0.6118 & 0.1955 \\
+ Sum-Pool Historical Emb & 0.6300 & 0.1941 \\
+ LoopFM (DIN attention) & 0.6112 & 0.1954 \\
+ LoopFM (DMIN attention) & \textbf{0.6344} & \textbf{0.1940} \\
\bottomrule
\end{tabular}
\end{wraptable}

Table~\ref{tab:ablation_rq3} compares four aggregation strategies. All methods, including simple mean-pool, show significant gain over the KD baseline (0.5980).
Sum-pool is surprisingly strong, suggesting that even storing sum-pooled historical embeddings as a feature (much lower storage/modeling cost) could already capture substantial gain. DMIN attention brings further significant improvement over sum-pool. We also observe internally that LoopFM gain is more pronounced as we scale up the sequence modeling module (Appendix~\ref{app:embed-dim}).

\subsubsection{RQ4: FM-Learned Embeddings vs.\ Raw Sequences}
\label{sec:rq4}

\begin{wraptable}{r}{0.52\textwidth}
\vspace{-12pt}
\caption{Raw sequences vs.\ LoopFM at two VM capacity levels (TaobaoAd, FM: DMIN, no KD).}
\label{tab:raw_vs_loopfm}
\centering
\scriptsize
\begin{tabular}{lcc|cc}
\toprule
& \multicolumn{2}{c|}{VM = FM (emb=32)} & \multicolumn{2}{c}{Small VM (emb=2)} \\
Method & AUC & LogLoss & AUC & LogLoss \\
\midrule
DMIN Baseline & 0.5932 & 0.1966 & 0.5843 & 0.1967 \\
+ Raw User Seq & 0.6302 & 0.1942 & 0.6186 & 0.1950 \\
+ LoopFM & 0.6319 & 0.1950 & 0.6233 & 0.1955 \\
+ Both & \textbf{0.6362} & \textbf{0.1942} & \textbf{0.6305} & \textbf{0.1941} \\
\bottomrule
\end{tabular}
\vspace{-14pt}
\end{wraptable}

Table~\ref{tab:raw_vs_loopfm} compares LoopFM sequences against custom-built raw user behavior sequences (all four ad-side ID features: ad group, campaign, advertiser, bucketized price) using DMIN as VM at two capacities: matching the FM (emb=32) and $16\times$ smaller (emb=2).

When the VM is identical to the FM (left), LoopFM only slightly outperforms raw sequences ($+$0.0017 AUC, $+$0.27\% relative)---as expected, since the VM can learn equally effective embeddings directly from raw IDs.
Combining both still yields the best result (0.6362).
When the VM is much smaller (right), LoopFM's advantage widens to $+$0.0047 AUC ($+$0.76\% relative) over raw sequences: a capacity-limited VM cannot effectively learn from high-cardinality IDs with tiny embeddings, whereas LoopFM's well-learned representations are directly consumable regardless of VM capacity.
LoopFM's marginal gain on top of raw sequences is larger for the small VM ($+$0.0119) than for the large VM ($+$0.0060), confirming that LoopFM's value increases with the FM-VM capacity gap---a typical production setting.


\subsubsection{RQ5: Hyperparameter Sensitivity}
\label{sec:rq5}

\begin{wraptable}{r}{0.44\textwidth}
\vspace{-10pt}                                                             
\caption{Hyperparameter ablation (TaobaoAd, FM: DMIN, VM: Deep\underline{FM}, KD+LoopFM). $^\dagger$Ckpt frequency uses a separately trained fresh FM per split.
  $^\ddagger$Item-side feature embeddings only (6 ad features, no user/context, no DNN), compressed via same AE architecture.}
  \label{tab:ablation}
  \centering
  \scriptsize
  \begin{tabular}{@{}llcc@{}}
  \toprule
  Settings & & AUC & LogLoss \\
  \midrule
  \multirow{7}{*}{\textit{FM layer}}
   & Emb.\ layer & 0.6338 & \textbf{0.1936} \\
   & Hidden-0 (default) & 0.6340 & 0.1938 \\
   & Hidden-1 & 0.6328 & 0.1939 \\
   & Deep & 0.6326 & 0.1938 \\
   & All (joint, $d{=}64$) & \textbf{0.6341} & 0.1939 \\
   & Soft-label ($d{=}1$) & 0.6286 & 0.1940 \\
   & Item-only emb$^{\ddagger}$ & 0.6291 & 0.1942 \\
  \cmidrule(lr){1-4}
  \multirow{2}{*}{\textit{FM ckpt freq.$^\dagger$}}
   & Fixed (default) & \textbf{0.6344} & \textbf{0.1940} \\
   & Every split & 0.6311 & 0.1945 \\
  \cmidrule(lr){1-4}
  \multirow{5}{*}{\textit{Seq.\ length $L$}}
   & $L{=}10$ & 0.6327 & 0.1944 \\
   & $L{=}25$ & 0.6338 & 0.1941 \\
   & $L{=}50$ (default) & 0.6346 & 0.1940 \\
   & $L{=}75$ & 0.6347 & \textbf{0.1937} \\
   & $L{=}100$ & \textbf{0.6352} & 0.1938 \\
  \cmidrule(lr){1-4}
  \multirow{5}{*}{\textit{Emb.\ dim $d$}}
   & $d{=}8$ & 0.6339 & 0.1941 \\
   & $d{=}16$ & 0.6337 & 0.1937 \\
   & $d{=}32$ (default) & \textbf{0.6345} & \textbf{0.1937} \\
   & $d{=}64$ & 0.6341 & 0.1939 \\
   & $d{=}128$ & 0.6342 & 0.1941 \\
  \bottomrule
  \end{tabular}
\vspace{-28pt}
\end{wraptable}

Table~\ref{tab:ablation} ablates key hyperparameters.

\paragraph{FM layer selection.}
We test the embedding layer (concatenated feature lookups before any DNN processing), each of the FM's three DNN hidden layers, joint compression of all layers, soft-label only (FM prediction score as a 1-d sequence), and item-only embeddings (raw item-side feature lookups without user/context features), compressed via separately trained AEs.

Shallower layers tend to transfer better (Emb.\ layer 0.6338, Hidden-0 0.6340 vs.\ Deep 0.6326), consistent with the information bottleneck view that deeper layers progressively discard input details~\citep{tishby2015deep}.
However, the embedding layer---which contains no learned cross-feature interactions---performs slightly below Hidden-0, suggesting that interaction learning adds value. Despite only minor differences across layers, the overall pattern perfectly aligns with the layer selection principle mentioned earlier.
The soft-label alone as a 1-d sequence (0.6286) still provides strong gains, indicating that even the \emph{temporal structure} of FM predictions carries significant information.
Joint compression of all layers (0.6341) achieves the best AUC; internally, we use this approach over a heuristically chosen set spanning shallow to deep layers.

Item-only embeddings (0.6291) capture 86\% of the full interaction embedding's gain over the KD baseline, despite containing no user or context information.
On TaobaoAd's shallow FM, item identity features (adgroup\_id, campaign\_id) are already highly predictive, so the DNN's cross-feature interactions add modest value.
The gap should widen with deeper industrial FMs where the DNN contributes a larger fraction of the representation's information.
Finding the sweet spot---deep enough to capture rich interactions, but not so deep that compression discards too much---remains an open question (see Appendix~\ref{sec:rq7}).

\paragraph{FM checkpoint frequency.}
Counterintuitively, updating the FM checkpoint every split slightly \emph{degrades} performance (AUC 0.6311 vs.\ 0.6344 for a fixed checkpoint), despite better soft-labels~\citep{liang2025exfm}.
We hypothesize this is due to embedding drift: with a fixed checkpoint, embedding centroid drift is near-zero (0.000006--0.000017), so embeddings from day 5 and day 8 live in the same space and the sequence encoder can learn consistent patterns; with updated checkpoints, centroid drift is ${\sim}$1000$\times$ larger (0.013--0.021), so embeddings from different days live in drifting spaces, making it more challenging for the sequence encoder to learn. On the other hand, embeddings are expected to drift with natural data distribution---identifying the staleness threshold beyond which freshness outweighs consistency remains an open question.

\paragraph{Effect of sequence length $L$.}
Performance increases monotonically with $L$ but with diminishing returns (Table~\ref{tab:ablation}), corroborating Theorem~\ref{prop:monotone-L-main}: AUC improves from 0.6327 ($L{=}10$) to 0.6352 ($L{=}100$), with most gain captured by $L{=}50$ (0.6346).

\paragraph{Embedding dimension $d$.}
Using Matryoshka autoencoders~\citep{kusupati2022matryoshka}, performance is stable across dimensions (AUC spread 0.0008 from $d{=}8$ to $d{=}128$).
Internally, where FMs are much larger, higher dimensions unlock significant additional gain when paired with a wider sequence encoder (Appendix~\ref{app:embed-dim}).

\subsection{Internal Results: Validation at Scale}
\label{sec:internal}

We validate LoopFM at industrial systems with trillion-parameter FMs and million-parameter VMs.

\paragraph{Transfer ratio.}
Across multiple FM-VM configurations and FM updates,
adding LoopFM approximately doubles the transfer ratio of KD alone, with the combined gain consistently exceeding either channel alone---confirming that KD and LoopFM transfer largely orthogonal knowledge.
Additional ablations on embedding dimensions are in Appendix~\ref{appendix sec: internal exp}.

\paragraph{Quantization.}
INT8 quantization is near-lossless; vanilla INT4 introduces a 0.03--0.07\% NE gap (where 0.02\% is considered significant in production). K-means INT4---learning 16 non-uniform cluster centers adapted to the $\tanh$-shaped embedding distribution---reduces this gap to 0.01--0.02\%, achieving 4$\times$ compression with minimal quality loss.

\paragraph{Online results.}\label{subsubsec: online results}
LoopFM v1 was first launched to production on a single surface. Later, LoopFM v2 expanded to three additional surfaces with 4$\times$ storage cost reduction via INT4 quantization while LoopFM v3 uses the latest FM. LoopFM has improved ad conversion by +0.5\% in the first half after its initial launch, and by +1.03\% and +1.22\% from two individual launches in the subsequent half.

\section{Conclusion}
\label{sec:conclusion}

We presented LoopFM, a framework that complements scalar KD with a high-bandwidth embedding channel---extracting, compressing, and structuring historical FM representations as direct VM input features.
The key insight is that \emph{what} knowledge is transferred matters as much as \emph{how}: while KD captures the FM's current prediction, LoopFM provides a general framework of structuring users' historical embeddings, encoding cross-feature patterns that a single scalar cannot convey.
Our theoretical analysis formalizes this via a gain decomposition into temporal, cross-feature and compression-loss terms, with a transfer-ratio bound that tightens as compression improves.

Several empirical findings have practical implications beyond LoopFM itself.
First, our embedding-based and scalar-based transfer are largely orthogonal---combining them outperforms either alone in most settings, suggesting that multi-channel knowledge transfer should be a default design pattern in FM-VM systems.
Second, shallower FM layers transfer better than deeper ones, revealing a depth-vs-compression tradeoff: shallower representations are information-richer but contain less interaction learning, and finding the optimal extraction point remains an open problem.
Third, within the timescales we tested (2--3 days), embedding consistency within a user's sequence matters more than individual embedding freshness---though identifying the staleness threshold where this reverses remains open.

LoopFM's three-stage design is modular: each stage can evolve independently as better compression methods, richer structuring (e.g., graph-based), or new grouping keys emerge.
Intriguing future directions include \emph{self-LoopFM}---having the FM consume its own historical embeddings to create a self-improving loop (Appendix~\ref{app:future}).

\paragraph{Limitations.}
(1)~\textbf{Storage}: LoopFM sequences require significant storage at scale.
(2)~\textbf{Cold-start}: New users lack embedding history.
(3)~\textbf{Compression}: High-dimensional embeddings inevitably lose information.
(4)~\textbf{Latency}: Sequence features incur VM inference cost.
(5)~\textbf{Scale gap}: Public benchmarks ($\sim$6M params) results may not fully generalize to trillion-parameter settings.
(6)~\textbf{Weakness of the theoretical analysis}: Our bounds are stated in terms of population-level quantities (mutual information and Bayes risk) and characterize optimal transfer under ideal learning, without accounting for finite-sample effects, optimization issues, and model capacity limitations.

\clearpage
\newpage
\bibliographystyle{assets/plainnat}
\bibliography{paper}

\clearpage
\newpage
\beginappendix

\section{Additional Internal Experiments}\label{appendix sec: internal exp}

\label{app:embed-dim}

We ablate the LoopFM embedding dimension and the sequence encoder's hidden width ($d_{\text{model}}$, i.e., the model dimension of the self-attention-based sequence encoder that processes LoopFM features) on an internal downstream model.
NE gains range from 0.09\% to 0.14\% depending on the configuration, with three key findings:
\begin{enumerate}[leftmargin=1.5em, itemsep=2pt]
  \item \textbf{A small $d_{\text{model}}$ bottlenecks transfer.}
    With $d_{\text{model}}$ fixed at the base width, increasing the embedding dimension yields no additional NE gain (all achieve 0.09\%), indicating that the sequence architecture must be wide enough to utilize the richer LoopFM information.

  \item \textbf{Embedding dimension bounds the information content.}
    Conversely, when the embedding dimension is small and fixed, enlarging $d_{\text{model}}$ provides very limited improvement (0.09\% $\to$ 0.10\%), because the LoopFM feature itself carries limited information.

  \item \textbf{Both dimensions must scale together.}
    The largest gains (0.13--0.14\%) are realized when a high embedding dimension is paired with a sufficiently expressive sequence architecture.
    Increasing $d_{\text{model}}$ from the base width to ${\sim}3\times$ improves NE gain from 0.09\% to 0.13\% at the highest embedding dimension.
\end{enumerate}

\noindent
These results are consistent with the theoretical analysis in
Section~\ref{sec:theory}: the embedding dimension controls the total
information available in the LoopFM feature (analogous to the AE
dimension~$d$ in Theorem~\ref{thm:loopfm-tr}), while $d_{\text{model}}$
governs the downstream model's capacity to absorb that information.
Corollary~\ref{cor:tr-monotone} predicts that enlarging the feature set
alone yields diminishing returns once the pipeline bottleneck (here,
$d_{\text{model}}$) saturates---precisely the plateau observed when only the embedding dimension is scaled.

\section{KuaiVideo Results}
\label{app:kuaivideo}

\begin{table}[h]
\caption{LoopFM on KuaiVideo (FM: DMIN, BARS-tuned hyperparameters~\citep{zhu2022bars}) across four VM architectures. Features: user\_id, item\_id, item\_emb (64-dim visual embedding), pos\_items/neg\_items (ID sequences), pos\_items\_emb/neg\_items\_emb (visual embedding sequences). LoopFM uses watched-only sequences (label=1), $d{=}32$, $L{=}100$.}
\label{tab:public_kuai}
\centering
\small
\begin{tabular}{lcccccccc}
\toprule
& \multicolumn{2}{c}{Deep\underline{FM}} & \multicolumn{2}{c}{DIEN} & \multicolumn{2}{c}{DMR} & \multicolumn{2}{c}{DMIN} \\
\cmidrule(lr){2-3} \cmidrule(lr){4-5} \cmidrule(lr){6-7} \cmidrule(lr){8-9}
Method & AUC & LL & AUC & LL & AUC & LL & AUC & LL \\
\midrule
w/o distill & .6877 & .5419 & .7115 & .4767 & .7158 & .4763 & .7176 & .4724 \\
KD & \textbf{.6998} & \textbf{.5125} & .7223 & .4568 & .7261 & .4528 & .7233 & .4611 \\
\midrule
LoopFM-only (mean) & .6919 & .5558 & .7158 & .4565 & .7271 & .4516 & .7261 & .4553 \\
LoopFM-only (dmin) & .6898 & .5137 & .7212 & .4542 & .7244 & .4617 & .7244 & .4600 \\
\midrule
KD + LoopFM (mean) & .6965 & .5204 & .7217 & \textbf{.4531} & \textbf{.7280} & .4527 & \textbf{.7265} & \textbf{.4527} \\
KD + LoopFM (dmin) & .6847 & .5334 & \textbf{.7252} & .4550 & .7267 & \textbf{.4512} & .7257 & .4612 \\
\bottomrule
\end{tabular}
\end{table}

Table~\ref{tab:public_kuai} presents results on KuaiVideo across four DNN-based VMs with both mean pooling and DMIN attention for the LoopFM sequence encoder.
We exclude \underline{FM} and Fm\underline{FM} as KuaiVideo's features are dominated by behavioral and visual sequences that these non-DNN models cannot process well.
KuaiVideo features include user\_id, item\_id, pretrained visual embeddings (item\_emb, 64-dim), behavioral ID sequences (pos\_items, neg\_items), and their corresponding visual embedding sequences (pos\_items\_emb, neg\_items\_emb).
Mean pooling often outperforms DMIN attention on this dataset, in contrast to TaobaoAd where DMIN attention strongly outperforms mean pooling (Table~\ref{tab:ablation_rq3}). We hypothesize that the pretrained visual embeddings dominate the LoopFM embedding space on KuaiVideo, making attention-based aggregation less effective.
We also use watched-only sequences (label=1 only), as KuaiVideo's high interaction density (${\sim}$215 per user per day) means most events are skips; including them dilutes the user-interest signal with item-specific noise.
Adding a separate unwatched sequence (analogous to neg\_items\_emb) could provide complementary negative-interest signals, but this is not the main focus of this paper.
LoopFM provides consistent AUC gains across all VMs with both pooling modes.
The best overall result is DMR KD+LoopFM (mean) at 0.7280 (+1.7\% over baseline).
For DeepFM, KD alone outperforms all KD+LoopFM variants (notably KD+LoopFM dmin drops to 0.6847, a case of negative transfer); KD+LoopFM (dmin) achieves the best AUC for DIEN (0.7252).
We note that DeepFM is not well-suited for KuaiVideo, where behavioral sequences are the dominant features---its baseline \emph{without} sequence features (0.7056, Table~\ref{tab:public_kuai_noseq}) actually outperforms the baseline \emph{with} sequences (0.6877), indicating that DeepFM struggles to leverage raw sequence features on this dataset.

Compared to TaobaoAd, LoopFM gains are smaller on KuaiVideo (+0.6--1.6\% vs.\ +6.1--6.6\%).
We attribute this primarily to lack of \emph{feature richness}: TaobaoAd has 22 features spanning user demographics, ad properties, and context, so LoopFM embeddings encode rich cross-feature interactions beyond what the raw ID-based sequences capture.
KuaiVideo has far fewer features (mostly IDs and visual embeddings), and its existing behavioral sequences already cover most of the available signal---leaving less incremental value for LoopFM to add.
This is consistent with the theoretical gain decomposition (Theorem~\ref{thm:gain-decomp}), where the cross-feature term $\mathcal{I}_{\mathrm{cross}}$ is larger when the FM processes richer features than the VM's sequences alone, as demonstrated below on no-sequence baseline.
In typical production settings, FMs train on hundreds to thousands of cross-domain features while VMs are constrained to a small feature subset due to latency requirements, creating a large feature gap that LoopFM is well-positioned to bridge.

\subsection{No-Sequence Baseline}
\label{app:kuaivideo_noseq}

\begin{table}[h]
\caption{LoopFM on KuaiVideo with no-sequence baseline (user\_id + item\_id + item\_emb only, VM: Deep\underline{FM}). Same FM, LoopFM setup (watched-only, mean pooling, $d{=}32$, $L{=}100$) as Table~\ref{tab:public_kuai}.}
\label{tab:public_kuai_noseq}
\centering
\small
\begin{tabular}{lcc}
\toprule
Method & AUC & LogLoss \\
\midrule
w/o distill & 0.7056 & 0.4710 \\
KD & 0.7071 & 0.4634 \\
LoopFM-only (mean) & \textbf{0.7167} & \textbf{0.4584} \\
KD + LoopFM (mean) & 0.7146 & \textbf{0.4584} \\
\bottomrule
\end{tabular}
\end{table}

Table~\ref{tab:public_kuai_noseq} presents results on a no-sequence baseline where the VM (Deep\underline{FM}) has access to only user\_id, item\_id, and item\_emb---no behavioral sequence features.
Note that without sequence features, the sequence-processing components of DMIN, DIEN, and DMR are vacuous, so we use DeepFM as the base model and report only DeepFM here.
LoopFM-only achieves +1.6\% AUC gain, substantially larger than the +0.6\% on the baseline with sequences (Table~\ref{tab:public_kuai}), confirming that LoopFM's value is greatest when the VM lacks its own sequence feature inputs.
KD+LoopFM (0.7146) slightly underperforms LoopFM-only (0.7167) here, likely because KD parameters are not re-tuned on this no-sequence baseline.
Notably, DeepFM's no-sequence baseline (0.7056) is even higher than its baseline with sequences (0.6877 in Table~\ref{tab:public_kuai}), as discussed above.

\section{Amazon Electronics Results}
\label{app:amazon}

Amazon Electronics~\citep{he2016ups} contains user reviews on electronics products with features: user\_id, item\_id, cate\_id, and pre-built behavioral sequences (item\_history, cate\_history).
Unlike TaobaoAd, this dataset does not contain timestamps; we use the provided sequential ordering and split into 8 temporal chunks following the same streaming protocol.
The FM and VMs use BARS-tuned hyperparameters (emb\_dim=64, hidden=[1024, 512, 256]).

\begin{table}[h]
\caption{LoopFM on Amazon Electronics (FM: DMIN) across four DNN VM architectures. KD: $\alpha{=}5$, $\gamma{=}10$, $\beta{=}5$.}
\label{tab:public_amazon}
\centering
\small
\begin{tabular}{lcccccccc}
\toprule
& \multicolumn{2}{c}{Deep\underline{FM}} & \multicolumn{2}{c}{DIEN} & \multicolumn{2}{c}{DMR} & \multicolumn{2}{c}{DMIN} \\
\cmidrule(lr){2-3} \cmidrule(lr){4-5} \cmidrule(lr){6-7} \cmidrule(lr){8-9}
Method & AUC & LogLoss & AUC & LogLoss & AUC & LogLoss & AUC & LogLoss \\
\midrule
w/o distill & 0.8035 & 0.6327 & 0.8490 & 0.4948 & 0.8456 & 0.4882 & 0.8386 & 0.4947 \\
KD & 0.8119 & 0.6086 & 0.8640 & \textbf{0.4627} & 0.8631 & 0.4622 & 0.8551 & \textbf{0.4751} \\
LoopFM-only & 0.8127 & 0.6110 & 0.8537 & 0.4772 & 0.8475 & 0.4866 & 0.8388 & 0.4995 \\
KD + LoopFM & \textbf{0.8179} & \textbf{0.5710} & \textbf{0.8645} & 0.4646 & \textbf{0.8650} & \textbf{0.4608} & \textbf{0.8554} & 0.4809 \\
\bottomrule
\end{tabular}
\end{table}

Table~\ref{tab:public_amazon} presents results on Amazon Electronics across four DNN-based VMs.
Similar to KuaiVideo, we exclude \underline{FM} and Fm\underline{FM} because Amazon's dominant features are behavioral sequences (item\_history, cate\_history), which these non-sequential models cannot process well.
KD + LoopFM achieves the best AUC for all four VMs: Deep\underline{FM} (0.8179), DIEN (0.8645), DMR (0.8650), and DMIN (0.8554).
Note that while AUC consistently improves, LogLoss slightly degrades for DIEN and DMIN when adding LoopFM to KD, suggesting that the additional sequence features can affect calibration even when ranking quality improves.

Compared to TaobaoAd, KD is the dominant signal on Amazon (+1.8--2.1\% AUC for sequential VMs), while LoopFM-only provides modest gains (+0.02--1.1\%).
Similar to KuaiVideo, we attribute this to the limited feature richness---and even more so here: Amazon has only 6 features (user\_id, item\_id, cate\_id, and their history sequences), so the existing sequences already capture most of the available signal, leaving minimal cross-feature interactions for LoopFM to add values.
KD + LoopFM nonetheless consistently outperforms KD alone for all four VMs, with DMR achieving the best overall AUC of 0.8650 (+2.29\% relative over baseline).

\section{Seed Variance Study}
\label{app:variance}

To verify statistical significance of single-run results, we repeat the four main configurations (Baseline, KD, LoopFM-only, KD+LoopFM) on TaobaoAd with 5 random seeds using Deep\underline{FM} as VM with the same setup as Table~\ref{tab:public_taobao} (no shuffle, streaming, 1 epoch).
Table~\ref{tab:pipeline_variance} reports the results.
All gains are highly statistically significant under paired $t$-tests ($p < 0.001$): LoopFM-only achieves $+$0.0347 $\pm$ 0.0004 AUC over baseline ($t{=}189.5$), and KD+LoopFM achieves $+$0.0458 $\pm$ 0.0004 ($t{=}265.2$).
KD+LoopFM has the lowest variance (std $= 0.0001$), suggesting the two transfer channels stabilize each other.

\begin{table}[h]
\centering
\small
\caption{Full pipeline variance study on TaobaoAd (VM: Deep\underline{FM}, FM: DMIN, 5 seeds). All gains over baseline are significant at $p < 0.001$ (paired $t$-test, $df{=}4$).}
\label{tab:pipeline_variance}
\begin{tabular}{lccc}
\toprule
Method & AUC (mean $\pm$ std) & Range & $\Delta$AUC vs.\ baseline \\
\midrule
Baseline & $0.5884 \pm 0.0004$ & 0.0011 & --- \\
KD & $0.5978 \pm 0.0004$ & 0.0011 & $+$0.0094 ($t{=}32.6$) \\
LoopFM-only & $0.6230 \pm 0.0005$ & 0.0015 & $+$0.0347 ($t{=}189.5$) \\
KD + LoopFM & $\mathbf{0.6342 \pm 0.0001}$ & 0.0003 & $+$0.0458 ($t{=}265.2$) \\
\bottomrule
\end{tabular}
\end{table}


\section{Formal Assumptions for Theoretical Analysis}
\label{app:assumptions}

We state the assumptions used in Theorem~\ref{thm:loopfm-tr}. All notation follows Sections~\ref{sec:method} and~\ref{sec:theory}.

\begin{itemize}[leftmargin=2em, itemsep=4pt]
    \item[\textbf{(A1)}] \textbf{NTK-linearized benign-overfitting assumptions.}
    For each FM$_k$, define the expected excess risk $\epsilon_{\text{over}}(p_k,m_k) := R_{\text{ach}}(\text{FM}_k) - \mathcal{R}^{*(\text{FM}_k)}$ with $n$ training samples.
    Let $\phi_k(u,a)\in\mathbb{R}^{p_k}$ denote the local linearized feature map and
    $\Sigma_k:=\mathbb{E}[\phi_k\phi_k^\top]$ with eigenvalues
    $\lambda_{k,1}\ge\cdots\ge\lambda_{k,p_k}\ge0$. Here $m_k$ denotes the Bartlett split index, with $1\le m_k<p_k$. We assume:
    \begin{enumerate}[label=(A1.\roman*), leftmargin=1.7em, itemsep=1pt]
        \item \emph{NTK/lazy-training linearization:} training remains in a local kernel regime so FM$_k$ is well-approximated by a linear predictor on $\phi_k$~\citep{jacot2018neural}.
        \item \emph{Bartlett benign-overfitting conditions:} for split index $m_k<p_k$, the tail functionals $r_{m_k}(\Sigma_k)$ and $R_{m_k}(\Sigma_k)$ satisfy the finite-sample conditions in \citet[Theorem~1]{bartlett2020benign}, with $p_k\gg n$ and $m_k=o(n)$.
    \end{enumerate}
    Then Appendix~\ref{app:overparameterization} yields the two-sided envelope:
    \begin{equation}\label{eq:a1-envelope}
    \underline C_{\text{over}}\,\sigma^2\,\xi_k
    \le
    \epsilon_{\text{over}}(p_k,m_k)
    \le
    \bar C_{\text{over}}\,\sigma^2\,\xi_k,
    \end{equation}
    where $\xi_k:=m_k/n + n/(p_k-m_k) \to 0$ as $p_k,n\to\infty$, and $0 < \underline C_{\text{over}}\le \bar C_{\text{over}}$ are shared constants.

    \paragraph{Justification for (A1).}
    The specialization $\xi_k = m_k/n + n/(p_k - m_k)$ assumes a bi-level (spiked) covariance where the $p_k - m_k$ tail eigenvalues are approximately equal, giving tail effective rank $R_{m_k} = p_k - m_k$ (see Corollary~\ref{cor:a1-from-linearized} for the derivation).
    This is natural for overparameterized FMs: the NTK covariance has $m_k$ signal directions aligned with input features, and $p_k - m_k \gg n$ near-uniform tail directions from overparameterization, matching the bi-level structure of \citet[Theorem~2.2]{bartlett2020benign}.
    For more general covariance structures, Theorem~\ref{thm:loopfm-tr-general} (Appendix~\ref{app:general-tr-bound}) gives the same TR bound with $\xi_k = m_k/n + n/R_{m_k}(\Sigma_k)$, valid under any benign covariance satisfying \citet[Definition~4]{bartlett2020benign}.

    \item[\textbf{(A2)}] \textbf{Per-feature mutual information bound.}
    Write the additional FM$_2$ feature block as $\mathbf{x}_{\text{extra},2} = (u_{m_1+1},\ldots,u_{m_2})$.
    For each $j\in\{m_1{+}1,\ldots,m_2\}$:
    \begin{enumerate}[label=(\alph*)]
    \item \emph{Current-step:} there exist constants $0<\underline{\kappa}_{\mathrm{gap}}\le\bar{\kappa}_{\mathrm{gap}}$ such that
    \begin{equation}\label{eq:a2-kappa-gap}
    \underline{\kappa}_{\mathrm{gap}}
    \le
    I\!\left(u_j^{(t)}; y \mid \mathbf{x}_{\text{VM}}^{(t)},\mathbf{x}_{\text{extra},1}^{(t)},u_{m_1+1:j-1}^{(t)}\right)
    \le \bar{\kappa}_{\mathrm{gap}}.
    \end{equation}
    \item \emph{Historical:} there exist constants $0<\underline{\kappa}_{\mathrm{gap}}^{\mathrm{hist}}\le\bar{\kappa}_{\mathrm{gap}}^{\mathrm{hist}}$ such that
    \begin{equation}\label{eq:a2-kappa-gap-hist}
    \underline{\kappa}_{\mathrm{gap}}^{\mathrm{hist}}
    \le
    I\!\left(u_j^{(t_1:t_L)}; y \mid \mathbf{x}_{\text{VM}}^{(t)},\mathbf{H}_u,\mathbf{x}_{\text{extra},1}^{(t_1:t_L)},u_{m_1+1:j-1}^{(t_1:t_L)}\right)
    \le \bar{\kappa}_{\mathrm{gap}}^{\mathrm{hist}}.
    \end{equation}
    \end{enumerate}

    \paragraph{Justification for (A2).}
    (A2) states that each additional feature group in FM$_2$ contributes bounded conditional MI to the label.
    The lower bound $\underline{\kappa}_{\mathrm{gap}} > 0$ encodes that each new feature carries non-trivial predictive signal---this is the premise of adding features to FM$_2$.
    The upper bound $\bar{\kappa}_{\mathrm{gap}}$ reflects diminishing returns: conditioned on prior features, each additional feature's marginal contribution is bounded (a consequence of the label having finite entropy $H(y) \le \log 2$).
    The historical variant (b) bounds the same quantity over the full event history, with potentially different constants since $L$ timesteps of a feature carry more information than a single timestep.

    \item[\textbf{(A3)}] \textbf{Non-worsening cross-platform pipeline quality.}
    FM$_2$ uses at least as many parameters ($p_2\ge p_1$), the same or larger autoencoder bottleneck ($d_2\ge d_1$), and the same or finer quantization ($b_2\ge b_1$). We require:
    \begin{equation}\label{eq:a3-aeq}
    \ell_{\mathrm{repr},2}^{\mathrm{cross}} + \ell_{\mathrm{AE},2}^{\mathrm{cross}} + \ell_{\mathrm{Q},2}^{\mathrm{cross}}
    \;\le\;
    \ell_{\mathrm{repr},1}^{\mathrm{cross}} + \ell_{\mathrm{AE},1}^{\mathrm{cross}} + \ell_{\mathrm{Q},1}^{\mathrm{cross}},
    \end{equation}
    where $\ell_{\mathrm{repr},k}^{\mathrm{cross}}$, $\ell_{\mathrm{AE},k}^{\mathrm{cross}}$, and $\ell_{\mathrm{Q},k}^{\mathrm{cross}}$ are the cross-platform representation, autoencoder, and quantization losses defined in~\eqref{eq:ell-repr-cross}--\eqref{eq:ell-q-cross}.
\end{itemize}

\paragraph{Justification for including $\ell_{\mathrm{repr},k}^{\mathrm{cross}}$ in (A3).}
The AE and quantization components are directly controlled by design ($d_2\ge d_1$, $b_2\ge b_1$).
The representation loss $\ell_{\mathrm{repr},k}^{\mathrm{cross}}$ measures how much cross-platform information the FM's embedding $\mathbf{E}_u^{(k)}$ fails to capture from the raw extra features $\mathbf{x}_{\text{extra},k}^{(t_1:t_L)}$.
A larger FM ($p_2\ge p_1$) with lower excess risk produces richer intermediate representations at each time step, reducing the information gap between raw features and embeddings.
Therefore, the total cross-platform pipeline loss (representation + AE + quantization) for FM$_2$ should not exceed that of FM$_1$ when $p_2\ge p_1$, $d_2\ge d_1$, and $b_2\ge b_1$, making~(A3) a natural and mild assumption.

\section{Proofs for Theoretical Analysis}
\label{app:theory-proofs}

\subsection{Formal Statement and Proof of Theorem~\ref{thm:gain-decomp} (Gain Decomposition)}
\label{app:proof-gain-decomp}

We first restate Theorem~\ref{thm:gain-decomp} with full detail, including the \emph{Temporally-Privileged VM} (Temp-Priv) configuration used in the formal definitions.

\paragraph{Additional configuration.} \textbf{Temp-Priv VM}: a hypothetical VM with access to both current features $\mathbf{x}_{\text{VM}}^{(t)}$ and the full raw VM-side history $\mathbf{H}_u = (\mathbf{x}_{\text{VM}}^{(t_1)}, \ldots, \mathbf{x}_{\text{VM}}^{(t_L)})$. This serves as a theoretical upper bound for Self-LoopFM, since raw features contain strictly more information than any compressed embedding derived from them (by the data processing inequality).

\begin{theorem}[Gain decomposition; formal restatement of Theorem~\ref{thm:gain-decomp}]\label{thm:gain-decomp-formal}
Let $\mathcal{R}^*(\cdot) = H(y \mid \cdot)$ denote the Bayes risk under binary cross-entropy. The LoopFM information gain $\mathcal{I}_{\text{LoopFM}}(\text{FM}_k) := \mathcal{R}^{*(\text{VM})} - \mathcal{R}^{*(\text{LoopFM}_k)}$ decomposes exactly as:
\begin{equation}
    \mathcal{I}_{\text{LoopFM}}(\text{FM}_k)
    \;=\; \mathcal{I}_{\mathrm{temporal}}
    \;+\; \mathcal{I}_{\mathrm{cross},k}
    \;-\; \mathcal{I}_{\mathrm{residual},k},
\end{equation}
where all three terms are non-negative mutual informations:
\begin{enumerate}[label=(\roman*), itemsep=3pt, topsep=3pt]
    \item \textbf{Temporal information:} $\mathcal{I}_{\mathrm{temporal}} := I(\mathbf{H}_u; y \mid \mathbf{x}_{\text{VM}}^{(t)}) = \mathcal{R}^{*(\text{VM})} - \mathcal{R}^{*(\text{Temp-Priv})} \ge 0$. This is the predictive information in the user's raw feature history beyond what the current features reveal. It is a property of the data distribution alone, independent of any FM.
    \item \textbf{Cross-feature information:} $\mathcal{I}_{\mathrm{cross},k} := I(\mathbf{S}_u^{(k)}; y \mid \mathbf{x}_{\text{VM}}^{(t)}, \mathbf{H}_u) \ge 0$. This is the additional predictive information that the FM's extra features $\mathbf{x}_{\text{extra}}$ contribute, as encoded in the LoopFM embeddings, beyond what the raw VM history already provides. For Self-LoopFM (where FM$_k$ = VM), this term equals zero since the VM's own embeddings cannot introduce features the VM didn't have.
    \item \textbf{Compression residual:} $\mathcal{I}_{\mathrm{residual},k} := I(\mathbf{H}_u; y \mid \mathbf{x}_{\text{VM}}^{(t)}, \mathbf{S}_u^{(k)}) \ge 0$. This is the predictive information in $\mathbf{H}_u$ that is \emph{not} captured by the LoopFM sequence $\mathbf{S}_u^{(k)}$, representing the cost of the compression pipeline (FM representation $\to$ autoencoder $\to$ quantization).
\end{enumerate}
The following special cases and bounds hold:
\begin{enumerate}[label=(\alph*), itemsep=2pt, topsep=3pt]
    \item \textbf{Self-LoopFM:} $\mathcal{I}_{\text{LoopFM}}(\text{Self}) = \mathcal{I}_{\mathrm{temporal}} - \mathcal{I}_{\mathrm{residual,self}}$. LoopFM dominates Self-LoopFM whenever $\mathcal{I}_{\mathrm{residual},k} \le \mathcal{I}_{\mathrm{residual,self}}$.
    \item \textbf{Temporal upper bound:} $\mathcal{I}_{\text{LoopFM}}(\text{Self}) \le \mathcal{I}_{\mathrm{temporal}}$, with equality iff $\mathbf{S}_u^{\text{self}}$ is a sufficient statistic for $\mathbf{H}_u$ w.r.t.\ $y$ given $\mathbf{x}_{\text{VM}}^{(t)}$.
    \item \textbf{Cross-feature upper bound:} By the data processing inequality, $\mathcal{I}_{\mathrm{cross},k} \le I(\mathbf{x}_{\text{extra},k}^{(t_1:t_L)}; y \mid \mathbf{x}_{\text{VM}}^{(t)}, \mathbf{H}_u) =: \mathcal{I}_{\mathrm{feature\text{-}raw},k}$.
\end{enumerate}
\end{theorem}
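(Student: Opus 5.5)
The whole argument is the chain rule for conditional mutual information, applied to the pair $(\mathbf{H}_u,\mathbf{S}_u^{(k)})$ given $\mathbf{x}_{\text{VM}}^{(t)}$ and expanded in the two possible orders. Write $X:=\mathbf{x}_{\text{VM}}^{(t)}$ for brevity. Since $\mathcal{R}^*(\cdot)=H(y\mid\cdot)$, we have
$$\mathcal{I}_{\text{LoopFM}}(\text{FM}_k)=H(y\mid X)-H(y\mid X,\mathbf{S}_u^{(k)})=I(\mathbf{S}_u^{(k)};y\mid X).$$

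\textbf{Step 1: two chain-rule expansions.} The joint term can be expanded either way:
$$I(\mathbf{H}_u,\mathbf{S}_u^{(k)};y\mid X)=I(\mathbf{H}_u;y\mid X)+I(\mathbf{S}_u^{(k)};y\mid X,\mathbf{H}_u)=I(\mathbf{S}_u^{(k)};y\mid X)+I(\mathbf{H}_u;y\mid X,\mathbf{S}_u^{(k)}).$$
Equating the two right-hand sides and solving for $I(\mathbf{S}_u^{(k)};y\mid X)$ gives exactly
$$\mathcal{I}_{\text{LoopFM}}(\text{FM}_k)=\mathcal{I}_{\mathrm{temporal}}+\mathcal{I}_{\mathrm{cross},k}-\mathcal{I}_{\mathrm{residual},k}.$$
Each of the three terms is a conditional mutual information and hence non-negative. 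The identity $\mathcal{I}_{\mathrm{temporal}}=\mathcal{R}^{*(\text{VM})}-\mathcal{R}^{*(\text{Temp-Priv})}$ is just the definition of conditional mutual information, since Temp-Priv conditions on $(X,\mathbf{H}_u)$.

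\textbf{Step 2: special cases.} For (a), the LoopFM features $\mathbf{S}_u^{\text{self}}$ are a (possibly randomized) function of $\mathbf{H}_u$ alone, with any randomness independent of $y$ given $\mathbf{H}_u$ and $X$. This gives the Markov chain $y - (X,\mathbf{H}_u) - \mathbf{S}_u^{\text{self}}$, so $\mathcal{I}_{\mathrm{cross,self}}=0$ and the formula reduces to $\mathcal{I}_{\mathrm{temporal}}-\mathcal{I}_{\mathrm{residual,self}}$. The dominance claim then follows by comparing the two decompositions term by term, using $\mathcal{I}_{\mathrm{cross},k}\ge 0$.

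For (b), the bound $\mathcal{I}_{\text{LoopFM}}(\text{Self})\le\mathcal{I}_{\mathrm{temporal}}$ is immediate from $\mathcal{I}_{\mathrm{residual,self}}\ge0$. Equality holds iff $I(\mathbf{H}_u;y\mid X,\mathbf{S}_u^{\text{self}})=0$, i.e.\ iff $y$ and $\mathbf{H}_u$ are conditionally independent given $(X,\mathbf{S}_u^{\text{self}})$. That is precisely the definition of sufficiency stated in (b).

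\textbf{Step 3: cross-feature bound (c).} Each entry $\mathbf{z}_{u,t_i}^{(k)}$ is produced by the FM pipeline from $(\mathbf{x}_{\text{VM}}^{(t_i)},\mathbf{x}_{\text{extra},k}^{(t_i)})$. So, given $\mathbf{H}_u$, the sequence $\mathbf{S}_u^{(k)}$ is a function (plus independent noise) of $\mathbf{x}_{\text{extra},k}^{(t_1:t_L)}$, which yields the Markov chain $y-(X,\mathbf{H}_u,\mathbf{x}_{\text{extra},k}^{(t_1:t_L)})-\mathbf{S}_u^{(k)}$. The conditional data processing inequality then gives $\mathcal{I}_{\mathrm{cross},k}\le\mathcal{I}_{\mathrm{feature\text{-}raw},k}$.

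The only delicate point, and the expected obstacle, is stating this modeling assumption explicitly. The FM parameters are fixed, having been trained on other data, and the embeddings depend on no information about the current label $y$ beyond the historical inputs. Leakage is in fact possible in principle: the FM parameters could depend on the data, and historical labels could enter the embeddings through training. I would therefore state the Markov condition as a standing assumption rather than try to derive it.
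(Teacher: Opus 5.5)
Your proposal is correct and follows the paper's proof essentially step for step. It uses the two-ordering chain rule on $I(\mathbf{H}_u,\mathbf{S}_u^{(k)};y\mid \mathbf{x}_{\text{VM}}^{(t)})$ for the identity, the fact that $\mathbf{S}_u^{\text{self}}$ is a function of $\mathbf{H}_u$ for (a)--(b), and the conditional data processing inequality for (c). Your standing assumption (fixed FM and encoder maps, with any noise independent of $y$) is what the paper assumes implicitly when it treats $f_{\text{FM}_k}$ and $f_{\mathrm{enc}}$ as deterministic functions. Allowing randomized maps is only a harmless mild generalization.
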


\begin{proof}
The proof applies the chain rule of mutual information to the pair $(\mathbf{S}_u^{(k)}, \mathbf{H}_u)$ in two orderings, where $\mathbf{H}_u := (\mathbf{x}_{\text{VM}}^{(t_1)}, \ldots, \mathbf{x}_{\text{VM}}^{(t_L)})$ is the raw VM feature history.

\textbf{Step 1: Chain rule in two orderings.}
Applying the chain rule to the joint mutual information $I(\mathbf{S}_u^{(k)}, \mathbf{H}_u; y \mid \mathbf{x}_{\text{VM}}^{(t)})$:

\emph{Ordering 1} ($\mathbf{H}_u$ first):
\begin{equation}\label{eq:chain-ordering-1}
    I(\mathbf{S}_u^{(k)}, \mathbf{H}_u;\; y \mid \mathbf{x}_{\text{VM}}^{(t)})
    = \underbrace{I(\mathbf{H}_u;\; y \mid \mathbf{x}_{\text{VM}}^{(t)})}_{\mathcal{I}_{\mathrm{temporal}}}
    + \underbrace{I(\mathbf{S}_u^{(k)};\; y \mid \mathbf{x}_{\text{VM}}^{(t)}, \mathbf{H}_u)}_{\mathcal{I}_{\mathrm{cross},k}}.
\end{equation}
Conditioning on $\mathbf{H}_u$ first captures the temporal information; any remaining information in $\mathbf{S}_u^{(k)}$ about $y$ given $(\mathbf{x}_{\text{VM}}^{(t)}, \mathbf{H}_u)$ is the cross-platform contribution.

\emph{Ordering 2} ($\mathbf{S}_u^{(k)}$ first):
\begin{equation}\label{eq:chain-ordering-2}
    I(\mathbf{S}_u^{(k)}, \mathbf{H}_u;\; y \mid \mathbf{x}_{\text{VM}}^{(t)})
    = \underbrace{I(\mathbf{S}_u^{(k)};\; y \mid \mathbf{x}_{\text{VM}}^{(t)})}_{\mathcal{I}_{\text{LoopFM}}(\text{FM}_k)}
    + \underbrace{I(\mathbf{H}_u;\; y \mid \mathbf{x}_{\text{VM}}^{(t)}, \mathbf{S}_u^{(k)})}_{\mathcal{I}_{\mathrm{residual},k}}.
\end{equation}

\textbf{Step 2: Equate and rearrange.}
Since both orderings compute the same joint mutual information, equating~\eqref{eq:chain-ordering-1} and~\eqref{eq:chain-ordering-2}:
\begin{equation}
    \mathcal{I}_{\text{LoopFM}}(\text{FM}_k) + \mathcal{I}_{\mathrm{residual},k}
    = \mathcal{I}_{\mathrm{temporal}} + \mathcal{I}_{\mathrm{cross},k}.
\end{equation}
Rearranging:
\begin{equation}
    \mathcal{I}_{\text{LoopFM}}(\text{FM}_k)
    = \mathcal{I}_{\mathrm{temporal}} + \mathcal{I}_{\mathrm{cross},k} - \mathcal{I}_{\mathrm{residual},k}.
\end{equation}
Non-negativity of all three terms follows from the non-negativity of mutual information.
The identification $\mathcal{I}_{\mathrm{temporal}} = \mathcal{R}^{*(\text{VM})} - \mathcal{R}^{*(\text{Temp-Priv})}$ follows from the conditional MI identity $I(X;Y\mid Z) = H(Y\mid Z) - H(Y\mid X,Z)$ and the BCE Bayes-risk identity $\mathcal{R}^*(\cdot)=H(y\mid\cdot)$.

\textbf{Step 3: Self-LoopFM special case (part~(i)).}
When FM$_k$ = VM, the embedding $\mathbf{z}_k^{(t_j)} = \psi(\mathbf{x}_{\text{VM}}^{(t_j)})$ is a deterministic function of $\mathbf{x}_{\text{VM}}^{(t_j)}$.
Therefore $\mathbf{S}_u^{\text{self}} = [\psi(\mathbf{x}_{\text{VM}}^{(t_1)}), \ldots, \psi(\mathbf{x}_{\text{VM}}^{(t_L)})]$ is a function of $\mathbf{H}_u$.
By the data processing inequality, $I(\mathbf{S}_u^{\text{self}}; y \mid \mathbf{x}_{\text{VM}}^{(t)}, \mathbf{H}_u) = 0$ since $\mathbf{S}_u^{\text{self}}$ is $\sigma(\mathbf{H}_u)$-measurable.
Hence $\mathcal{I}_{\mathrm{cross,self}} = 0$, giving $\mathcal{I}_{\text{LoopFM}}(\text{Self}) = \mathcal{I}_{\mathrm{temporal}} - \mathcal{I}_{\mathrm{residual,self}}$.
Note that LoopFM similarly accesses temporal information through its FM-derived sequence $\mathbf{S}_u^{(k)}$; the only difference is the source of event embeddings (FM vs.\ VM).
If the retention-dominance condition $\mathcal{I}_{\mathrm{residual},k} \le \mathcal{I}_{\mathrm{residual,self}}$ holds, LoopFM is no worse than Self-LoopFM.

\textbf{Step 4: Temporal upper bound (part~(ii)).}
Since $\mathcal{I}_{\mathrm{residual,self}} \ge 0$, we have $\mathcal{I}_{\text{LoopFM}}(\text{Self}) \le \mathcal{I}_{\mathrm{temporal}}$.
Equality holds when $\mathcal{I}_{\mathrm{residual,self}} = I(\mathbf{H}_u; y \mid \mathbf{x}_{\text{VM}}^{(t)}, \mathbf{S}_u^{\text{self}}) = 0$, i.e., when $\mathbf{S}_u^{\text{self}}$ is a sufficient statistic for $\mathbf{H}_u$ with respect to $y$ given $\mathbf{x}_{\text{VM}}^{(t)}$.

\textbf{Step 5: Cross-platform upper bound (part~(iii)).}
The FM$_k$ embedding at time $t_j$ is computed as $\mathbf{z}_{k,t_j} = f_{\mathrm{enc}}(f_{\text{FM}_k}(\mathbf{x}_{\text{FM}_k}^{(t_j)}))$, where $\mathbf{x}_{\text{FM}_k}^{(t_j)} = (\mathbf{x}_{\text{VM}}^{(t_j)}, \mathbf{x}_{\text{extra},k}^{(t_j)})$.
Since both $f_{\text{FM}_k}$ and $f_{\mathrm{enc}}$ are deterministic functions, $\mathbf{S}_u^{(k)}$ is a deterministic function of $(\mathbf{H}_u, \mathbf{x}_{\text{extra},k}^{(t_1:t_L)})$.
Conditioning on $\mathbf{H}_u$ makes $\mathbf{S}_u^{(k)}$ a function of $\mathbf{x}_{\text{extra},k}^{(t_1:t_L)}$ alone.
By the data processing inequality applied to the Markov chain $\mathbf{x}_{\text{extra},k}^{(t_1:t_L)} \to \mathbf{S}_u^{(k)} \to y$ (conditioned on $(\mathbf{x}_{\text{VM}}^{(t)}, \mathbf{H}_u)$):
\begin{equation}
    \mathcal{I}_{\mathrm{cross},k} = I(\mathbf{S}_u^{(k)}; y \mid \mathbf{x}_{\text{VM}}^{(t)}, \mathbf{H}_u)
    \le I(\mathbf{x}_{\text{extra},k}^{(t_1:t_L)}; y \mid \mathbf{x}_{\text{VM}}^{(t)}, \mathbf{H}_u)
    = \mathcal{I}_{\mathrm{feature\text{-}raw},k}. \qedhere
\end{equation}
\end{proof}

\subsection{Pipeline Decomposition of Compression Loss}
\label{app:pipeline-decomp}

\begin{proposition}[Pipeline decomposition]\label{prop:pipeline-decomp}
The residual $\mathcal{I}_{\mathrm{residual},k}$ decomposes along the LoopFM pipeline:
raw FM embeddings $\mathbf{E}_u^{(k)} \in \mathbb{R}^{L \times D}$ ($L$ timesteps, $D$-dim per step) $\to$ autoencoder-compressed $\mathbf{Z}_u^{(k)} \in \mathbb{R}^{L \times d}$ ($d \ll D$) $\to$ quantized $\mathbf{S}_u^{(k)}$ ($b_k$ bits).
Since $\sigma(\mathbf{S}_u^{(k)}) \subseteq \sigma(\mathbf{Z}_u^{(k)}) \subseteq \sigma(\mathbf{E}_u^{(k)})$:
\begin{equation}\label{eq:pipeline-bound}
    \mathcal{I}_{\mathrm{residual},k} \;\le\; \underbrace{\ell_{\mathrm{repr},k}(p_k)}_{\substack{\text{FM repr.} \\ \text{residual}}} \;+\; \underbrace{\ell_{\mathrm{AE},k}(d)}_{\substack{\text{AE compr.} \\ \text{loss}}} \;+\; \underbrace{\ell_{\mathrm{Q},k}(b_k)}_{\substack{\text{quantization} \\ \text{loss}}},
\end{equation}
where:
\begin{align}
    \ell_{\mathrm{repr},k}(p_k) &:= I\!\left(\mathbf{H}_u;\; y \;\middle|\; \mathbf{x}_{\text{VM}}^{(t)},\, \mathbf{E}_u^{(k)}\right) \ge 0, \label{eq:ell-repr} \\
    \ell_{\mathrm{AE},k}(d) &:= I\!\left(\mathbf{H}_u;\; \mathbf{E}_u^{(k)} \;\middle|\; \mathbf{x}_{\text{VM}}^{(t)}\right) - I\!\left(\mathbf{H}_u;\; \mathbf{Z}_u^{(k)} \;\middle|\; \mathbf{x}_{\text{VM}}^{(t)}\right) \ge 0, \label{eq:ell-ae} \\
    \ell_{\mathrm{Q},k}(b_k) &:= I\!\left(\mathbf{H}_u;\; \mathbf{Z}_u^{(k)} \;\middle|\; \mathbf{x}_{\text{VM}}^{(t)}\right) - I\!\left(\mathbf{H}_u;\; \mathbf{S}_u^{(k)} \;\middle|\; \mathbf{x}_{\text{VM}}^{(t)}\right) \ge 0. \label{eq:ell-q}
\end{align}
The three terms correspond to successive stages of the pipeline:
$\ell_{\mathrm{repr},k}$ is the \emph{representation residual}---the predictive information in user history $\mathbf{H}_u$ about $y$ that the FM's raw embeddings $\mathbf{E}_u^{(k)}$ fail to retain. A larger FM (more parameters $p_k$) produces richer intermediate representations that retain more of this information, driving $\ell_{\mathrm{repr},k}\to 0$.
$\ell_{\mathrm{AE},k}$ is the \emph{autoencoder compression loss}---the information about $\mathbf{H}_u$ lost when compressing $\mathbf{E}_u^{(k)}\in\mathbb{R}^{L\times D}$ to $\mathbf{Z}_u^{(k)}\in\mathbb{R}^{L\times d}$; decreases with larger bottleneck dimension $d$.
$\ell_{\mathrm{Q},k}$ is the \emph{quantization loss}---the information lost when discretizing $\mathbf{Z}_u^{(k)}$ to $b_k$ bits; decreases with finer quantization.
\end{proposition}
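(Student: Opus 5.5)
The plan is to obtain the bound~\eqref{eq:pipeline-bound} from the chain rule of mutual information applied twice, and then to collapse the remaining terms by a telescoping sum over the pipeline stages. Throughout I would condition on $\mathbf{x}_{\text{VM}}^{(t)}$, abbreviated $\mathbf{x}$, and write $\mathbf{E}=\mathbf{E}_u^{(k)}$, $\mathbf{Z}=\mathbf{Z}_u^{(k)}$, $\mathbf{S}=\mathbf{S}_u^{(k)}$. The structural fact used repeatedly is that $\mathbf{Z}=f_{\mathrm{enc}}(\mathbf{E})$ is applied row-wise and $\mathbf{S}$ is a quantization of $\mathbf{Z}$, both deterministic. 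Hence, conditionally on $\mathbf{x}$, we have the Markov chain $(\mathbf{H}_u,y)\to\mathbf{E}\to\mathbf{Z}\to\mathbf{S}$, which is exactly the inclusion $\sigma(\mathbf{S})\subseteq\sigma(\mathbf{Z})\subseteq\sigma(\mathbf{E})$.

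\emph{Step 1 (non-negativity).} $\ell_{\mathrm{repr},k}\ge0$ is immediate because it is a conditional MI. For $\ell_{\mathrm{AE},k}$ and $\ell_{\mathrm{Q},k}$ I would apply the conditional data processing inequality to $\mathbf{H}_u\to\mathbf{E}\to\mathbf{Z}$ and to $\mathbf{H}_u\to\mathbf{Z}\to\mathbf{S}$, both given $\mathbf{x}$. These chains hold because $\mathbf{Z}$ and $\mathbf{S}$ are functions of $\mathbf{E}$ and $\mathbf{Z}$ respectively. This remains valid even though $\mathbf{E}$ itself depends on $\mathbf{x}_{\text{extra},k}$ and not on $\mathbf{H}_u$ alone.

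\emph{Step 2 (chain rule rewrites).} I would rewrite the residual by expanding $I(\mathbf{H}_u;y,\mathbf{S}\mid\mathbf{x})$ in two ways, which gives
\[
\mathcal{I}_{\mathrm{residual},k}=I(\mathbf{H}_u;y\mid\mathbf{x},\mathbf{S})=I(\mathbf{H}_u;y,\mathbf{S}\mid\mathbf{x})-I(\mathbf{H}_u;\mathbf{S}\mid\mathbf{x}).
\]
In the same way,
\[
\ell_{\mathrm{repr},k}=I(\mathbf{H}_u;y,\mathbf{E}\mid\mathbf{x})-I(\mathbf{H}_u;\mathbf{E}\mid\mathbf{x}).
\]
Since $\mathbf{S}$ is a function of $\mathbf{E}$, the pair $(y,\mathbf{S})$ is a function of $(y,\mathbf{E})$. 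Data processing then gives
\[
I(\mathbf{H}_u;y,\mathbf{S}\mid\mathbf{x})\le I(\mathbf{H}_u;y,\mathbf{E}\mid\mathbf{x}).
\]

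\emph{Step 3 (telescoping).} Substituting the Step 2 inequality into the expression for the residual gives
\[
\mathcal{I}_{\mathrm{residual},k}\le \ell_{\mathrm{repr},k}+I(\mathbf{H}_u;\mathbf{E}\mid\mathbf{x})-I(\mathbf{H}_u;\mathbf{S}\mid\mathbf{x}).
\]
The difference on the right equals $\bigl[I(\mathbf{H}_u;\mathbf{E}\mid\mathbf{x})-I(\mathbf{H}_u;\mathbf{Z}\mid\mathbf{x})\bigr]+\bigl[I(\mathbf{H}_u;\mathbf{Z}\mid\mathbf{x})-I(\mathbf{H}_u;\mathbf{S}\mid\mathbf{x})\bigr]=\ell_{\mathrm{AE},k}+\ell_{\mathrm{Q},k}$. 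This is~\eqref{eq:pipeline-bound}. The qualitative monotonicity remarks (in $p_k$, $d$, $b_k$) are interpretive and are not part of the inequality, so I would not try to prove them.

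\emph{Main obstacle.} The main obstacle is a measure-theoretic one rather than an algebraic one. The telescoping subtracts mutual informations such as $I(\mathbf{H}_u;\mathbf{E}\mid\mathbf{x})$, and with continuous inputs and deterministic maps these can be infinite (for example, if $\mathbf{H}_u$ is continuous and $\mathbf{E}$ determines it). In that case $\ell_{\mathrm{AE},k}$ is of the form $\infty-\infty$. I would handle this by assuming that the features are discrete or finitely valued, which is natural for IDs and bucketized features, or more generally that $I(\mathbf{H}_u;\mathbf{E}\mid\mathbf{x})<\infty$. Under that assumption every quantity is finite and the manipulations above are legitimate. A fallback that avoids subtraction altogether would be to state the bound as $\mathcal{I}_{\mathrm{residual},k}+I(\mathbf{H}_u;\mathbf{S}\mid\mathbf{x})\le\ell_{\mathrm{repr},k}+I(\mathbf{H}_u;\mathbf{E}\mid\mathbf{x})$, which holds with values in $[0,\infty]$.
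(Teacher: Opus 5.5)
Your proof is correct and is essentially the paper's argument. The paper splits off $\ell_{\mathrm{repr},k}$, bounds the remainder by $I(\mathbf{H}_u;\mathbf{E}\mid\mathbf{x},\mathbf{S})$ via the interaction identity, and telescopes; your data-processing step on $(y,\mathbf{S})$ versus $(y,\mathbf{E})$ drops exactly the same slack term $I(\mathbf{H}_u;\mathbf{E}\mid\mathbf{x},\mathbf{S},y)$ before the same telescoping, and your finiteness caveat (needed for $\ell_{\mathrm{AE},k}$ and $\ell_{\mathrm{Q},k}$ to be well-defined differences) is a valid point that the paper leaves implicit.
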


\begin{proof}
Let
\[
X:=\mathbf{x}_{\text{VM}}^{(t)},\quad
H:=\mathbf{H}_u,\quad
E_p:=\mathbf{E}_u^{(k)}(p_k),\quad
Z_{p,d}:=\mathbf{Z}_u^{(k)}(p_k,d),\quad
S_{p,d,b}:=\mathbf{S}_u^{(k)}(p_k,d,b_k).
\]

\textbf{Step 1: Decompose the residual into representation and downstream losses.}
Starting from
\[
\mathcal{I}_{\mathrm{residual},k}=I(H;y\mid X,S_{p,d,b}),
\]
add and subtract $I(H;y\mid X,E_p)$:
\begin{align}
\mathcal{I}_{\mathrm{residual},k}
&= \underbrace{I(H;y\mid X,E_p)}_{\ell_{\mathrm{repr},k}(p_k)}
 + \bigl[I(H;y\mid X,S_{p,d,b})-I(H;y\mid X,E_p)\bigr]. \label{eq:residual-split-proof}
\end{align}
Using the chain-rule identity
\[
I(A;B\mid C)-I(A;B\mid C,D)=I(A;D\mid C)-I(A;D\mid B,C),
\]
with $(A,B,C,D)=(H,y,(X,S_{p,d,b}),E_p)$, we get
\begin{align}
I(H;y\mid X,S_{p,d,b})-I(H;y\mid X,E_p)
&= I(H;E_p\mid X,S_{p,d,b})
 - I(H;E_p\mid X,S_{p,d,b},y) \notag\\
&\le I(H;E_p\mid X,S_{p,d,b}). \label{eq:additional-loss-bound-proof}
\end{align}

\textbf{Step 2: Telescope through AE and quantization.}
Since $\sigma(S_{p,d,b})\subseteq\sigma(Z_{p,d})\subseteq\sigma(E_p)$, the chain rule gives
\begin{align}
I(H;E_p\mid X,S_{p,d,b})
&= I(H;E_p\mid X)-I(H;S_{p,d,b}\mid X) \notag\\
&= \bigl[I(H;E_p\mid X)-I(H;Z_{p,d}\mid X)\bigr] \notag\\
&\quad + \bigl[I(H;Z_{p,d}\mid X)-I(H;S_{p,d,b}\mid X)\bigr] \notag\\
&= \ell_{\mathrm{AE},k}(d)+\ell_{\mathrm{Q},k}(b_k). \label{eq:telescope-ae-q-proof}
\end{align}
Combining \eqref{eq:residual-split-proof}, \eqref{eq:additional-loss-bound-proof}, and \eqref{eq:telescope-ae-q-proof} yields
\[
\mathcal{I}_{\mathrm{residual},k}
\le \ell_{\mathrm{repr},k}(p_k)+\ell_{\mathrm{AE},k}(d)+\ell_{\mathrm{Q},k}(b_k),
\]
which is Eq.~\eqref{eq:pipeline-bound}.

\textbf{Step 3: Non-negativity of all three losses.}
\begin{itemize}[leftmargin=1.2em, itemsep=1pt]
    \item $\ell_{\mathrm{repr},k}(p_k)=I(H;y\mid X,E_p)\ge0$ by non-negativity of conditional MI.
    \item $\ell_{\mathrm{AE},k}(d)=I(H;E_p\mid X)-I(H;Z_{p,d}\mid X)\ge0$ by DPI under $H\to E_p\to Z_{p,d}$ (given $X$).
    \item $\ell_{\mathrm{Q},k}(b_k)=I(H;Z_{p,d}\mid X)-I(H;S_{p,d,b}\mid X)\ge0$ by DPI under $H\to Z_{p,d}\to S_{p,d,b}$ (given $X$).
\end{itemize}

\textbf{Step 4: Connection between $\ell_{\mathrm{AE},k}(d)$ and AE MSE.}
Let $\hat E_{p,d}:=f_{\mathrm{dec}}(Z_{p,d})$ and
\[
D_{\mathrm{AE},k}(d):=\frac{1}{LD}\,\mathbb{E}\!\left[\|E_p-\hat E_{p,d}\|_2^2\right].
\]
Because $\hat E_{p,d}$ is a deterministic function of $Z_{p,d}$, DPI gives
\begin{equation}
\ell_{\mathrm{AE},k}(d)
= I(H;E_p\mid X)-I(H;Z_{p,d}\mid X)
\le I(H;E_p\mid X)-I(H;\hat E_{p,d}\mid X). \label{eq:ae-dpi-to-recon-proof}
\end{equation}
Now impose the linear-Gaussian approximation conditioned on $X$:
\[
E_p = A_k H + \varepsilon_{\mathrm{repr}},\qquad
\hat E_{p,d} = E_p + \varepsilon_{\mathrm{AE}},
\]
with $\varepsilon_{\mathrm{repr}}\sim\mathcal{N}(0,\Sigma_{\mathrm{repr},k})$, $\varepsilon_{\mathrm{AE}}\sim\mathcal{N}(0,q_k(d)I)$, independent of each other and of $H$, and $q_k(d)=D_{\mathrm{AE},k}(d)$. If $\Sigma_{\mathrm{signal},k}:=A_k\Sigma_{H\mid X}A_k^\top$ and $\Sigma_{\mathrm{repr},k}$ are simultaneously diagonalizable with eigenvalues $(\lambda_{k,j},\rho_{k,j})_{j=1}^{r_k}$, then
\begin{align}
I(H;E_p\mid X)-I(H;\hat E_{p,d}\mid X)
&= \frac12\sum_{j=1}^{r_k}
\log\frac{1+\lambda_{k,j}/\rho_{k,j}}{1+\lambda_{k,j}/(\rho_{k,j}+q_k(d))} \notag\\
&\le \frac12\sum_{j=1}^{r_k}\log\left(1+\frac{q_k(d)}{\rho_{k,j}}\right) \notag\\
&\le \frac{r_k}{2}\log\left(1+\frac{q_k(d)}{\sigma_{\mathrm{repr},k,\min}^2}\right), \label{eq:ae-mse-bound-proof}
\end{align}
where $\sigma_{\mathrm{repr},k,\min}^2:=\min_j\rho_{k,j}$. Combining \eqref{eq:ae-dpi-to-recon-proof} and \eqref{eq:ae-mse-bound-proof} with $q_k(d)=D_{\mathrm{AE},k}(d)$ gives the claimed AE-MSE control for $\ell_{\mathrm{AE},k}(d)$.
\end{proof}

\paragraph{Cross-platform pipeline losses.}
Conditioned on $(\mathbf{x}_{\text{VM}}^{(t)}, \mathbf{H}_u)$, the Markov chain $\mathbf{x}_{\text{extra},k}^{(t_1:t_L)} \to \mathbf{E}_u^{(k)} \to \mathbf{Z}_u^{(k)} \to \mathbf{S}_u^{(k)}$ yields cross-platform pipeline losses:
\begin{align}
    \ell_{\mathrm{repr},k}^{\mathrm{cross}} &:= I\!\left(\mathbf{x}_{\text{extra},k}^{(t_1:t_L)};\; y \;\middle|\; \mathbf{x}_{\text{VM}}^{(t)},\, \mathbf{H}_u\right) - I\!\left(\mathbf{E}_u^{(k)};\; y \;\middle|\; \mathbf{x}_{\text{VM}}^{(t)},\, \mathbf{H}_u\right) \ge 0, \label{eq:ell-repr-cross} \\
    \ell_{\mathrm{AE},k}^{\mathrm{cross}} &:= I\!\left(\mathbf{E}_u^{(k)};\; y \;\middle|\; \mathbf{x}_{\text{VM}}^{(t)},\, \mathbf{H}_u\right) - I\!\left(\mathbf{Z}_u^{(k)};\; y \;\middle|\; \mathbf{x}_{\text{VM}}^{(t)},\, \mathbf{H}_u\right) \ge 0, \label{eq:ell-ae-cross} \\
    \ell_{\mathrm{Q},k}^{\mathrm{cross}} &:= I\!\left(\mathbf{Z}_u^{(k)};\; y \;\middle|\; \mathbf{x}_{\text{VM}}^{(t)},\, \mathbf{H}_u\right) - I\!\left(\mathbf{S}_u^{(k)};\; y \;\middle|\; \mathbf{x}_{\text{VM}}^{(t)},\, \mathbf{H}_u\right) \ge 0. \label{eq:ell-q-cross}
\end{align}
Unlike the temporal losses (which measure information about $\mathbf{H}_u$), the cross-platform losses measure information about $y$ from the FM-VM feature gap, conditioned on $(\mathbf{x}_{\text{VM}}^{(t)}, \mathbf{H}_u)$:
$\ell_{\mathrm{repr},k}^{\mathrm{cross}}$ is the predictive signal from the raw extra features $\mathbf{x}_{\text{extra},k}^{(t_1:t_L)}$ that the FM's embeddings $\mathbf{E}_u^{(k)}$ fail to retain---the FM-VM feature gap information lost during the FM's forward pass;
$\ell_{\mathrm{AE},k}^{\mathrm{cross}}$ is the feature-gap information lost during autoencoder compression ($\mathbf{E}\to\mathbf{Z}$);
$\ell_{\mathrm{Q},k}^{\mathrm{cross}}$ is the feature-gap information lost during quantization ($\mathbf{Z}\to\mathbf{S}$).
Their sum determines $\eta_k$: the fraction of raw cross-platform information destroyed by the pipeline.

\begin{lemma}[Cross-platform retention]\label{lem:eta-pipeline}
$\mathcal{I}_{\mathrm{feature\text{-}raw},k} - \mathcal{I}_{\mathrm{cross},k} = \ell_{\mathrm{repr},k}^{\mathrm{cross}} + \ell_{\mathrm{AE},k}^{\mathrm{cross}} + \ell_{\mathrm{Q},k}^{\mathrm{cross}}$.
Equivalently, $\mathcal{I}_{\mathrm{cross},k} = (1 - \eta_k) \cdot \mathcal{I}_{\mathrm{feature\text{-}raw},k}$ where
$\eta_k := (\ell_{\mathrm{repr},k}^{\mathrm{cross}} + \ell_{\mathrm{AE},k}^{\mathrm{cross}} + \ell_{\mathrm{Q},k}^{\mathrm{cross}}) / \mathcal{I}_{\mathrm{feature\text{-}raw},k}$.
\end{lemma}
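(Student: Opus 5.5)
The identity is a telescoping sum, so the plan is to write $\mathcal{I}_{\mathrm{feature\text{-}raw},k} - \mathcal{I}_{\mathrm{cross},k}$ as a chain of differences along the pipeline $\mathbf{x}_{\text{extra},k}^{(t_1:t_L)} \to \mathbf{E}_u^{(k)} \to \mathbf{Z}_u^{(k)} \to \mathbf{S}_u^{(k)}$, all conditioned on $C := (\mathbf{x}_{\text{VM}}^{(t)}, \mathbf{H}_u)$.

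By Theorem~\ref{thm:gain-decomp-formal}(c) we have $\mathcal{I}_{\mathrm{feature\text{-}raw},k} = I(\mathbf{x}_{\text{extra},k}^{(t_1:t_L)}; y \mid C)$. By Theorem~\ref{thm:gain-decomp-formal}(ii), $\mathcal{I}_{\mathrm{cross},k} = I(\mathbf{S}_u^{(k)}; y \mid C)$. We add and subtract the intermediate quantities $I(\mathbf{E}_u^{(k)}; y \mid C)$ and $I(\mathbf{Z}_u^{(k)}; y \mid C)$:
\begin{align*}
\mathcal{I}_{\mathrm{feature\text{-}raw},k} - \mathcal{I}_{\mathrm{cross},k}
&= \bigl[I(\mathbf{x}_{\text{extra},k}^{(t_1:t_L)}; y \mid C) - I(\mathbf{E}_u^{(k)}; y \mid C)\bigr] \\
&\quad + \bigl[I(\mathbf{E}_u^{(k)}; y \mid C) - I(\mathbf{Z}_u^{(k)}; y \mid C)\bigr] \\
&\quad + \bigl[I(\mathbf{Z}_u^{(k)}; y \mid C) - I(\mathbf{S}_u^{(k)}; y \mid C)\bigr].
\end{align*}
The three brackets are, verbatim, the definitions \eqref{eq:ell-repr-cross}, \eqref{eq:ell-ae-cross} and \eqref{eq:ell-q-cross}. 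This gives the first claim as an exact identity, with no inequality needed.

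For completeness we also record the nonnegativity of each bracket. As in Step~5 of the proof of Theorem~\ref{thm:gain-decomp-formal}, $\mathbf{E}_u^{(k)}$ is a deterministic function of $(\mathbf{H}_u, \mathbf{x}_{\text{extra},k}^{(t_1:t_L)})$. The AE encoder and the quantizer are deterministic maps, so $\sigma(\mathbf{S}_u^{(k)}) \subseteq \sigma(\mathbf{Z}_u^{(k)}) \subseteq \sigma(\mathbf{E}_u^{(k)})$. Hence, conditionally on $C$, the chain $y \to \mathbf{x}_{\text{extra},k}^{(t_1:t_L)} \to \mathbf{E} \to \mathbf{Z} \to \mathbf{S}$ is Markov, and the conditional data processing inequality gives each bracket $\ge 0$. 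This also yields $0 \le \mathcal{I}_{\mathrm{cross},k} \le \mathcal{I}_{\mathrm{feature\text{-}raw},k}$.

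For the equivalent form, assume $\mathcal{I}_{\mathrm{feature\text{-}raw},k} > 0$ (otherwise $\eta_k$ is undefined; adopt the convention $\eta_k := 0$, consistent with $\mathcal{I}_{\mathrm{cross},k} = 0$ by the sandwich). Define $\eta_k$ as in the statement and divide the identity by $\mathcal{I}_{\mathrm{feature\text{-}raw},k}$. This gives $1 - \eta_k = \mathcal{I}_{\mathrm{cross},k}/\mathcal{I}_{\mathrm{feature\text{-}raw},k}$, and nonnegativity of the losses together with the sandwich gives $\eta_k \in [0,1]$.

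The only genuinely delicate step is justifying the Markov chain conditional on $C$: one must ensure that $\mathbf{E}_u^{(k)}$ depends on the raw data only through $(\mathbf{H}_u, \mathbf{x}_{\text{extra},k}^{(t_1:t_L)})$, i.e., that the FM forward pass and the encoder are fixed deterministic functions (fixed checkpoint, stop-gradient) with no dependence on $y$ at time $t$. I would state this explicitly, citing the construction in Stage~2. Everything else is algebra.
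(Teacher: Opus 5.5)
Your proposal is correct and follows the paper's proof essentially step for step. Both add and subtract $I(\mathbf{E}_u^{(k)}; y \mid C)$ and $I(\mathbf{Z}_u^{(k)}; y \mid C)$ so that the identity telescopes exactly into the three definitional brackets, get nonnegativity of each bracket from the conditional data processing inequality, and obtain the ratio form by dividing by $\mathcal{I}_{\mathrm{feature\text{-}raw},k} > 0$. Your treatment of the degenerate case $\mathcal{I}_{\mathrm{feature\text{-}raw},k} = 0$ and your explicit check that $\eta_k \in [0,1]$ are small but welcome additions that the paper leaves implicit.
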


\begin{proof}
Conditioned on $(\mathbf{x}_{\text{VM}}^{(t)}, \mathbf{H}_u)$, the embedding $\mathbf{S}_u^{(k)}$ is a deterministic function of $\mathbf{x}_{\text{extra},k}^{(t_1:t_L)}$ alone (since the FM and autoencoder are fixed functions and $\mathbf{H}_u$ supplies the VM features at each time step).
This gives the Markov chain:
\begin{equation}
    \mathbf{x}_{\text{extra},k}^{(t_1:t_L)} \;\to\; \mathbf{E}_u^{(k)} \;\to\; \mathbf{Z}_u^{(k)} \;\to\; \mathbf{S}_u^{(k)}
    \qquad \text{(given } (\mathbf{x}_{\text{VM}}^{(t)}, \mathbf{H}_u)\text{)}.
\end{equation}
Telescope the mutual information with $y$:
\begin{align}
    &\mathcal{I}_{\mathrm{feature\text{-}raw},k} - \mathcal{I}_{\mathrm{cross},k}
    = I(\mathbf{x}_{\text{extra},k}^{(t_1:t_L)}; y \mid \mathbf{x}_{\text{VM}}, \mathbf{H}_u)
     - I(\mathbf{S}_u^{(k)}; y \mid \mathbf{x}_{\text{VM}}, \mathbf{H}_u) \notag \\
    &= \underbrace{\bigl[I(\mathbf{x}_{\text{extra}}; y \mid \cdot) - I(\mathbf{E}; y \mid \cdot)\bigr]}_{\ell_{\mathrm{repr},k}^{\mathrm{cross}}}
     + \underbrace{\bigl[I(\mathbf{E}; y \mid \cdot) - I(\mathbf{Z}; y \mid \cdot)\bigr]}_{\ell_{\mathrm{AE},k}^{\mathrm{cross}}}
     + \underbrace{\bigl[I(\mathbf{Z}; y \mid \cdot) - I(\mathbf{S}; y \mid \cdot)\bigr]}_{\ell_{\mathrm{Q},k}^{\mathrm{cross}}}, \label{eq:cross-telescope-proof}
\end{align}
where $(\cdot) = (\mathbf{x}_{\text{VM}}^{(t)}, \mathbf{H}_u)$ throughout.
Each bracket is non-negative by the data processing inequality: since $\sigma(\mathbf{S}) \subseteq \sigma(\mathbf{Z}) \subseteq \sigma(\mathbf{E}) \subseteq \sigma(\mathbf{x}_{\text{extra}})$ (given $(\mathbf{x}_{\text{VM}}, \mathbf{H}_u)$), we have $I(W; y \mid C) \ge I(f(W); y \mid C)$ for any deterministic $f$ and conditioning $C$.

The cross-platform pipeline identity is exactly~\eqref{eq:cross-telescope-proof}.
Dividing by $\mathcal{I}_{\mathrm{feature\text{-}raw},k}$ (assumed $> 0$) gives $\eta_k = (\ell_{\mathrm{repr},k}^{\mathrm{cross}} + \ell_{\mathrm{AE},k}^{\mathrm{cross}} + \ell_{\mathrm{Q},k}^{\mathrm{cross}}) / \mathcal{I}_{\mathrm{feature\text{-}raw},k}$.
\end{proof}

\begin{lemma}[Temporal retention]\label{lem:tau-pipeline}
$0 \le \mathcal{I}_{\mathrm{residual},k} \le \tau_k \cdot \mathcal{I}_{\mathrm{temporal}}$ where
$\tau_k := (\ell_{\mathrm{repr},k} + \ell_{\mathrm{AE},k} + \ell_{\mathrm{Q},k}) / \mathcal{I}_{\mathrm{temporal}}$.
\end{lemma}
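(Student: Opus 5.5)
The plan is to obtain the lemma as a direct rescaling of the pipeline bound in Proposition~\ref{prop:pipeline-decomp}, following the same pattern as the proof of Lemma~\ref{lem:eta-pipeline}. The difference is that here we only have an inequality rather than a telescoping identity. This is because the temporal losses $\ell_{\mathrm{AE},k}$ and $\ell_{\mathrm{Q},k}$ measure information about $\mathbf{H}_u$ rather than about $y$. Consequently, Step~1 of that proof drops the term $-I(H;E_p\mid X,S_{p,d,b},y)\le 0$.

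\textbf{Lower bound.} $\mathcal{I}_{\mathrm{residual},k}=I(\mathbf{H}_u;y\mid \mathbf{x}_{\text{VM}}^{(t)},\mathbf{S}_u^{(k)})$ is a conditional mutual information, so it is non-negative. This was already noted in Theorem~\ref{thm:gain-decomp-formal}(iii).

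\textbf{Upper bound.} We invoke \eqref{eq:pipeline-bound}, which states $\mathcal{I}_{\mathrm{residual},k}\le \ell_{\mathrm{repr},k}+\ell_{\mathrm{AE},k}+\ell_{\mathrm{Q},k}$. Assume $\mathcal{I}_{\mathrm{temporal}}=I(\mathbf{H}_u;y\mid\mathbf{x}_{\text{VM}}^{(t)})>0$, which mirrors the standing assumption $\mathcal{I}_{\mathrm{feature\text{-}raw},k}>0$ in Lemma~\ref{lem:eta-pipeline}. Then $\tau_k$ is well defined, and $\tau_k\ge0$ because each of the three losses is non-negative by Step~3 of the proof of Proposition~\ref{prop:pipeline-decomp}. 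Multiplying and dividing the right-hand side of \eqref{eq:pipeline-bound} by $\mathcal{I}_{\mathrm{temporal}}$ gives exactly $\mathcal{I}_{\mathrm{residual},k}\le\tau_k\,\mathcal{I}_{\mathrm{temporal}}$.

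\textbf{Main obstacle: the degenerate case $\mathcal{I}_{\mathrm{temporal}}=0$.} Here $\tau_k$ is undefined, and we cannot simply conclude $\mathcal{I}_{\mathrm{residual},k}=0$. The reason is that $\mathbf{S}_u^{(k)}$ depends on $\mathbf{x}_{\text{extra},k}$, so conditioning on it can create dependence between $\mathbf{H}_u$ and $y$ (explaining away). One might hope to control $\mathcal{I}_{\mathrm{residual},k}$ via the chain rule, $I(\mathbf{H}_u;y\mid \mathbf{x}_{\text{VM}}^{(t)},\mathbf{S}_u^{(k)})\le \mathcal{I}_{\mathrm{temporal}}+\mathcal{I}_{\mathrm{cross},k}$. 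That bound is not zero, however, so it does not resolve the case. I would therefore state the lemma under the convention $\mathcal{I}_{\mathrm{temporal}}>0$, or alternatively set $\tau_k:=+\infty$ when $\mathcal{I}_{\mathrm{temporal}}=0<\ell_{\mathrm{repr},k}+\ell_{\mathrm{AE},k}+\ell_{\mathrm{Q},k}$, so that the inequality holds trivially.

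A remark closes the argument. $\tau_k$ is not confined to $[0,1]$, in contrast with $\eta_k$, because \eqref{eq:pipeline-bound} is only an upper bound and the temporal losses are not normalized by $\mathcal{I}_{\mathrm{temporal}}$. This explains why the main text states $\tau_k\ge0$ rather than $\tau_k\in[0,1]$.
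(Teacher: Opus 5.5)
Your proof is correct and is essentially the paper's. Non-negativity of conditional mutual information gives the lower bound, and the upper bound is Eq.~\eqref{eq:pipeline-bound} divided through by $\mathcal{I}_{\mathrm{temporal}}$, which the paper likewise simply assumes to be strictly positive. Your treatment of the degenerate case $\mathcal{I}_{\mathrm{temporal}}=0$ (explaining away via $\mathbf{x}_{\text{extra},k}$) and of why $\tau_k$ need not lie in $[0,1]$ goes beyond the paper and is sound. The one caveat is that the $\tau_k:=+\infty$ convention only works if you read $\infty\cdot 0$ as $+\infty$, so the cleaner choice is to assume $\mathcal{I}_{\mathrm{temporal}}>0$, as the paper does.
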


\begin{proof}
This is a restatement of Proposition~\ref{prop:pipeline-decomp} with $\tau_k$ defined as the normalized bound.
From~\eqref{eq:pipeline-bound}:
\begin{equation}
    \mathcal{I}_{\mathrm{residual},k} \;\le\; \ell_{\mathrm{repr},k} + \ell_{\mathrm{AE},k} + \ell_{\mathrm{Q},k}.
\end{equation}
Dividing by $\mathcal{I}_{\mathrm{temporal}}$ (assumed $> 0$) gives $\tau_k := (\ell_{\mathrm{repr},k} + \ell_{\mathrm{AE},k} + \ell_{\mathrm{Q},k}) / \mathcal{I}_{\mathrm{temporal}}$, so $\mathcal{I}_{\mathrm{residual},k} \le \tau_k \cdot \mathcal{I}_{\mathrm{temporal}}$.
\end{proof}

\begin{corollary}[LoopFM gain sandwich]\label{cor:gain-lower-bound}
$(1 - \tau_k)\,\mathcal{I}_{\mathrm{temporal}} + (1 - \eta_k)\,\mathcal{I}_{\mathrm{feature\text{-}raw},k}\leq \mathcal{I}_{\text{LoopFM}}(\text{FM}_k) \leq \mathcal{I}_{\mathrm{temporal}} + (1 - \eta_k)\,\mathcal{I}_{\mathrm{feature\text{-}raw},k}$.
\end{corollary}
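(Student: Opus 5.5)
The plan is to substitute the two retention lemmas into the exact decomposition of Theorem~\ref{thm:gain-decomp-formal}. That theorem gives
\[
\mathcal{I}_{\text{LoopFM}}(\text{FM}_k)=\mathcal{I}_{\mathrm{temporal}}+\mathcal{I}_{\mathrm{cross},k}-\mathcal{I}_{\mathrm{residual},k}.
\]
By Lemma~\ref{lem:eta-pipeline}, the middle term can be replaced exactly: $\mathcal{I}_{\mathrm{cross},k}=(1-\eta_k)\,\mathcal{I}_{\mathrm{feature\text{-}raw},k}$. This rests on the telescoping identity~\eqref{eq:cross-telescope-proof} and on the standing assumption $\mathcal{I}_{\mathrm{feature\text{-}raw},k}>0$. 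If that quantity is zero, then $\mathcal{I}_{\mathrm{cross},k}=0$ by part~(c) of the theorem, and the $\eta_k$ term vanishes on both sides, so nothing changes.

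For the lower bound, I would use Lemma~\ref{lem:tau-pipeline}: $\mathcal{I}_{\mathrm{residual},k}\le\tau_k\,\mathcal{I}_{\mathrm{temporal}}$. Subtracting this from the decomposition gives
\[
\mathcal{I}_{\text{LoopFM}}(\text{FM}_k)\ge(1-\tau_k)\,\mathcal{I}_{\mathrm{temporal}}+(1-\eta_k)\,\mathcal{I}_{\mathrm{feature\text{-}raw},k}.
\]
The lemma itself comes from Proposition~\ref{prop:pipeline-decomp} divided by $\mathcal{I}_{\mathrm{temporal}}>0$. In the degenerate case $\mathcal{I}_{\mathrm{temporal}}=0$, I would argue directly that $\mathcal{I}_{\mathrm{residual},k}\le\mathcal{I}_{\mathrm{temporal}}=0$ and interpret the $\tau_k$ term as zero. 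For that step I would check the standard inequality $\mathcal{I}_{\mathrm{residual},k}\le\mathcal{I}_{\mathrm{temporal}}+\mathcal{I}_{\mathrm{cross},k}$, which is Ordering~1 versus Ordering~2 in the chain-rule proof. I would also note that $\tau_k$ may exceed $1$; the bound is still valid then, just weaker, possibly negative, and so less informative than the trivial bound $\mathcal{I}_{\text{LoopFM}}\ge0$.

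For the upper bound, I would drop the subtracted term using $\mathcal{I}_{\mathrm{residual},k}\ge0$, which holds by non-negativity of conditional mutual information. This yields
\[
\mathcal{I}_{\text{LoopFM}}(\text{FM}_k)\le\mathcal{I}_{\mathrm{temporal}}+(1-\eta_k)\,\mathcal{I}_{\mathrm{feature\text{-}raw},k}.
\]

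The argument is essentially bookkeeping. The main thing to verify is that the definitions are consistent, and specifically that the residual being bounded, $I(\mathbf{H}_u;y\mid\mathbf{x}_{\text{VM}}^{(t)},\mathbf{S}_u^{(k)})$, is the same object controlled in Proposition~\ref{prop:pipeline-decomp}. It is, since that proposition takes $\mathbf{S}_u^{(k)}$ to be the quantized sequence.
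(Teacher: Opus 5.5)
Your main argument is correct and is the paper's own: substitute the exact identity of Lemma~\ref{lem:eta-pipeline} and the bound of Lemma~\ref{lem:tau-pipeline} into Theorem~\ref{thm:gain-decomp-formal}. The paper writes out only the lower bound. Your upper bound from $\mathcal{I}_{\mathrm{residual},k}\ge 0$ is the immediate other half, and your treatment of $\mathcal{I}_{\mathrm{feature\text{-}raw},k}=0$ is fine.

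Your aside on the degenerate case $\mathcal{I}_{\mathrm{temporal}}=0$ is wrong, however. The inequality $\mathcal{I}_{\mathrm{residual},k}\le\mathcal{I}_{\mathrm{temporal}}$ fails in general, because conditioning on $\mathbf{S}_u^{(k)}$ can increase mutual information. The chain-rule comparison you cite gives only $\mathcal{I}_{\mathrm{residual},k}\le\mathcal{I}_{\mathrm{temporal}}+\mathcal{I}_{\mathrm{cross},k}$. When $\mathcal{I}_{\mathrm{temporal}}=0$, that says merely $\mathcal{I}_{\mathrm{residual},k}\le\mathcal{I}_{\mathrm{cross},k}$.

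For a concrete failure, take $L=1$ and constant $\mathbf{x}_{\text{VM}}^{(t)}$. Let $h=\mathbf{H}_u$ and $e=\mathbf{x}_{\text{extra},k}^{(t_1)}$ be independent uniform bits, with label $y=h\oplus e$. Use a pipeline whose output is $\mathbf{S}_u^{(k)}=e$ (say, a one-bit bottleneck that keeps only $e$). Then:
\begin{itemize}
\item $\mathcal{I}_{\mathrm{temporal}}=I(h;y)=0$, but $\mathcal{I}_{\mathrm{residual},k}=I(h;y\mid e)=1$ bit;
\item $\mathcal{I}_{\text{LoopFM}}(\text{FM}_k)=0$;
\item $\mathcal{I}_{\mathrm{cross},k}=\mathcal{I}_{\mathrm{feature\text{-}raw},k}=1$ bit, so $\eta_k=0$.
\end{itemize}
Reading the $\tau_k$ term as zero therefore turns the lower bound into $1\le 0$.

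The paper avoids this by assuming $\mathcal{I}_{\mathrm{temporal}}>0$, so that $\tau_k$ is defined. If you want the degenerate case covered, interpret $\tau_k\,\mathcal{I}_{\mathrm{temporal}}$ as the undivided sum $\ell_{\mathrm{repr},k}+\ell_{\mathrm{AE},k}+\ell_{\mathrm{Q},k}$. Then invoke Proposition~\ref{prop:pipeline-decomp} directly, which needs no positivity. In the example that sum is $1$ bit, and the bound correctly reads $0\le 0$.
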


\begin{proof}
From Theorem~\ref{thm:gain-decomp}:
$\mathcal{I}_{\text{LoopFM}}(\text{FM}_k) = \mathcal{I}_{\mathrm{temporal}} + \mathcal{I}_{\mathrm{cross},k} - \mathcal{I}_{\mathrm{residual},k}$.

By Lemma~\ref{lem:eta-pipeline}: $\mathcal{I}_{\mathrm{cross},k} = (1 - \eta_k)\,\mathcal{I}_{\mathrm{feature\text{-}raw},k} \ge (1 - \eta_k)\,\mathcal{I}_{\mathrm{feature\text{-}raw},k}$.

By Lemma~\ref{lem:tau-pipeline}: $\mathcal{I}_{\mathrm{residual},k} \le \tau_k\,\mathcal{I}_{\mathrm{temporal}}$, so $-\mathcal{I}_{\mathrm{residual},k} \ge -\tau_k\,\mathcal{I}_{\mathrm{temporal}}$.

Combining:
\begin{equation}
    \mathcal{I}_{\text{LoopFM}}(\text{FM}_k)
    \ge \mathcal{I}_{\mathrm{temporal}} - \tau_k\,\mathcal{I}_{\mathrm{temporal}} + (1 - \eta_k)\,\mathcal{I}_{\mathrm{feature\text{-}raw},k}
    = (1 - \tau_k)\,\mathcal{I}_{\mathrm{temporal}} + (1 - \eta_k)\,\mathcal{I}_{\mathrm{feature\text{-}raw},k}. \qedhere
\end{equation}
\end{proof}

\begin{lemma}[Non-worsening pipeline quality]\label{lem:eta-monotone}
Under \emph{(A2)--(A3)}, $\eta_2 \le \eta_1$.
\end{lemma}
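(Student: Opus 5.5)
The plan is to compare numerator and denominator in the definition of $\eta_k$ from Lemma~\ref{lem:eta-pipeline} separately:
\[
\eta_k = \frac{N_k}{F_k},\qquad N_k := \ell_{\mathrm{repr},k}^{\mathrm{cross}} + \ell_{\mathrm{AE},k}^{\mathrm{cross}} + \ell_{\mathrm{Q},k}^{\mathrm{cross}},\qquad F_k := \mathcal{I}_{\mathrm{feature\text{-}raw},k}.
\]
Assumption (A3) is, verbatim, the statement $N_2 \le N_1$. So the proof reduces to showing $F_2 \ge F_1 > 0$. We would then chain
\[
\eta_2 = \frac{N_2}{F_2} \le \frac{N_1}{F_2} \le \frac{N_1}{F_1} = \eta_1 .
\]
The second inequality uses $N_1 \ge 0$ (each loss is non-negative by the data processing inequality, as shown in Lemma~\ref{lem:eta-pipeline}) and $F_2 \ge F_1 > 0$.

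To prove $F_2 \ge F_1$, we read the extra-feature block of FM$_k$ as everything FM$_k$ observes beyond $\mathbf{x}_{\text{VM}}$. For FM$_1$ this is $\mathbf{x}_{\text{extra},1}$. For FM$_2$ it is $(\mathbf{x}_{\text{extra},1}, u_{m_1+1},\ldots,u_{m_2})$, following the two-teacher setup $\mathbf{x}_{\text{FM}_2} = (\mathbf{x}_{\text{VM}}, \mathbf{x}_{\text{extra},1}, \mathbf{x}_{\text{extra},2})$. Then $F_2 = I(\mathbf{x}_{\text{extra},1}^{(t_1:t_L)}, u_{m_1+1:m_2}^{(t_1:t_L)}; y \mid \mathbf{x}_{\text{VM}}^{(t)}, \mathbf{H}_u)$. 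Applying the chain rule of mutual information, first peeling off $\mathbf{x}_{\text{extra},1}^{(t_1:t_L)}$ and then the new coordinates one at a time in the order $j = m_1+1,\ldots,m_2$, gives
\[
F_2 = F_1 + \sum_{j=m_1+1}^{m_2} I\!\left(u_j^{(t_1:t_L)}; y \mid \mathbf{x}_{\text{VM}}^{(t)},\mathbf{H}_u,\mathbf{x}_{\text{extra},1}^{(t_1:t_L)},u_{m_1+1:j-1}^{(t_1:t_L)}\right).
\]
Each summand is exactly the quantity bounded below in (A2)(b). Hence $F_2 \ge F_1 + \underline{\kappa}_{\mathrm{gap}}^{\mathrm{hist}}\,\delta \ge F_1$, with strict inequality when $\delta > 0$. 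Non-negativity of conditional MI alone would already give $F_2 \ge F_1$; (A2) additionally guarantees $F_2 > 0$, so $\eta_2$ is well-defined.

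The main obstacle is this notational bookkeeping, not any real inequality. One must check that $\mathcal{I}_{\mathrm{feature\text{-}raw},2}$ conditions on the same $(\mathbf{x}_{\text{VM}}^{(t)}, \mathbf{H}_u)$ as $\mathcal{I}_{\mathrm{feature\text{-}raw},1}$, so that the chain rule nests the two quantities. One must also check that the extra block for FM$_2$ contains FM$_1$'s block. We also need $F_1 > 0$, which Lemma~\ref{lem:eta-pipeline} already assumes when defining $\eta_1$. If $F_1 = 0$, then $N_1 = 0$ as well, since each loss is sandwiched between $0$ and $F_1$. In that degenerate case we would adopt the convention $\eta_1 = 0$ and note that $N_2 \le N_1 = 0$ forces $\eta_2 = 0$, so the claim still holds.
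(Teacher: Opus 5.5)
Your proposal is correct and follows essentially the same argument as the paper: (A3) gives $N_2 \le N_1$, the chain rule combined with (A2)(b) (which the paper packages as Lemma~\ref{lem:feature-gap-chain}) gives $F_2 \ge F_1 > 0$, and then $\eta_2 = N_2/F_2 \le N_1/F_2 \le N_1/F_1 = \eta_1$. You also state explicitly that the second inequality needs $N_1 \ge 0$, and you handle the degenerate case $F_1 = 0$; the paper leaves both of these points implicit.
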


\begin{proof}
By Lemma~\ref{lem:eta-pipeline},
$\eta_k = (\ell_{\mathrm{repr},k}^{\mathrm{cross}} + \ell_{\mathrm{AE},k}^{\mathrm{cross}} + \ell_{\mathrm{Q},k}^{\mathrm{cross}}) / \mathcal{I}_{\mathrm{feature\text{-}raw},k}$.
By Assumption~(A3), Eq.~\eqref{eq:a3-aeq}, the numerator for $k{=}2$ is at most the numerator for $k{=}1$.
By Assumption~(A2) and Lemma~\ref{lem:feature-gap-chain},
$\mathcal{I}_{\mathrm{feature\text{-}raw},2} \ge \mathcal{I}_{\mathrm{feature\text{-}raw},1} > 0$.
Therefore
\[
\eta_2
= \frac{\ell_{\mathrm{repr},2}^{\mathrm{cross}} + \ell_{\mathrm{AE},2}^{\mathrm{cross}} + \ell_{\mathrm{Q},2}^{\mathrm{cross}}}{\mathcal{I}_{\mathrm{feature\text{-}raw},2}}
\;\le\;
\frac{\ell_{\mathrm{repr},1}^{\mathrm{cross}} + \ell_{\mathrm{AE},1}^{\mathrm{cross}} + \ell_{\mathrm{Q},1}^{\mathrm{cross}}}{\mathcal{I}_{\mathrm{feature\text{-}raw},2}}
\;\le\;
\frac{\ell_{\mathrm{repr},1}^{\mathrm{cross}} + \ell_{\mathrm{AE},1}^{\mathrm{cross}} + \ell_{\mathrm{Q},1}^{\mathrm{cross}}}{\mathcal{I}_{\mathrm{feature\text{-}raw},1}}
= \eta_1. \qedhere
\]
\end{proof}

\subsection{Supporting Lemma and Proof for the Transfer-Ratio Theorem} \label{subsec:proof_transfer_ratio_thm}

\paragraph{Proof of the achieved-risk decomposition.}
Let $X:=\mathbf{x}_{\text{VM}}^{(t)}$ and $\mathbf{S}_k:=\mathbf{S}_u^{(k)}$.
Under BCE, the Bayes-optimal risk equals conditional entropy, so
\[
\mathcal{R}^{*(\mathrm{LoopFM}_k)}=H(y\mid X,\mathbf{S}_k)
=H(y\mid X)-I(\mathbf{S}_k;y\mid X)
=\mathcal{R}^{*(\mathrm{VM})}-\mathcal{I}_{\text{LoopFM}}(\text{FM}_k).
\]
By definition,
$R_{\mathrm{ach}}(\mathrm{LoopFM}_k)
=\mathcal{R}^{*(\mathrm{LoopFM}_k)}+\epsilon_{\mathrm{est},k}$
with $\epsilon_{\mathrm{est},k}:=\epsilon_{\mathrm{opt},k}+\epsilon_{\mathrm{gen},k}\ge0$.
Subtracting for $k=1$ and $k=2$:
\[
\Delta_{\mathrm{student}}
=\Bigl(\mathcal{I}_{\text{LoopFM}}(\text{FM}_2)-\mathcal{I}_{\text{LoopFM}}(\text{FM}_1)\Bigr)+\Delta_{\mathrm{est}}.
\]

\begin{lemma}[Feature-gap bounds]\label{lem:feature-gap-chain}
Assume \emph{(A2)}.
The Bayes-risk gap $\Delta_{\text{feat}} := \mathcal{R}^{*(\text{FM}_1)} - \mathcal{R}^{*(\text{FM}_2)}$ and the raw-channel gap $\Delta_{\mathrm{feat\text{-}raw}} := \mathcal{I}_{\mathrm{feature\text{-}raw},2}-\mathcal{I}_{\mathrm{feature\text{-}raw},1}$ both satisfy
\begin{equation}\label{eq:delta-feat-linear-bounds}
\underline{\kappa}_{\mathrm{gap}}\,(m_2-m_1) \le \Delta_{\text{feat}} \le \bar{\kappa}_{\mathrm{gap}}\,(m_2-m_1),
\qquad
\underline{\kappa}_{\mathrm{gap}}^{\mathrm{hist}}\,(m_2-m_1) \le \Delta_{\mathrm{feat\text{-}raw}} \le \bar{\kappa}_{\mathrm{gap}}^{\mathrm{hist}}\,(m_2-m_1).
\end{equation}
Note that $\Delta_{\text{feat}}$ involves current-step features (via the FM's per-sample Bayes risk), while $\Delta_{\mathrm{feat\text{-}raw}}$ involves historical feature sequences (via the Theorem~\ref{thm:gain-decomp} definition $\mathcal{I}_{\mathrm{feature\text{-}raw},k} = I(\mathbf{x}_{\text{extra},k}^{(t_1:t_L)};y\mid\mathbf{x}_{\text{VM}}^{(t)},\mathbf{H}_u)$); these are distinct quantities that admit the same chain-rule structure.
\end{lemma}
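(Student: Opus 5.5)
The plan is to express each gap as a sum of per-feature conditional mutual informations via the chain rule, and then bound every summand using (A2).

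\emph{Current-step gap $\Delta_{\text{feat}}$.} Since the Bayes risk under BCE is a conditional entropy, and $\mathbf{x}_{\text{FM}_2}^{(t)} = (\mathbf{x}_{\text{FM}_1}^{(t)}, u_{m_1+1}^{(t)},\ldots,u_{m_2}^{(t)})$, we have
\[
\Delta_{\text{feat}} = H(y\mid \mathbf{x}_{\text{FM}_1}^{(t)}) - H(y\mid \mathbf{x}_{\text{FM}_2}^{(t)}) = I\bigl(\mathbf{x}_{\text{extra},2}^{(t)}; y \mid \mathbf{x}_{\text{VM}}^{(t)},\mathbf{x}_{\text{extra},1}^{(t)}\bigr).
\]
Applying the chain rule to the coordinates $u_{m_1+1},\ldots,u_{m_2}$ in order gives
\[
\Delta_{\text{feat}}=\sum_{j=m_1+1}^{m_2} I\bigl(u_j^{(t)};y\mid \mathbf{x}_{\text{VM}}^{(t)},\mathbf{x}_{\text{extra},1}^{(t)},u_{m_1+1:j-1}^{(t)}\bigr).
\]
This has exactly $m_2-m_1$ terms. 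Each term lies in $[\underline{\kappa}_{\mathrm{gap}},\bar{\kappa}_{\mathrm{gap}}]$ by (A2)(a), and summing yields the first pair of inequalities.

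\emph{Historical gap $\Delta_{\mathrm{feat\text{-}raw}}$.} Let $C:=(\mathbf{x}_{\text{VM}}^{(t)},\mathbf{H}_u)$. We have $\mathbf{x}_{\text{extra},2}^{(t_1:t_L)}=(\mathbf{x}_{\text{extra},1}^{(t_1:t_L)},u_{m_1+1:m_2}^{(t_1:t_L)})$, so the chain rule gives
\[
\mathcal{I}_{\mathrm{feature\text{-}raw},2}=\mathcal{I}_{\mathrm{feature\text{-}raw},1}+I\bigl(u_{m_1+1:m_2}^{(t_1:t_L)};y\mid C,\mathbf{x}_{\text{extra},1}^{(t_1:t_L)}\bigr).
\]
Therefore $\Delta_{\mathrm{feat\text{-}raw}}$ equals the last term. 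Expanding it with the chain rule over $j=m_1+1,\ldots,m_2$ produces exactly the conditional MIs bounded in (A2)(b). Summing gives $\underline{\kappa}_{\mathrm{gap}}^{\mathrm{hist}}(m_2-m_1)\le\Delta_{\mathrm{feat\text{-}raw}}\le\bar{\kappa}_{\mathrm{gap}}^{\mathrm{hist}}(m_2-m_1)$.

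\emph{Main obstacle.} The delicate point is the bookkeeping: the chain rule must be taken in the same ordering of the new features that (A2) uses, and with conditioning sets that match (A2) exactly. For the historical case, whole time-blocks $u_j^{(t_1:t_L)}$ are peeled off rather than single timesteps. The identification $\mathcal{R}^*(\cdot)=H(y\mid\cdot)$ also requires the FM's Bayes risk to be taken on current-step features only. This is how the paper defines it, and I would state it explicitly.
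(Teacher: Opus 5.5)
Your proposal is correct and matches the paper's argument: expand each gap by the chain rule over $u_{m_1+1},\ldots,u_{m_2}$ in the order (A2) uses, bound each of the $m_2-m_1$ summands by (A2)(a) or (A2)(b), and sum. You also make explicit the step $\mathcal{I}_{\mathrm{feature\text{-}raw},2}=\mathcal{I}_{\mathrm{feature\text{-}raw},1}+I\bigl(u_{m_1+1:m_2}^{(t_1:t_L)};y\mid \mathbf{x}_{\text{VM}}^{(t)},\mathbf{H}_u,\mathbf{x}_{\text{extra},1}^{(t_1:t_L)}\bigr)$, which the paper uses silently; it requires reading $\mathbf{x}_{\text{extra},2}$ in the definition of $\mathcal{I}_{\mathrm{feature\text{-}raw},2}$ as the full FM$_2$ extra block rather than the new block of (A2), a notational overloading in the paper that you resolve correctly.
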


\begin{proof}
Both quantities decompose via the MI chain rule over feature groups $j\in\{m_1{+}1,\ldots,m_2\}$.

For the Bayes-risk gap (current-step features):
\begin{align*}
\Delta_{\text{feat}}
&=H\!\left(y\mid \mathbf{x}_{\text{VM}},\mathbf{x}_{\text{extra},1}\right)
-H\!\left(y\mid \mathbf{x}_{\text{VM}},\mathbf{x}_{\text{extra},1},\mathbf{x}_{\text{extra},2}\right)
=\sum_{j=m_1+1}^{m_2} I\!\left(u_j;y\mid \mathbf{x}_{\text{VM}},\mathbf{x}_{\text{extra},1},u_{m_1+1:j-1}\right).
\end{align*}

For the raw-channel gap (historical sequences, using the Theorem~\ref{thm:gain-decomp} definition):
\begin{align*}
\Delta_{\mathrm{feat\text{-}raw}}
&=I\!\left(\mathbf{x}_{\text{extra},2}^{(t_1:t_L)};y\mid \mathbf{x}_{\text{VM}}^{(t)},\mathbf{H}_u,\mathbf{x}_{\text{extra},1}^{(t_1:t_L)}\right)
=\sum_{j=m_1+1}^{m_2} I\!\left(u_j^{(t_1:t_L)};y\mid \mathbf{x}_{\text{VM}}^{(t)},\mathbf{H}_u,\mathbf{x}_{\text{extra},1}^{(t_1:t_L)},u_{m_1+1:j-1}^{(t_1:t_L)}\right).
\end{align*}

Applying (A2) to each per-feature-group term in both expansions and summing gives~\eqref{eq:delta-feat-linear-bounds}.
In particular, $\mathcal{I}_{\mathrm{feature\text{-}raw},2} \ge \mathcal{I}_{\mathrm{feature\text{-}raw},1} + \underline{\kappa}_{\mathrm{gap}}^{\mathrm{hist}}(m_2-m_1) > \mathcal{I}_{\mathrm{feature\text{-}raw},1}$.
\end{proof}

\subsection{Proof of Theorem~\ref{thm:loopfm-tr} (Transfer-Ratio Bound)}
\label{app:proof-tr}

\begin{proof}
Let
\[
\Delta_{\text{LoopFM}} := R_{\text{ach}}(\text{LoopFM}_1)-R_{\text{ach}}(\text{LoopFM}_2),
\qquad
\Delta_{\text{teacher}} := R_{\text{ach}}(\text{FM}_1)-R_{\text{ach}}(\text{FM}_2).
\]
By Eq.~\eqref{eq:tr-def},
\[
\mathrm{TR}_{\text{LoopFM}}=\frac{\Delta_{\text{LoopFM}}}{\Delta_{\text{teacher}}}.
\]
By the achieved-risk decomposition,
\[
\Delta_{\text{LoopFM}}
=\Bigl(\mathcal{I}_{\text{LoopFM}}(\text{FM}_2)-\mathcal{I}_{\text{LoopFM}}(\text{FM}_1)\Bigr)+\Delta_{\text{est}}.
\]
Under the well-trained assumption $\epsilon_{\mathrm{est},k}\to0$, $\Delta_{\text{est}}\to0$, so
\[
\Delta_{\text{LoopFM}}
=\mathcal{I}_{\text{LoopFM}}(\text{FM}_2)-\mathcal{I}_{\text{LoopFM}}(\text{FM}_1).
\]
By Eq.~\eqref{eq:teacher-decomp},
\[
\Delta_{\text{teacher}}
=\Delta_{\text{feat}}+\Delta_{\text{param}}
=\mathcal{R}^{*(\text{FM}_1)} - \mathcal{R}^{*(\text{FM}_2)} + \epsilon_{\text{over}}(p_1,m_1)-\epsilon_{\text{over}}(p_2,m_2).
\]
Substituting into Eq.~\eqref{eq:tr-def} gives the transfer-ratio expression.

Assume now $\Delta_{\text{teacher}}>0$.

\textbf{(0) Initial launch.}
When the VM has no prior LoopFM, $R_{\mathrm{ach}}(\text{LoopFM}_1) = R_{\mathrm{ach}}(\text{VM})$ and $\mathcal{I}_{\text{LoopFM}}(\text{FM}_1) = 0$ (the baseline VM receives no embedding sequence). The numerator becomes $\mathcal{I}_{\text{LoopFM}}(\text{FM}_2)$. This quantity is non-negative by conditioning reduces entropy:
\[
\mathcal{R}^{*(\text{LoopFM}_2)} = H(y \mid \mathbf{x}_{\text{VM}}^{(t)}, \mathbf{S}_u^{(2)}) \;\le\; H(y \mid \mathbf{x}_{\text{VM}}^{(t)}) = \mathcal{R}^{*(\text{VM})},
\]
since the LoopFM$_2$ model observes $(\mathbf{x}_{\text{VM}}^{(t)}, \mathbf{S}_u^{(2)})$, a strict superset of the baseline VM's input $\mathbf{x}_{\text{VM}}^{(t)}$.
Therefore $\mathcal{I}_{\text{LoopFM}}(\text{FM}_2) = \mathcal{R}^{*(\text{VM})} - \mathcal{R}^{*(\text{LoopFM}_2)} \ge 0$.
Since $\Delta_{\text{teacher}} > 0$ by assumption, $\mathrm{TR}_{\text{LoopFM}} \ge 0$.

For the tighter bound, substitute the gain sandwich lower bound (Corollary~\ref{cor:gain-lower-bound}):
\[
\mathcal{I}_{\text{LoopFM}}(\text{FM}_2) \;\ge\; (1-\tau_2)\,\mathcal{I}_{\mathrm{temporal}} + (1-\eta_2)\,\mathcal{I}_{\mathrm{feature\text{-}raw},2}.
\]
Dividing by $\Delta_{\text{teacher}} > 0$ gives~\eqref{eq:tr-launch}. Note that $\mathrm{TR}_{\text{LoopFM}} \ge 0$ follows independently from the Bayes-risk argument above, not from the sign of this bound.

\textbf{(1) Negative transfer (without A3).}
If $\Delta_{\text{LoopFM}}<0$, then from
\(
\mathrm{TR}_{\text{LoopFM}}=\Delta_{\text{LoopFM}}/\Delta_{\text{teacher}}
\)
we immediately get $\mathrm{TR}_{\text{LoopFM}}<0$.

For a sufficient condition, Corollary~\ref{cor:gain-lower-bound} gives
\[
\mathcal{I}_{\text{LoopFM}}(\text{FM}_2)
\le
\mathcal{I}_{\mathrm{temporal}} + (1-\eta_2)\,\mathcal{I}_{\mathrm{feature\text{-}raw},2},
\]
and
\[
\mathcal{I}_{\text{LoopFM}}(\text{FM}_1)
\ge
(1-\tau_1)\,\mathcal{I}_{\mathrm{temporal}} + (1-\eta_1)\,\mathcal{I}_{\mathrm{feature\text{-}raw},1}.
\]
Therefore,
\begin{align}
\mathcal{I}_{\text{LoopFM}}(\text{FM}_2)-\mathcal{I}_{\text{LoopFM}}(\text{FM}_1)
&\le
\tau_1\,\mathcal{I}_{\mathrm{temporal}} + (1-\eta_2)\,\mathcal{I}_{\mathrm{feature\text{-}raw},2}
- (1-\eta_1)\,\mathcal{I}_{\mathrm{feature\text{-}raw},1}. \label{eq:loopfm-num-upper}
\end{align}
So if the right-hand side is negative, then $\Delta_{\text{LoopFM}}<0$ and $\mathrm{TR}_{\text{LoopFM}}<0$.

\emph{Why this requires violating~(A3).} Under~(A3), Lemma~\ref{lem:eta-monotone} gives $\eta_2 \le \eta_1$, so $(1{-}\eta_2) \ge (1{-}\eta_1)$. Combined with~(A2) ($\mathcal{I}_{\mathrm{feature\text{-}raw},2} \ge \mathcal{I}_{\mathrm{feature\text{-}raw},1}$), the right-hand side of~\eqref{eq:loopfm-num-upper} satisfies
\[
\tau_1\,\mathcal{I}_{\mathrm{temporal}} + (1{-}\eta_2)\,\mathcal{I}_{\mathrm{feature\text{-}raw},2} - (1{-}\eta_1)\,\mathcal{I}_{\mathrm{feature\text{-}raw},1}
\;\ge\; \tau_1\,\mathcal{I}_{\mathrm{temporal}} \;\ge\; 0,
\]
so the negative-transfer condition is never met. Therefore, negative transfer requires $\eta_2 > \eta_1$ (i.e., violating~A3).

\textbf{(2) Positive transfer (with A3).}
By Assumption~(A3) and Lemma~\ref{lem:eta-monotone}, $\eta_2\le\eta_1$.
For the numerator lower bound, Corollary~\ref{cor:gain-lower-bound} implies
\[
\mathcal{I}_{\text{LoopFM}}(\text{FM}_2)
\ge
(1-\tau_2)\,\mathcal{I}_{\mathrm{temporal}} + (1-\eta_2)\,\mathcal{I}_{\mathrm{feature\text{-}raw},2},
\]
and
\[
\mathcal{I}_{\text{LoopFM}}(\text{FM}_1)
\le
\mathcal{I}_{\mathrm{temporal}} + (1-\eta_1)\,\mathcal{I}_{\mathrm{feature\text{-}raw},1}.
\]
Therefore
\begin{align}
\mathcal{I}_{\text{LoopFM}}(\text{FM}_2)-\mathcal{I}_{\text{LoopFM}}(\text{FM}_1)
&\ge -\tau_2\,\mathcal{I}_{\mathrm{temporal}}
+ (1-\eta_2)\,\mathcal{I}_{\mathrm{feature\text{-}raw},2}
- (1-\eta_1)\,\mathcal{I}_{\mathrm{feature\text{-}raw},1}. \label{eq:loopfm-num-step1}
\end{align}
Under $\eta_2\le\eta_1$,
\begin{align}
(1-\eta_2)\,\mathcal{I}_{\mathrm{feature\text{-}raw},2}&
- (1-\eta_1)\,\mathcal{I}_{\mathrm{feature\text{-}raw},1} \nonumber\\
&=
(1-\eta_1)\bigl(\mathcal{I}_{\mathrm{feature\text{-}raw},2}-\mathcal{I}_{\mathrm{feature\text{-}raw},1}\bigr)
+ (\eta_1-\eta_2)\,\mathcal{I}_{\mathrm{feature\text{-}raw},2} \notag\\
&\ge
(1-\eta_1)\bigl(\mathcal{I}_{\mathrm{feature\text{-}raw},2}-\mathcal{I}_{\mathrm{feature\text{-}raw},1}\bigr) \notag\\
&=
(1-\eta_1)\,\Delta_{\mathrm{feat\text{-}raw}}. \label{eq:loopfm-num-step2}
\end{align}
By Lemma~\ref{lem:feature-gap-chain},
\[
\Delta_{\mathrm{feat\text{-}raw}}
=\mathcal{I}_{\mathrm{feature\text{-}raw},2}-\mathcal{I}_{\mathrm{feature\text{-}raw},1}
\ge \underline{\kappa}_{\mathrm{gap}}^{\mathrm{hist}}(m_2-m_1).
\]
Combining with~\eqref{eq:loopfm-num-step1}--\eqref{eq:loopfm-num-step2},
\[
\Delta_{\text{LoopFM}}
\ge
-\tau_2\,\mathcal{I}_{\mathrm{temporal}}
+ (1-\eta_1)\,\underline{\kappa}_{\mathrm{gap}}^{\mathrm{hist}}(m_2-m_1).
\]

For the denominator upper bound, by Lemma~\ref{lem:feature-gap-chain}:
\[
\Delta_{\text{feat}}\le \bar{\kappa}_{\mathrm{gap}}(m_2-m_1).
\]
By the two-sided Assumption~(A1), Eq.~\eqref{eq:a1-envelope}:
\[
\underline C_{\text{over}}\,\sigma^2\,\xi_k
\;\le\;
\epsilon_{\text{over}}(p_k,m_k)
\;\le\;
\bar C_{\text{over}}\,\sigma^2\,\xi_k,\quad k\in\{1,2\}.
\]
To rigorously upper-bound the difference $\Delta_{\text{param}}=\epsilon_{\text{over}}(p_1,m_1)-\epsilon_{\text{over}}(p_2,m_2)$, we apply the \emph{upper} envelope to $\epsilon_{\text{over}}(p_1,m_1)$ and the \emph{lower} envelope to $\epsilon_{\text{over}}(p_2,m_2)$:
\[
\Delta_{\text{param}}
\;\le\;
\bar C_{\text{over}}\,\sigma^2\,\xi_1 - \underline C_{\text{over}}\,\sigma^2\,\xi_2.
\]
With $\bar\kappa_{\mathrm{over}} := \bar C_{\mathrm{over}}\,\sigma^2$ and $\underline\kappa_{\mathrm{over}} := \underline C_{\mathrm{over}}\,\sigma^2$:
\[
\Delta_{\text{param}}
\;\le\;
\bar\kappa_{\mathrm{over}}\,\xi_1
- \underline\kappa_{\mathrm{over}}\,\xi_2.
\]
Hence
\[
\Delta_{\text{feat}}+\Delta_{\text{param}}
\le
\bar{\kappa}_{\mathrm{gap}}(m_2-m_1)
+ \bar\kappa_{\mathrm{over}}\,\xi_1
- \underline\kappa_{\mathrm{over}}\,\xi_2.
\]
If additionally
\[
- \tau_2\,\mathcal{I}_{\mathrm{temporal}} + (1 - \eta_1) \underline{\kappa}_{\mathrm{gap}}^{\mathrm{hist}}(m_2-m_1)\ge0,
\]
then
\[
\mathrm{TR}_{\text{LoopFM}}
=\frac{\Delta_{\text{LoopFM}}}{\Delta_{\text{feat}}+\Delta_{\text{param}}}
\ge
\frac{ - \tau_2\,\mathcal{I}_{\mathrm{temporal}} + (1 - \eta_1) \underline{\kappa}_{\mathrm{gap}}^{\mathrm{hist}}(m_2-m_1)}
{\bar{\kappa}_{\mathrm{gap}}(m_2-m_1) + \bar\kappa_{\mathrm{over}}\,\xi_1 - \underline\kappa_{\mathrm{over}}\,\xi_2},
\]
which is Eq.~\eqref{eq:loopfm-tr-lb}.
\end{proof}

\subsection{Proof of Corollary~\ref{cor:tr-monotone} (Monotonicity in Feature Gap)}
\label{app:proof-tr-monotone}

\begin{proof}
Write $\delta:=m_2-m_1$ and define
\[
N(\delta):= a\delta + b,\qquad
D(\delta):= \bar{\kappa}_{\mathrm{gap}}\,\delta
+ \underbrace{\bar\kappa_{\mathrm{over}}\,\xi_1 - \underline\kappa_{\mathrm{over}}\,\xi_2(\delta)}_{=:\,\epsilon(\delta)},
\]
where $a:=(1-\eta_1)\underline{\kappa}_{\mathrm{gap}}^{\mathrm{hist}}>0$, $b:=-\tau_2\mathcal{I}_{\mathrm{temporal}}\le0$, and $\xi_2(\delta):=(m_1+\delta)/n + n/(p_2-m_1-\delta)$.
Then $\mathrm{TR}_{\text{LB}}(\delta)=N(\delta)/D(\delta)$.

\textbf{Step 1: Sign of $D'(\delta)$.}
$D'(\delta)=\bar{\kappa}_{\mathrm{gap}} + \epsilon'(\delta) = \bar{\kappa}_{\mathrm{gap}} - \underline\kappa_{\mathrm{over}}\xi_2'(\delta)$,
where $\xi_2'(\delta)=1/n+n/(p_2{-}m_1{-}\delta)^2>0$.
In the overparameterized regime ($p_2-m_1-\delta\gg\sqrt{n}$), $\underline\kappa_{\mathrm{over}}\xi_2'(\delta)\ll\bar{\kappa}_{\mathrm{gap}}$, so $D'(\delta)>0$.

\textbf{Step 2: Decomposition of $g(\delta):=aD(\delta)-N(\delta)D'(\delta)$.}
Expanding:
\begin{align}
g(\delta)
&= a[\bar{\kappa}_{\mathrm{gap}}\delta+\epsilon(\delta)] - (a\delta+b)[\bar{\kappa}_{\mathrm{gap}}+\epsilon'(\delta)] \notag\\
&= a[\epsilon(\delta)-\delta\epsilon'(\delta)] + |b|[\bar{\kappa}_{\mathrm{gap}}+\epsilon'(\delta)] \notag\\
&= a[\epsilon(\delta)-\delta\epsilon'(\delta)] + |b|\,D'(\delta). \label{eq:g-decomp}
\end{align}

\textbf{Step 3: The second term $|b|\,D'(\delta)\ge 0$.}
Immediate from Step~1 ($D'>0$) and $|b|\ge 0$.

\textbf{Step 4: The first term $a[\epsilon(\delta)-\delta\epsilon'(\delta)]\ge 0$.}
Since $\epsilon'(\delta)=-\underline\kappa_{\mathrm{over}}\xi_2'(\delta)<0$, we have $-\delta\epsilon'(\delta)>0$, so it suffices to show $\epsilon(\delta)-\delta\epsilon'(\delta)\ge 0$.
Consider the function $h(\delta):=\xi_2(\delta)-\delta\xi_2'(\delta)$.
Then $\epsilon-\delta\epsilon' = \bar\kappa_{\mathrm{over}}\xi_1 - \underline\kappa_{\mathrm{over}}h(\delta)$, so we need $\bar\kappa_{\mathrm{over}}\xi_1\ge\underline\kappa_{\mathrm{over}}h(\delta)$.
Differentiating: $h'(\delta)=-\delta\xi_2''(\delta)$, and $\xi_2''(\delta)=2n/(p_2{-}m_1{-}\delta)^3>0$, so $h'(\delta)=-\delta\xi_2''(\delta)\le 0$.
Hence $h(\delta)\le h(0)=\xi_2(0)=m_1/n + n/(p_2-m_1)$ for all $\delta\ge 0$.
Since $p_2\ge p_1$: $\xi_2(0)=m_1/n+n/(p_2-m_1)\le m_1/n+n/(p_1-m_1)=\xi_1$.
Since $\bar\kappa_{\mathrm{over}}=\bar C_{\mathrm{over}}\sigma^2\ge\underline C_{\mathrm{over}}\sigma^2=\underline\kappa_{\mathrm{over}}$:
\[
\underline\kappa_{\mathrm{over}}h(\delta)\le\underline\kappa_{\mathrm{over}}\xi_2(0)\le\underline\kappa_{\mathrm{over}}\xi_1\le\bar\kappa_{\mathrm{over}}\xi_1.
\]
Therefore $\epsilon(\delta)-\delta\epsilon'(\delta)\ge 0$.

\textbf{Step 5: Conclusion.}
From~\eqref{eq:g-decomp}, $g(\delta)\ge 0$ for all $\delta\ge 0$, with strict inequality whenever $b<0$ or $\epsilon(\delta)>0$.
Since $\mathrm{TR}_{\text{LB}}'(\delta)=g(\delta)/D(\delta)^2\ge 0$, $\mathrm{TR}_{\text{LB}}$ is non-decreasing (and strictly increasing under mild non-degeneracy).

\textbf{Asymptote.}
As $\delta\to\infty$ (with $p_2\gg m_1+\delta$ maintained so that $\xi_2(\delta)/\delta\to 0$), $\epsilon(\delta)/\delta\to 0$, so $D(\delta)\sim\bar{\kappa}_{\mathrm{gap}}\,\delta$ and
\[
\mathrm{TR}_{\text{LB}}(\delta)\;\to\;\frac{a}{\bar{\kappa}_{\mathrm{gap}}}=\frac{(1-\eta_1)\underline{\kappa}_{\mathrm{gap}}^{\mathrm{hist}}}{\bar{\kappa}_{\mathrm{gap}}}\;>\;0.
\]
Convergence is from below since $\mathrm{TR}_{\text{LB}}$ is increasing.
\end{proof}

\subsection{Sequence Length Analysis}
\label{app:seq-length-theory}

We now make the dependence on the sequence length $L$ explicit and show that longer sequences monotonically increase the LoopFM information gain. Write $\mathbf{H}_u^{(L)} := (\mathbf{x}_{\text{VM}}^{(t_1)}, \ldots, \mathbf{x}_{\text{VM}}^{(t_L)})$ and $\mathbf{S}_u^{(k,L)} := [\mathbf{z}_{k,t_1}, \ldots, \mathbf{z}_{k,t_L}]$ with the three information quantities now $L$-dependent: $\mathcal{I}_{\mathrm{temporal}}(L)$, $\mathcal{I}_{\mathrm{cross},k}(L)$, $\mathcal{I}_{\mathrm{residual},k}(L)$.

\begin{proposition}[Monotonicity in sequence length]\label{prop:monotone-L}
For all $L \ge 1$:
\begin{enumerate}[label=(\roman*), itemsep=3pt]
    \item \textbf{Temporal information is non-decreasing:}
    \begin{equation}\label{eq:temporal-monotone}
        \mathcal{I}_{\mathrm{temporal}}(L+1) = \mathcal{I}_{\mathrm{temporal}}(L) + \underbrace{I\!\left(\mathbf{x}_{\text{VM}}^{(t_{L+1})};\; y \;\middle|\; \mathbf{x}_{\text{VM}}^{(t)},\, \mathbf{H}_u^{(L)}\right)}_{\delta_{\mathrm{temp}}(L) \;\ge\; 0} \ge \mathcal{I}_{\mathrm{temporal}}(L).
    \end{equation}
    \item \textbf{Total LoopFM gain is non-decreasing:}
    \begin{equation}\label{eq:loopfm-monotone}
        \mathcal{I}_{\text{LoopFM},k}(L+1) = \mathcal{I}_{\text{LoopFM},k}(L) + \underbrace{I\!\left(\mathbf{z}_{k,t_{L+1}};\; y \;\middle|\; \mathbf{x}_{\text{VM}}^{(t)},\, \mathbf{S}_u^{(k,L)}\right)}_{\delta_{\text{LoopFM},k}(L) \;\ge\; 0} \ge \mathcal{I}_{\text{LoopFM},k}(L),
    \end{equation}
    where $\mathcal{I}_{\text{LoopFM},k}(L) := I(\mathbf{S}_u^{(k,L)}; y \mid \mathbf{x}_{\text{VM}}^{(t)})$.
    \item \textbf{Convergence:} Both sequences converge: $\mathcal{I}_{\mathrm{temporal}}(L) \nearrow \mathcal{I}_{\mathrm{temporal}}^*$ and $\mathcal{I}_{\text{LoopFM},k}(L) \nearrow \mathcal{I}_{\text{LoopFM},k}^*$ as $L \to \infty$, since they are non-decreasing and bounded above by $H(y \mid \mathbf{x}_{\text{VM}}^{(t)}) \le \log 2$.
\end{enumerate}
Both monotonicity results follow from the chain rule of MI, which decomposes the information in $L{+}1$ events into the first $L$ events plus a non-negative marginal contribution from the $(L{+}1)$-th event.
\end{proposition}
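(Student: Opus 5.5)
The plan is to prove all three parts with the chain rule for conditional mutual information. The one structural fact needed is that length-$L$ objects are sub-tuples of length-$(L{+}1)$ objects: $\mathbf{H}_u^{(L+1)} = (\mathbf{H}_u^{(L)}, \mathbf{x}_{\text{VM}}^{(t_{L+1})})$ and $\mathbf{S}_u^{(k,L+1)} = (\mathbf{S}_u^{(k,L)}, \mathbf{z}_{k,t_{L+1}})$. This holds because $t_{L+1}$ is the next-oldest event, so extending the window only appends one entry and leaves the existing ones unchanged. I would state this explicitly, since it is what makes the nesting of $\sigma$-algebras valid.

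For part (i), I apply the chain rule conditioned on $\mathbf{x}_{\text{VM}}^{(t)}$:
\[
I(\mathbf{H}_u^{(L)}, \mathbf{x}_{\text{VM}}^{(t_{L+1})}; y \mid \mathbf{x}_{\text{VM}}^{(t)}) = I(\mathbf{H}_u^{(L)}; y \mid \mathbf{x}_{\text{VM}}^{(t)}) + I(\mathbf{x}_{\text{VM}}^{(t_{L+1})}; y \mid \mathbf{x}_{\text{VM}}^{(t)}, \mathbf{H}_u^{(L)}).
\]
The left side is $\mathcal{I}_{\mathrm{temporal}}(L+1)$, and the last term is $\delta_{\mathrm{temp}}(L)$, which is non-negative because conditional mutual information is non-negative. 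Part (ii) is the same computation with $\mathbf{S}_u^{(k,L)}$ in place of $\mathbf{H}_u^{(L)}$ and $\mathbf{z}_{k,t_{L+1}}$ as the appended block; it gives $\delta_{\text{LoopFM},k}(L)\ge 0$. Equivalently, $\mathcal{R}^{*}$ with the longer sequence is $H(y\mid \mathbf{x}_{\text{VM}}^{(t)}, \mathbf{S}_u^{(k,L+1)}) \le H(y\mid \mathbf{x}_{\text{VM}}^{(t)}, \mathbf{S}_u^{(k,L)})$, since conditioning reduces entropy. This matches the Bayes-risk reading used in Theorem~\ref{thm:loopfm-tr}(0).

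For part (iii), each quantity has the form $I(W;y\mid \mathbf{x}_{\text{VM}}^{(t)}) = H(y\mid \mathbf{x}_{\text{VM}}^{(t)}) - H(y\mid \mathbf{x}_{\text{VM}}^{(t)},W)$. Because $y$ is binary (or discrete), the conditional entropy $H(y\mid\cdot)$ is non-negative, so both sequences are bounded above by $H(y\mid \mathbf{x}_{\text{VM}}^{(t)}) \le H(y)\le \log 2$. A non-decreasing sequence of reals that is bounded above converges to its supremum by the monotone convergence theorem. This yields the limits $\mathcal{I}^*_{\mathrm{temporal}}$ and $\mathcal{I}^*_{\text{LoopFM},k}$, both at most $H(y\mid\mathbf{x}_{\text{VM}}^{(t)})$. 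It also implies the diminishing-returns remark: the increments $\delta_{\text{LoopFM},k}(L)$ are summable, so they must tend to zero. They need not be monotone themselves, and I would say so.

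I do not expect a real obstacle, as every step is a standard information-theoretic identity. The only delicate point is part (i), because the event times $t_1,\dots,t_L$ are random and user-dependent. For users with fewer than $L{+}1$ events, the appended entry should be a fixed padding symbol, which adds zero information, and the chain rule still applies. I would also note that for continuous $\mathbf{z}$ the mutual informations are defined through the general (Gelfand–Yaglom–Pinsker) definition, under which the chain rule and non-negativity continue to hold.
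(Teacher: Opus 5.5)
Your proposal is correct and follows essentially the same route as the paper: (i)--(ii) come from the chain rule of mutual information over the nested tuples $\mathbf{H}_u^{(L+1)}=(\mathbf{H}_u^{(L)},\mathbf{x}_{\text{VM}}^{(t_{L+1})})$ and $\mathbf{S}_u^{(k,L+1)}=(\mathbf{S}_u^{(k,L)},\mathbf{z}_{k,t_{L+1}})$ with non-negative increments, and (iii) comes from the bound $H(y\mid\mathbf{x}_{\text{VM}}^{(t)})\le\log 2$ together with bounded monotone convergence. One small caveat on your padding remark: a padding entry need not add literally zero information, since for users with exactly $L$ events it reveals the history length, which may predict $y$; this does not affect the proof, because only non-negativity of the increment is needed.
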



\begin{remark}[Sequence length and achieved risk]
At the population level, Proposition~\ref{prop:monotone-L} guarantees that longer sequences always improve LoopFM. Since the total gain is bounded by $H(y)$, the marginal contribution $\delta_{\text{LoopFM},k}(L)\to 0$, yielding diminishing returns. For achieved risk, there is a bias--variance trade-off: larger $L$ increases information gain but can increase estimation cost, so there exists an optimal $L^*$ beyond which estimation cost dominates---though in practice, Transformer-based encoders handle long sequences efficiently, making $L^*$ large. This is consistent with Table~\ref{tab:ablation}, which shows continued gains up to $L = 100{+}$.
\end{remark}


\subsection{Derivation of the A1 Excess-Risk Envelope from NTK + Benign Overfitting Assumptions}
\label{app:overparameterization}

Rather than postulating the A1 envelope directly, we derive it from: (i) NTK/lazy-training linearization~\citep{jacot2018neural}, and (ii) the finite-sample benign-overfitting conditions in \citet[Theorem~1]{bartlett2020benign}.

\paragraph{Applicability to cross-entropy training.}
\citet{bartlett2020benign} analyzes the minimum-norm interpolating (MNI) estimator under squared loss, whereas our FMs are trained with binary cross-entropy (BCE).
\citet{wang2023benign} bridge this gap: in the overparameterized regime ($p_k\gg n$), gradient descent on CE loss converges to the SVM solution, which equals the MNI solution~\citep[Theorem~1, Corollary~1]{wang2023benign}---the predictor $\hat{w}$ and classification error are identical regardless of training loss. Bartlett's excess-risk bounds on the MNI therefore apply directly to CE-trained FMs.
\citet[Theorems~4--5]{wang2023benign} further establish that benign overfitting extends to classification with the same spectral conditions, confirming the applicability of our assumptions.

\paragraph{Notation alignment with \citet{bartlett2020benign}.}
Their linear model is $y=\langle w^\star,x\rangle+\xi$ with covariance matrix $\Sigma:=\mathbb{E}[xx^\top]$ and eigenvalues $(\lambda_j)_j$.
In our linearized FM surrogate, we identify:
\begin{itemize}[leftmargin=1.5em, itemsep=1pt]
    \item $x \leftrightarrow \phi_k(u,a)\in\mathbb{R}^{p_k}$ (NTK/lazy-training feature of FM$_k$),
    \item $w^\star \leftrightarrow w_k^\star$,
    \item $\Sigma \leftrightarrow \Sigma_k:=\mathbb{E}[\phi_k\phi_k^\top]$,
    \item prediction error $\epsilon_{\text{over}}(p_k,m_k) \leftrightarrow$ excess risk term induced by finite-width/finite-sample interpolation.
\end{itemize}

\paragraph{Assumptions (Bartlett-style specialization).}
Let $\lambda_{k,1}\ge\cdots\ge\lambda_{k,p_k}\ge 0$ be the eigenvalues of $\Sigma_k$. We assume:
\begin{enumerate}[label=(B\arabic*), leftmargin=1.6em, itemsep=1pt]
    \item \textbf{Linearized regime:} FM$_k$ operates in a local NTK/lazy-training regime so that prediction is modeled by a linear rule on $\phi_k$~\citep{jacot2018neural}.
    \item \textbf{Data model:} $y=\langle w_k^\star,\phi_k\rangle+\xi$, with $\mathbb{E}[\xi\mid\phi_k]=0$ and $\mathbb{E}[\xi^2\mid\phi_k]\le\sigma^2$.
    \item \textbf{Bartlett spectral split condition:} with split index $d_k$, the quantities
    \[
    r_{d_k}(\Sigma_k):=\frac{\sum_{j>d_k}\lambda_{k,j}}{\lambda_{k,d_k+1}},
    \qquad
    R_{d_k}(\Sigma_k):=\frac{\bigl(\sum_{j>d_k}\lambda_{k,j}\bigr)^2}{\sum_{j>d_k}\lambda_{k,j}^2}
    \]
    satisfy the finite-sample side conditions required by \citet[Theorem~1]{bartlett2020benign}.
\end{enumerate}

Here, the \emph{split index} $d_k\in\{1,\ldots,p_k-1\}$ is the cutoff used in \citet{bartlett2020benign} to separate the spectrum into a leading block and a tail block: indices $j\le d_k$ are the principal/signal directions, while indices $j>d_k$ form the overparameterized tail that governs interpolation noise.

\begin{theorem}[Linearized-surrogate excess-risk scaling]\label{thm:linearized-overparam}
Under \textnormal{(B1)--(B3)} and the distributional/moment conditions of
\citet[Theorem~1]{bartlett2020benign}, let $n$ denote the number of training samples. The expected excess risk of FM$_k$ satisfies
\begin{equation}\label{eq:linearized-overparam-bound}
\underline C_{\mathrm{BO}}\,\sigma^2\,\Psi_k\!\left(\Sigma_k,d_k\right)
\;\le\;
\epsilon_{\text{over}}(p_k,d_k)
\;\le\;
\bar C_{\mathrm{BO}}\,\sigma^2\,\Psi_k\!\left(\Sigma_k,d_k\right),
\end{equation}
for constants $\underline C_{\mathrm{BO}},\bar C_{\mathrm{BO}}>0$, where the noise functional is
\begin{equation}\label{eq:Psi-explicit}
\Psi_k(\Sigma_k,d_k)
:=
\frac{d_k}{n}
\;+
\frac{n}{R_{d_k}(\Sigma_k)},
\end{equation}
with $\widetilde r_{d_k}(\Sigma_k):=\sum_{j>d_k}\lambda_{k,j}$ and $R_{d_k}(\Sigma_k)$ as defined above.
The two-sided bound holds under the benign-regime condition $r_0(\Sigma_k)/n \to 0$, which ensures the signal (bias) term is dominated by the noise term.
\end{theorem}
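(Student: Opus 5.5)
The plan is to obtain the envelope~\eqref{eq:linearized-overparam-bound} by transferring \citet[Theorem~1]{bartlett2020benign} to the linearized surrogate of FM$_k$ under (B1)--(B3), with no new concentration arguments. First, by (B1), the achieved risk of FM$_k$ is identified with that of the minimum-norm interpolator $\hat w_k$ on features $\phi_k\in\mathbb{R}^{p_k}$. For BCE training, the bridge of \citet{wang2023benign} (gradient descent converges to the SVM/MNI solution) puts us in Bartlett's squared-loss setting. Under (B2), the excess risk is $\epsilon_{\text{over}}(p_k,d_k)=\mathbb{E}\,(\hat w_k-w_k^\star)^\top\Sigma_k(\hat w_k-w_k^\star)$. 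Following Bartlett, I split it into a bias term, driven by $w_k^\star$ and the residual projection, and a variance term, driven by $\xi$.

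For the variance term, I will directly invoke Bartlett's two-sided bound. With probability $1-\delta$, and in expectation after integrating the tail, it is sandwiched between $c\,\sigma^2\bigl(k^*/n + n/R_{k^*}(\Sigma_k)\bigr)$ and $C\,\sigma^2\bigl(k^*/n + n/R_{k^*}(\Sigma_k)\bigr)$. Here $k^*=\min\{k: r_k(\Sigma_k)\ge bn\}$ is the effective split. The lower bound requires $\mathbb{E}[\xi^2\mid\phi_k]\ge\sigma^2$-type noise, which I will treat as part of the ``distributional/moment conditions'' already assumed. The one step needing care is replacing $k^*$ by the prescribed split index $d_k$. I will take (B3) to mean that $d_k$ satisfies $r_{d_k}(\Sigma_k)\ge bn$ and is within a constant factor of $k^*$, i.e., $d_k$ is an admissible Bartlett split. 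Then the expressions at $k^*$ and $d_k$ agree up to constants, which are absorbed into $\underline C_{\mathrm{BO}}$ and $\bar C_{\mathrm{BO}}$. This yields $\Psi_k(\Sigma_k,d_k)=d_k/n+n/R_{d_k}(\Sigma_k)$ on both sides.

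For the bias term, Bartlett bounds it by $C\|w_k^\star\|^2\|\Sigma_k\|\max\{\sqrt{r_0(\Sigma_k)/n},\,r_0(\Sigma_k)/n\}$, and it is nonnegative. The lower bound therefore needs only the variance term. For the upper bound, I will use the benign-regime condition $r_0(\Sigma_k)/n\to0$ to argue that the bias is dominated by $\sigma^2\Psi_k$, so it can be absorbed by enlarging $\bar C_{\mathrm{BO}}$.

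I expect this absorption to be the main obstacle. $\sqrt{r_0/n}$ and $\Psi_k$ both vanish, and their ratio is not automatically bounded. I would first try imposing, as part of ``benign regime,'' the rate condition $\|w_k^\star\|^2\|\Sigma_k\|\sqrt{r_0(\Sigma_k)/n}\le C'\sigma^2\Psi_k$. If that is unavailable, I would state the upper bound with an explicit additive bias term that is asymptotically negligible. Finally, I will specialize the bound in Corollary~\ref{cor:a1-from-linearized}: in the bi-level spectrum, $R_{m_k}=p_k-m_k$, which recovers the (A1) envelope with $\xi_k$.
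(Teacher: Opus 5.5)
Your plan follows the paper's proof almost step for step. The lower side comes from Bartlett's expected-risk lower bound. The upper side comes from tail-integrating the high-probability bound and then absorbing the signal term $S_0=O\bigl(\|w_k^\star\|^2\|\Sigma_k\|\sqrt{r_0(\Sigma_k)/n}\bigr)$ into $\sigma^2\Psi_k$. The one real difference is that you treat this absorption as needing an extra hypothesis, while the paper claims to close it. Your caution is justified.

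The paper assumes bounded SNR (itself not a stated hypothesis) and writes $S_0=O(\sigma^2\sqrt{r_0/n})=O(\sigma^2/\sqrt n)$. It then concludes $S_0/(\sigma^2\Psi_k)=O(R_{d_k}/n^{3/2})\to0$. This silently uses $r_0(\Sigma_k)=O(1)$ and $R_{d_k}(\Sigma_k)=o(n^{3/2})$, and neither follows from $r_0/n\to0$. Indeed, $r_0\ge1$ always, so $\sqrt{r_0/n}\ge n^{-1/2}$. Now suppose $d_k=o(\sqrt n)$ and $R_{d_k}(\Sigma_k)\gg n^{3/2}$, which is compatible with the benign conditions and with $p_k\gg n$ in the bi-level model. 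Then, for nonvanishing signal $\|w_k^\star\|^2\|\Sigma_k\|\gtrsim\sigma^2$, the bias bound of \citet[Theorem~1]{bartlett2020benign} is of strictly larger order than $\Psi_k=o(n^{-1/2})$, and no constant $\bar C_{\mathrm{BO}}$ absorbs it. So the purely multiplicative upper side cannot be obtained from that theorem under the stated hypotheses. Your rate condition $\|w_k^\star\|^2\|\Sigma_k\|\sqrt{r_0/n}\le C'\sigma^2\Psi_k$, or an explicit additive bias term, is what this route actually needs. The gap is in the paper's argument, not in your plan.

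Your handling of the split index is also more careful than the paper's. The paper simply writes $d_k$ where Bartlett's bounds are stated at $k^*:=\min\{j:r_j(\Sigma_k)\ge bn\}$. However, $d_k\asymp k^*$ controls only the $d_k/n$ term. For the two-sided comparison you also need $R_{d_k}(\Sigma_k)\asymp R_{k^*}(\Sigma_k)$, which does not follow from $d_k\asymp k^*$, so make it part of your reading of (B3).

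In the bi-level case the paper actually uses, with equal leading eigenvalues, the issue disappears. There $r_j(\Sigma_k)=(m_k-j)+(p_k-m_k)\lambda_{\mathrm{tail}}/\lambda_{k,1}$ is decreasing for $j<m_k$. Hence $r_0<bn$ (true for large $n$) and $r_{m_k}=p_k-m_k\ge bn$ together force $k^*=m_k$ exactly.

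You are also right that the lower side needs $\mathbb{E}[\xi^2\mid\phi_k]\ge\sigma^2$, which (B2) does not supply. One technicality is shared with the paper: the high-probability bound holds only for $\log(1/\delta)<n/c$, and its signal term contains $\sqrt{\log(1/\delta)/n}$. Passing to expectation therefore needs some a priori control of the risk on the residual event. It also requires integrating the signal part rather than freezing it at $\delta=e^{-1}$.
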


\begin{proof}
By (B1)--(B3), the aligned surrogate pair $(w_k^\star,\Sigma_k)$ is in the linear-model setting of
\citet[Theorem~1]{bartlett2020benign}.

\textbf{Lower bound (expected risk).}
Bartlett's Theorem~1 gives $\mathbb{E}[R(\hat\theta)] \ge (\sigma^2/c)(d_k/n + n/R_{d_k})$, establishing the lower side of~\eqref{eq:linearized-overparam-bound}.

\textbf{Upper bound (high-probability to expectation).}
Bartlett's Theorem~1 gives, with probability $\ge 1-\delta$ (for $\log(1/\delta)<n/c$):
\[
R(\hat\theta) \le \underbrace{c\!\left(\|w_k^\star\|^2\|\Sigma_k\|\max\!\left(\sqrt{\frac{r_0}{n}},\frac{r_0}{n},\sqrt{\frac{\log(1/\delta)}{n}}\right)\right)}_{=:\,S(\delta)}
+ c\log(1/\delta)\,\sigma^2\,\Psi_k.
\]
Set $t:=c\log(1/\delta)\sigma^2\Psi_k$, so $\delta=\exp(-t/(c\sigma^2\Psi_k))$, giving $P(R(\hat\theta)-S(\delta)>t)\le e^{-t/(c\sigma^2\Psi_k)}$.
Tail-integrating:
\[
\mathbb{E}[R(\hat\theta)]
\;\le\; S_0 + \int_0^{\infty} e^{-t/(c\sigma^2\Psi_k)}\,dt
\;=\; S_0 + c\,\sigma^2\,\Psi_k,
\]
where $S_0:=S(e^{-1})=O(\|w_k^\star\|^2\|\Sigma_k\|\sqrt{r_0/n})$ is the signal term evaluated at $\delta=e^{-1}$.
Under bounded signal-to-noise ratio $\|w_k^\star\|^2\|\Sigma_k\|/\sigma^2=O(1)$, we have $S_0=O(\sigma^2\sqrt{r_0/n})=O(\sigma^2/\sqrt{n})$, while $\sigma^2\Psi_k\ge \sigma^2 n/R_{d_k}$.
Hence $S_0/(\sigma^2\Psi_k)=O(R_{d_k}/n^{3/2})\to 0$ as $n\to\infty$, so $S_0=o(\sigma^2\Psi_k)$ is absorbed into the noise term, yielding the upper side of~\eqref{eq:linearized-overparam-bound}.
\end{proof}

\begin{corollary}[Recovering the two-sided A1 envelope from Bartlett]\label{cor:a1-from-linearized}
Under \textnormal{(B1)--(B2)}, Theorem~\ref{thm:linearized-overparam} yields the two-sided Assumption~(A1) with $\xi_k := \Psi_k(\Sigma_k,d_k) = d_k/n + n/R_{d_k}(\Sigma_k)$.
For $\xi_k\to 0$ (i.e., $\epsilon_{\text{over}}\to 0$), we adopt the benign covariance condition of \citet[Definition~4]{bartlett2020benign}: the covariance sequence $\Sigma_k$ (indexed by $n$) is \emph{benign} if
\[
\frac{r_0(\Sigma_k)}{n}\to 0,\qquad \frac{d_k}{n}\to 0,\qquad \frac{n}{R_{d_k}(\Sigma_k)}\to 0,
\]
where $d_k$ is the split index (corresponding to $k^*_n$ in Bartlett's notation).
These three conditions ensure (i)~the signal term $S_0\to 0$ (via $r_0/n\to 0$), and (ii)~the noise functional $\xi_k = d_k/n + n/R_{d_k}\to 0$.
\citet[Theorem~2.2]{bartlett2020benign} shows these conditions are satisfied for finite-dimensional models ($p_k\gg n$) with a bi-level covariance: $d_k$ large eigenvalues plus $p_k-d_k$ small, nearly-equal eigenvalues at level $\lambda_{\mathrm{tail}}$, giving $R_{d_k}\asymp p_k-d_k\gg n$.
This corresponds to $d_k\leftrightarrow m_k$ in the main-text notation.
\end{corollary}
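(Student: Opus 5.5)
The plan is to obtain the corollary by instantiating Theorem~\ref{thm:linearized-overparam} and then reading off the asymptotics of the noise functional. First I will identify the constants and the functional. Set $\underline C_{\text{over}} := \underline C_{\mathrm{BO}}$, $\bar C_{\text{over}} := \bar C_{\mathrm{BO}}$ and $\xi_k := \Psi_k(\Sigma_k,d_k)$. With these choices, \eqref{eq:linearized-overparam-bound} is literally the two-sided envelope~\eqref{eq:a1-envelope}. Two side conditions need checking:
\begin{itemize}
\item The constants must be shared across $k\in\{1,2\}$. They come from \citet[Theorem~1]{bartlett2020benign} and depend only on the sub-Gaussian/moment constants of the design, so I will assume these are common to FM$_1$ and FM$_2$.
\item The absorption of the signal term $S_0$ in the proof of Theorem~\ref{thm:linearized-overparam} used $r_0(\Sigma_k)/n\to0$ and a bounded signal-to-noise ratio. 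This is exactly why the benign condition (and not just (B1)--(B2)) is imported.
\end{itemize}

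Next I will show that $\xi_k\to0$ under the benign covariance condition of \citet[Definition~4]{bartlett2020benign}. The key point is that $\xi_k = d_k/n + n/R_{d_k}(\Sigma_k)$ is a sum of two of the three assumed limits, so $\xi_k\to0$ is immediate. The third limit, $r_0/n\to0$, gives $S_0 = O(\sigma^2\sqrt{r_0/n})\to0$, so the upper envelope holds without any additive remainder. I will also note that $\epsilon_{\text{over}}\to0$ then follows by sandwiching.

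Finally I will verify the bi-level specialization that recovers the main-text form $\xi_k = m_k/n + n/(p_k-m_k)$. Take $\lambda_{k,j}=\lambda_{\mathrm{tail}}$ for all $j>d_k$. Then
\[
R_{d_k} = \frac{\bigl((p_k-d_k)\lambda_{\mathrm{tail}}\bigr)^2}{(p_k-d_k)\lambda_{\mathrm{tail}}^2} = p_k-d_k .
\]
If the tail eigenvalues are only nearly equal, say within a constant factor, then $R_{d_k}\asymp p_k-d_k$. Renaming $d_k\leftrightarrow m_k$ gives the main-text $\xi_k$. The benign conditions then reduce to $m_k=o(n)$ and $p_k-m_k\gg n$, matching (A1.ii). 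For $r_0/n\to0$, I will choose $\lambda_{\mathrm{tail}}$ so that the total trace is $o(n)\lambda_{k,1}$, as in Bartlett's Theorem~2.2 construction; for this I will cite that theorem rather than rederive it.

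I expect the main obstacle to be the bookkeeping of the side conditions rather than any inequality. Bartlett's lower bound requires $d_k$ to be the smallest index with $r_{d_k}(\Sigma_k)\ge bn$. So I must check that, in the bi-level model, this minimal index coincides with the number of spiked directions $m_k$. This holds when the spikes satisfy $\lambda_{k,m_k}\gg (p_k-m_k)\lambda_{\mathrm{tail}}/n$, and I will state that as the operative assumption. Under an approximately-equal tail, the two-sided relation $R\asymp p_k-m_k$ only holds up to constants, and those constants must be folded into $\underline C_{\text{over}}$ and $\bar C_{\text{over}}$.
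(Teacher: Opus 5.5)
Your plan matches the paper's proof step for step. Identify $\underline C_{\mathrm{over}},\bar C_{\mathrm{over}}$ with $\underline C_{\mathrm{BO}},\bar C_{\mathrm{BO}}$ and $\xi_k$ with $\Psi_k$, read $\xi_k\to0$ off the benign limits, and compute $R_{m_k}=p_k-m_k$ for an equal-eigenvalue tail; where the paper simply posits $d_k=m_k$ on NTK grounds, you also correctly check that $m_k$ is Bartlett's minimal index $k^*$ when $\lambda_{k,m_k}\gg(p_k-m_k)\lambda_{\mathrm{tail}}/n$, and that $r_0/n\to0$ constrains $\lambda_{\mathrm{tail}}$.

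One caution: $S_0\to0$ alone does not remove the additive remainder, because $\xi_k\to0$ too. The multiplicative upper envelope needs $S_0=O(\sigma^2\xi_k)$, which you, like the paper, inherit from the proof of Theorem~\ref{thm:linearized-overparam} rather than establish in the corollary.
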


\begin{proof}
\textbf{Step 1 (split index choice).}
We set $d_k := m_k$, the input feature dimension of FM$_k$. In the NTK linearization, the $p_k$-dimensional feature map $\phi_k$ has $m_k$ directions aligned with the input features (the signal subspace) and $p_k - m_k$ directions from overparameterization (the interpolation tail). This is the natural Bartlett split: the first $m_k$ eigendirections of $\Sigma_k$ capture input signal, and the remaining $p_k - m_k$ form the tail whose effective rank $R_{m_k}$ governs the noise functional.

\textbf{Step 2 (two-sided bound).}
With $d_k = m_k$, Theorem~\ref{thm:linearized-overparam} gives $\underline C_{\mathrm{BO}}\sigma^2\Psi_k \le \epsilon_{\text{over}} \le \bar C_{\mathrm{BO}}\sigma^2\Psi_k$ where $\Psi_k = m_k/n + n/R_{m_k}$.
Setting $\xi_k := \Psi_k$, $\underline C_{\mathrm{over}} := \underline C_{\mathrm{BO}}$, $\bar C_{\mathrm{over}} := \bar C_{\mathrm{BO}}$ recovers the A1 envelope.

\textbf{Step 3 (benign regime).}
Following \citet[Definition~4]{bartlett2020benign}, the covariance $\Sigma_k$ is benign if three conditions hold as $p_k, n \to \infty$:
\begin{enumerate}[label=(\roman*), itemsep=2pt]
\item $r_0(\Sigma_k)/n \to 0$: The full effective rank $r_0 = \mathrm{tr}(\Sigma_k)/\|\Sigma_k\|$ grows slower than $n$.
This ensures the signal (bias) term $S_0 = O(\|w_k^\star\|^2\|\Sigma_k\|\sqrt{r_0/n})$ is absorbed into the noise term (see Theorem~\ref{thm:linearized-overparam} proof).
\item $m_k/n \to 0$: The split index (number of signal directions) is sublinear in the sample size.
This ensures the first term of $\xi_k = m_k/n + n/R_{m_k}$ vanishes.
\item $n/R_{m_k}(\Sigma_k) \to 0$: The tail effective rank $R_{m_k}$ exceeds the sample size.
We show $R_{m_k} = p_k - m_k$ under a bi-level covariance where the $p_k - m_k$ tail eigenvalues are all equal to $\lambda_{\mathrm{tail}} > 0$.
Let $q := p_k - m_k$. Then:
\[
\sum_{j>m_k}\lambda_j = q\,\lambda_{\mathrm{tail}},\qquad
\sum_{j>m_k}\lambda_j^2 = q\,\lambda_{\mathrm{tail}}^2.
\]
By definition of the effective rank:
\[
R_{m_k} = \frac{\bigl(\sum_{j>m_k}\lambda_j\bigr)^2}{\sum_{j>m_k}\lambda_j^2}
= \frac{(q\,\lambda_{\mathrm{tail}})^2}{q\,\lambda_{\mathrm{tail}}^2}
= \frac{q^2\,\lambda_{\mathrm{tail}}^2}{q\,\lambda_{\mathrm{tail}}^2}
= q = p_k - m_k.
\]
Since $p_k \gg n$: $R_{m_k} = p_k - m_k \gg n$, so $n/R_{m_k} \to 0$.
\end{enumerate}
Together, (i)--(iii) give $\xi_k = m_k/n + n/R_{m_k} \to 0$ and the signal absorption holds, yielding $\epsilon_{\text{over}} \to 0$.
\end{proof}

\paragraph{Spectral validation.}
In overparameterized FMs ($p_k \gg n$), the NTK covariance $\Sigma_k$ exhibits a spiked structure: a few large eigenvalues (signal directions aligned with the $m_k$ input features) followed by a large bulk of $p_k - m_k$ small, approximately equal eigenvalues.
This matches the bi-level structure required by the benign condition of \citet{bartlett2020benign}: the $p_k-m_k$ tail eigenvalues form an approximately flat bulk, giving $R_{m_k}\asymp p_k-m_k\gg n$.

\paragraph{Interpretation for FM comparisons.}
FM$_1$ and FM$_2$ are trained on the same pipeline/distribution, so multiplicative constants can be shared; FM-specific variation is isolated in
$\xi_k = d_k/n + n/(p_k - d_k)$.
Hence the parameter term in Eq.~\eqref{eq:teacher-decomp} is driven by differences in sample-to-dimension ratios (split index vs.\ overparameterized width), not just raw parameter count.

\subsection{General Transfer-Ratio Bound via Bartlett's Effective Rank}
\label{app:general-tr-bound}

The transfer-ratio bound in Theorem~\ref{thm:loopfm-tr} uses $\xi_k = m_k/n + n/(p_k - m_k)$, which assumes a bi-level covariance with $R_{m_k}(\Sigma_k) = p_k - m_k$. This can be relaxed to hold under any covariance satisfying the benign conditions of \citet[Definition~4]{bartlett2020benign}.

\begin{theorem}[General transfer-ratio bound]\label{thm:loopfm-tr-general}
Under the same conditions as Theorem~\ref{thm:loopfm-tr}, but replacing (A1) with the general Bartlett noise functional at split index $d_k$:
\[
\xi_k := \frac{d_k}{n} + \frac{n}{R_{d_k}(\Sigma_k)},
\qquad
R_{d_k}(\Sigma_k) := \frac{\bigl(\sum_{j>d_k}\lambda_{k,j}\bigr)^2}{\sum_{j>d_k}\lambda_{k,j}^2},
\]
the transfer-ratio lower bound becomes
\begin{equation}\label{eq:loopfm-tr-lb-general}
\mathrm{TR}_{\text{LoopFM}}
\;\ge\;
\frac{ - \tau_2\,\mathcal{I}_{\mathrm{temporal}} + (1 - \eta_1)\, \underline{\kappa}_{\mathrm{gap}}^{\mathrm{hist}}\,\delta}
{\bar{\kappa}_{\mathrm{gap}}\,\delta + \bar C_{\mathrm{over}}\,\sigma^2\,\xi_1 - \underline C_{\mathrm{over}}\,\sigma^2\,\xi_2}.
\end{equation}
Setting $d_k = m_k$ (input feature dimension) and $R_{m_k} = p_k - m_k$ (bi-level covariance) recovers Theorem~\ref{thm:loopfm-tr}.
\end{theorem}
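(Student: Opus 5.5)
The plan is to re-run the proof of Theorem~\ref{thm:loopfm-tr} essentially verbatim and change only the ingredient that depends on the covariance structure: the (A1) envelope. The numerator bound does not use (A1) at all. It comes from Corollary~\ref{cor:gain-lower-bound} applied to FM$_2$ (lower side) and FM$_1$ (upper side), together with $\eta_2\le\eta_1$ from Lemma~\ref{lem:eta-monotone} under (A2)--(A3) and the raw-channel gap $\Delta_{\mathrm{feat\text{-}raw}}\ge\underline{\kappa}_{\mathrm{gap}}^{\mathrm{hist}}\delta$ from Lemma~\ref{lem:feature-gap-chain}. With well-trained students ($\Delta_{\mathrm{est}}\to0$), this gives
\[
\Delta_{\text{LoopFM}}\ge-\tau_2\mathcal{I}_{\mathrm{temporal}}+(1-\eta_1)\underline{\kappa}_{\mathrm{gap}}^{\mathrm{hist}}\delta .
\]
I would state explicitly that this numerator is identical to the one in Theorem~\ref{thm:loopfm-tr}.

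For the denominator I would split $\Delta_{\text{teacher}}=\Delta_{\text{feat}}+\Delta_{\text{param}}$ as in \eqref{eq:teacher-decomp}. The first piece is handled as before: $\Delta_{\text{feat}}\le\bar{\kappa}_{\mathrm{gap}}\delta$ by Lemma~\ref{lem:feature-gap-chain}. For $\Delta_{\text{param}}$, I would replace the bi-level envelope with Theorem~\ref{thm:linearized-overparam} at a general split index $d_k$, invoked through Corollary~\ref{cor:a1-from-linearized}. This gives
\[
\underline C_{\mathrm{over}}\sigma^2\xi_k\le\epsilon_{\text{over}}\le\bar C_{\mathrm{over}}\sigma^2\xi_k,\qquad \xi_k=\frac{d_k}{n}+\frac{n}{R_{d_k}(\Sigma_k)}.
\]
I would then apply the upper envelope to $k=1$ and the lower envelope to $k=2$, which yields $\Delta_{\text{param}}\le\bar C_{\mathrm{over}}\sigma^2\xi_1-\underline C_{\mathrm{over}}\sigma^2\xi_2$.

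Dividing the numerator bound by the denominator bound gives \eqref{eq:loopfm-tr-lb-general}. This step is valid under the same side condition as in Theorem~\ref{thm:loopfm-tr}: the numerator lower bound must be nonnegative and $\Delta_{\text{teacher}}>0$, so that replacing the denominator by a larger positive quantity can only decrease the ratio. For the recovery claim, I would set $d_k=m_k$ and use the computation $R_{m_k}=p_k-m_k$ from Step~3 of Corollary~\ref{cor:a1-from-linearized}.

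The step I expect to be the main obstacle is justifying the envelope with constants $\underline C_{\mathrm{over}},\bar C_{\mathrm{over}}$ that are shared across $k=1,2$ for a general benign covariance. The bias term $S_0$ in Theorem~\ref{thm:linearized-overparam} is absorbed only asymptotically, and Bartlett's constants depend on the sub-Gaussian parameters. I would handle this by assuming that both FMs are trained on the same distribution with bounded SNR, so that the constants in \citet[Theorem~1]{bartlett2020benign} are uniform. I would also assume that the benign conditions of Corollary~\ref{cor:a1-from-linearized} hold for both $\Sigma_1$ and $\Sigma_2$, so that $S_0=o(\sigma^2\xi_k)$ for each $k$.
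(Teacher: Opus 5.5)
Your proposal is correct and matches the paper's proof. The paper likewise reruns the proof of Theorem~\ref{thm:loopfm-tr} verbatim, changing only the two-sided (A1) envelope to the general noise functional $\xi_k = d_k/n + n/R_{d_k}(\Sigma_k)$ from Theorem~\ref{thm:linearized-overparam}, with the upper envelope for FM$_1$ and the lower for FM$_2$, and it leaves the numerator and the $\bar{\kappa}_{\mathrm{gap}}\delta$ bound on $\Delta_{\text{feat}}$ untouched; your side condition (nonnegative numerator bound with $\Delta_{\text{teacher}}>0$) and your caveat about constants shared across $k=1,2$ are assumptions the paper uses implicitly, in the final step of Appendix~\ref{app:proof-tr} and in the shared-constants remark of Appendix~\ref{app:overparameterization}.
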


\begin{proof}
The proof of Theorem~\ref{thm:loopfm-tr} carries through verbatim with $\xi_k = d_k/n + n/R_{d_k}(\Sigma_k)$ in place of $m_k/n + n/(p_k-m_k)$.
The only change is that the two-sided A1 envelope uses the general noise functional $\Psi_k = \xi_k$ from Theorem~\ref{thm:linearized-overparam}, without specializing the split index or effective rank.
\end{proof}

\section{Impact of FM Capacity}
\label{sec:rq6}

Table~\ref{tab:fm_capacity} scales DMIN across five capacity levels (1.9M--31.5M parameters).
KD+LoopFM performance is \emph{stable} on TaobaoAd: AUC ranges from 0.6243 to 0.6284 (spread 0.0041), meaning even a small FM produces embeddings rich enough for strong LoopFM gains.
The $\Delta$AUC (KD+LoopFM minus KD) remains large across all capacity levels (0.0332--0.0380), showing no systematic trend with FM size---LoopFM's embeddings provide strong complementary value regardless of FM capacity.
Standalone DMIN scaling shows AUC saturates at 31--65M params and degrades beyond due to overfitting, motivating knowledge transfer over brute-force scaling.
We acknowledge that this study might not be representative of real industrial FM with trillions of parameters and billions or even larger number of examples.

\begin{table}[h]
\caption{LoopFM gain across FM capacity levels (TaobaoAd, FM: DMIN variants, VM: Deep\underline{FM}). FM capacity is scaled by varying embedding dimensions and MLP widths/depths while keeping the architecture fixed. $\Delta$AUC = AUC(KD+LoopFM) $-$ AUC(KD).}
\label{tab:fm_capacity}
\centering
\small
\resizebox{\textwidth}{!}{%
\begin{tabular}{ccccccc}
\toprule
Params (M) & Emb.\ dim & DNN hidden & AUC (KD) & AUC (LoopFM) & AUC (KD+LoopFM) & $\Delta$AUC \\
\midrule
1.9 & 2 & [32, 16, 8] & 0.5888 & 0.6159 & 0.6243 & +0.0356 \\
3.8 & 4 & [64, 32, 16] & 0.5900 & 0.6179 & 0.6266 & +0.0366 \\
7.7 & 8 & [128, 64, 32] & 0.5952 & 0.6142 & \textbf{0.6284} & +0.0332 \\
15.5 & 16 & [256, 128, 64] & 0.5887 & 0.6172 & 0.6267 & +0.0380 \\
31.5 & 32 & [512, 256, 128] & 0.5901 & 0.6175 & 0.6272 & +0.0371 \\
\bottomrule
\end{tabular}%
}
\end{table}

\section{Interaction Embeddings vs.\ Item-Only Embeddings}
\label{sec:rq7}

A natural question is whether LoopFM's gains come from the FM's learned \emph{user-item interaction} representations, or whether simpler item-side-only embeddings---raw feature lookup table outputs without DNN processing---would suffice.
We ablate this by extracting only the 6 pure item-side feature embeddings from the FM's embedding layer (adgroup\_id, cate\_id, campaign\_id, customer, brand, price; $6 \times 32 = 192$ dimensions), compressing them to $d{=}32$ via the same autoencoder architecture used for dnn\_hidden\_0, and building LoopFM sequences identically to the standard pipeline.

\begin{table}[h]
\caption{Interaction embeddings (dnn\_hidden\_0) vs.\ item-only embeddings on TaobaoAd (FM: DMIN, VM: Deep\underline{FM}).
Item-side embeddings use 6 pure ad feature lookups (no user/context features, no DNN processing), compressed via the same AE architecture as dnn\_hidden\_0.}
\label{tab:adside_ablation}
\centering
\small
\begin{tabular}{lcc}
\toprule
Method & AUC & LogLoss \\
\midrule
Baseline (no distill) & 0.5882 & 0.1977 \\
KD & 0.5980 & 0.1964 \\
\midrule
LoopFM-only (item-side) & 0.6172 & 0.1956 \\
LoopFM-only (dnn\_hidden\_0) & 0.6245 & 0.1955 \\
\midrule
KD + LoopFM (item-side) & 0.6291 & 0.1942 \\
KD + LoopFM (dnn\_hidden\_0) & \textbf{0.6342} & \textbf{0.1938} \\
\bottomrule
\end{tabular}
\end{table}

Table~\ref{tab:adside_ablation} shows that item-only embeddings capture most of LoopFM's gain: KD + LoopFM (item-side) achieves AUC 0.6291, covering 86\% of the full interaction embedding's improvement over the KD baseline ($+$0.0311 vs.\ $+$0.0362).

\paragraph{Why item-side embeddings work well on TaobaoAd.}
On TaobaoAd, the FM (DMIN) is a relatively shallow model with 3 DNN layers, and the FM and VM share identical features.
In this setting, the FM's DNN layers add limited cross-feature interaction information beyond what the raw item embeddings already encode---the item identity features (adgroup\_id, campaign\_id) are already highly predictive.

\paragraph{Why the gap should widen in practice.}
This result is a single data point on a public benchmark with a shallow FM.
In industrial FMs with much deeper and wider sequence layers, multi-task heads, and cross-domain features, the DNN (or more general interaction) layers contribute a much larger fraction of the representation's total information---fusing user context, behavioral signals, and feature interactions that raw item lookups cannot capture.
A 3-layer DMIN barely transforms raw embeddings, so item-only is close to the full representation; a trillion-parameter FM produces interaction representations that are qualitatively different from feature lookups, and the gap should grow accordingly.
Identifying the sweet spot---deep enough to capture rich interactions, but not so deep that compression discards too much information (cf.\ the layer selection ablation in Table~\ref{tab:ablation})---remains an open question.

\section{Embedding Analysis}
\label{app:embedding_analysis}

To understand what information LoopFM embeddings encode, we conduct three analyses on TaobaoAd (FM: DMIN, autoencoder bottleneck $d{=}32$).

\paragraph{Embedding structure.}
We analyze 50,000 compressed embeddings from Day~5 (FM trained on Days~1--4).
The embeddings exhibit full effective rank: all 32 dimensions contribute meaningfully, with the top-4 and top-8 singular values explaining 28.0\% and 47.3\% of total variance respectively.
This confirms that the autoencoder distributes information across all dimensions rather than concentrating it in a low-rank subspace.
The $\tanh$ activation bounds values to $[-0.92, 1.00]$ with mean L2 norm $1.71 \pm 0.14$.

\paragraph{t-SNE visualization.}
Figure~\ref{fig:tsne} visualizes 20,000 compressed embeddings via t-SNE, colored by FM soft-label (left) and ground-truth click label (right).
The soft-label coloring reveals a smooth gradient from low predicted CTR (blue) to high (red), confirming that the 32-dimensional compressed space preserves the FM's calibration structure.
The ground-truth coloring shows partial separation between clicked (red) and non-clicked (blue) samples, consistent with the inherent noise in click prediction.
The final t-SNE KL divergence of 4.31 indicates a good fit between the high-dimensional and low-dimensional affinity structures.

\begin{figure}[h]
\centering
\includegraphics[width=\textwidth]{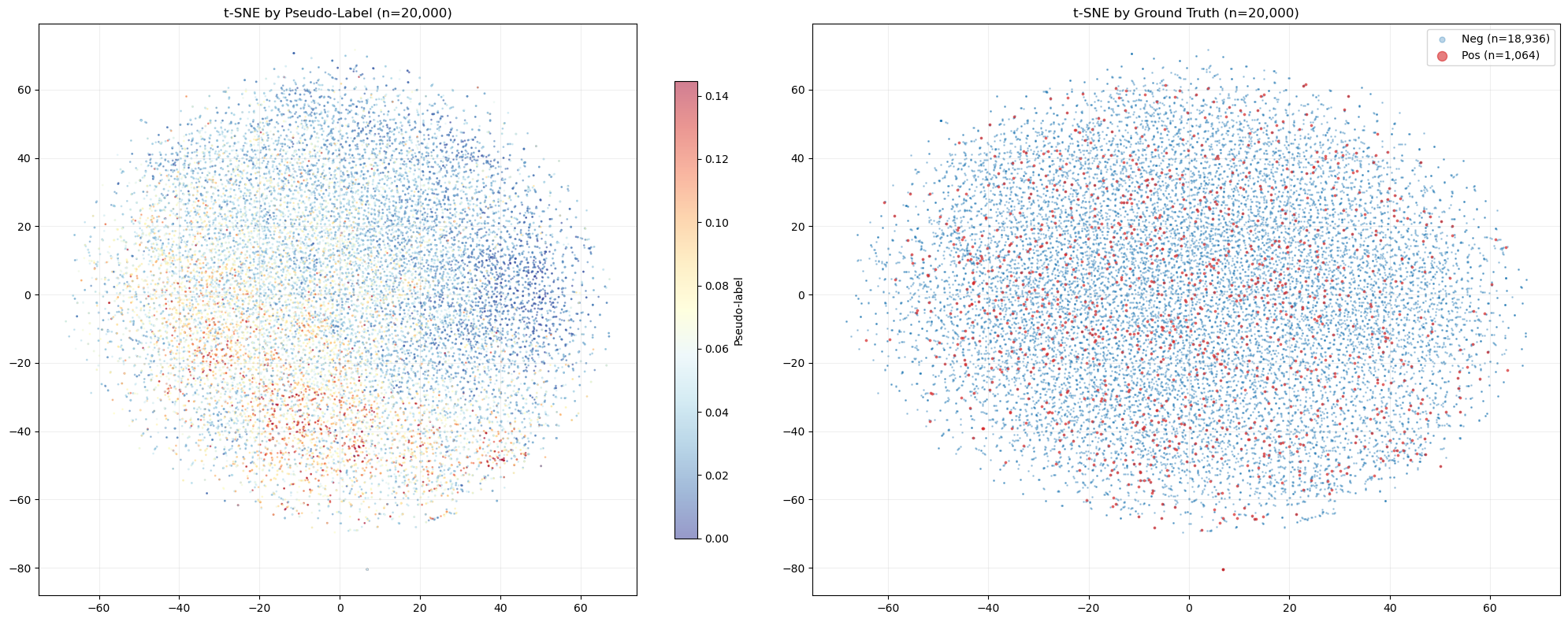}
\caption{t-SNE visualization of compressed LoopFM embeddings ($d{=}32$, 20K samples from TaobaoAd Day~5). \textbf{Left:} Colored by FM soft-label (predicted CTR), showing smooth calibration gradients. \textbf{Right:} Colored by ground-truth click label, showing partial separation.}
\label{fig:tsne}
\end{figure}

\paragraph{Probing experiment.}
We use the FM's soft-label predictions (scalar soft labels) as a proxy for the FM's learned knowledge.
On Day~5 test data, soft-labels achieve AUC 0.6136 as a predictor of ground-truth click labels, confirming they carry substantial task-relevant signal.
Figure~\ref{fig:dim_corr} shows per-dimension Pearson correlations with soft-labels (left) and ground-truth labels (right).
The correlation patterns are strongly aligned ($\rho = 0.95$), indicating that the compressed embeddings preserve the FM's ranking of feature importance almost perfectly.
This validates that the autoencoder retains task-relevant information despite aggressive dimensionality reduction.

\begin{figure}[h]
\centering
\includegraphics[width=\textwidth]{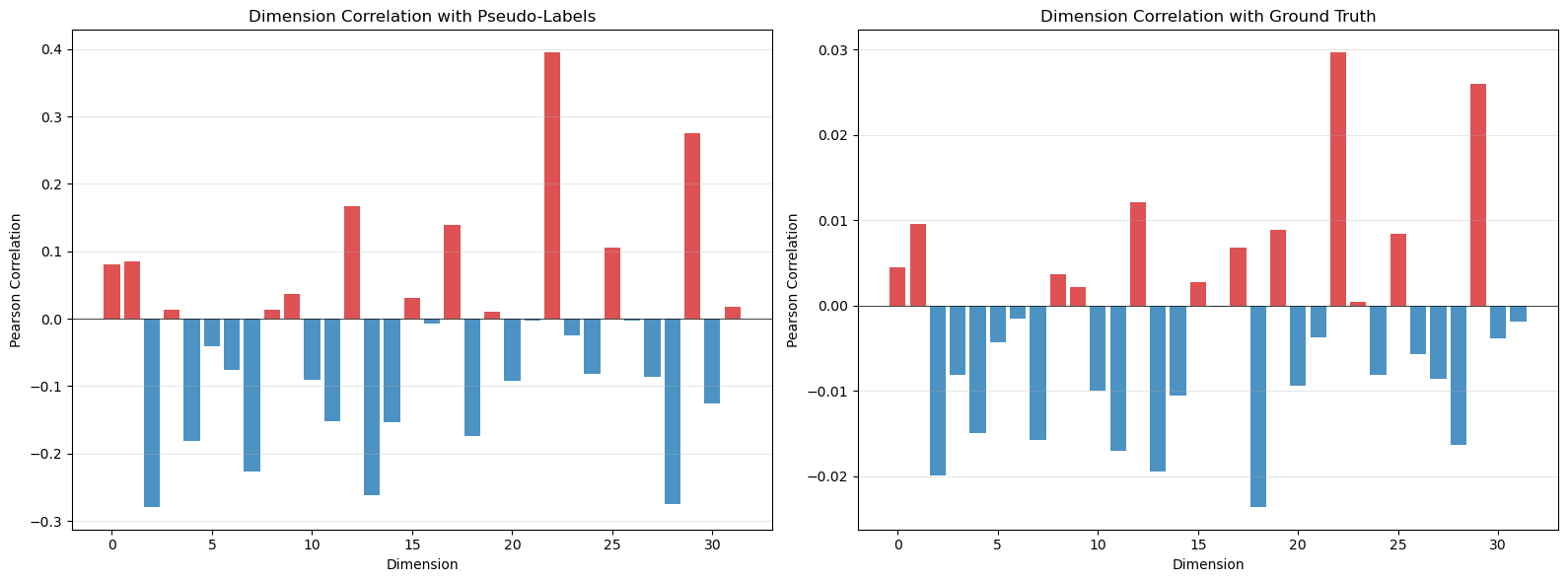}
\caption{Per-dimension Pearson correlation of compressed embeddings with FM soft-labels (left) and ground-truth click labels (right). The correlation patterns are strongly aligned ($\rho{=}0.95$), confirming the autoencoder preserves the FM's learned feature importance structure.}
\label{fig:dim_corr}
\end{figure}

\paragraph{Attention weight analysis.}
We analyze the multi-head attention weights (4 heads) in the VM's DMIN-style sequence encoder over LoopFM positions on 163,840 test samples (mean sequence length 16.7).

Figure~\ref{fig:attn_temporal} shows that attention weight decreases monotonically with temporal distance to the target interaction: entries within 10 minutes receive ${\sim}3\times$ higher attention than those 4--8 hours away, confirming strong recency bias consistent with user interest decay.
All four heads exhibit nearly identical temporal profiles, suggesting temporal recency is a universal signal rather than a head-specialized one.

\begin{figure}[h]
\centering
\includegraphics[width=0.75\textwidth]{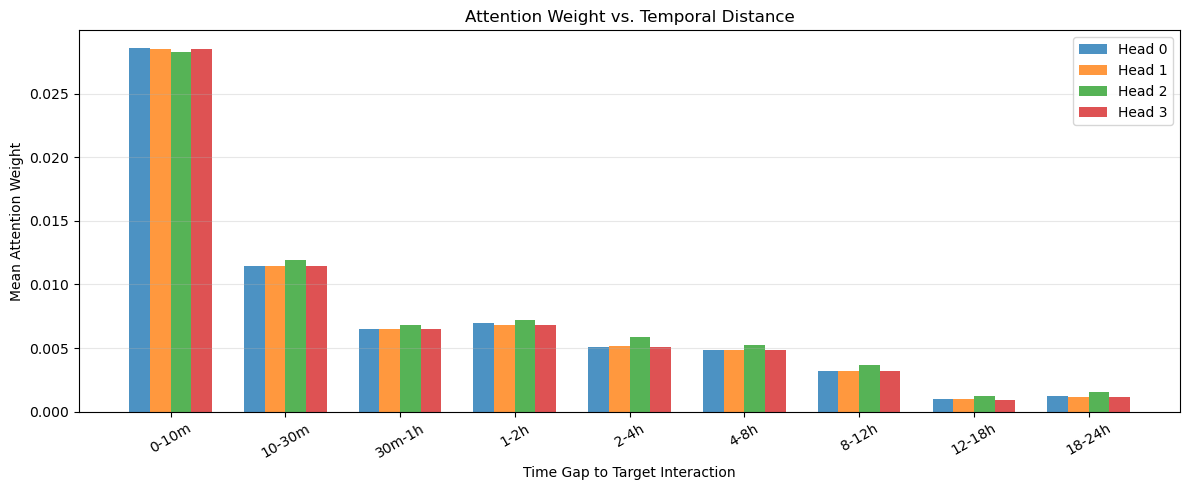}
\caption{Attention weight vs.\ temporal distance to target interaction. All 4 heads show strong recency bias: recent entries ($<$10 min) receive ${\sim}3\times$ higher attention than distant ones ($>$4h).}
\label{fig:attn_temporal}
\end{figure}

Figure~\ref{fig:attn_semantic} examines whether attention correlates with semantic relevance by comparing weights assigned to same-category vs.\ different-category historical interactions.
All four heads show a consistent 1.13--1.16$\times$ attention lift for same-category items, confirming that the sequence encoder learns to attend preferentially to semantically relevant history entries.
Brand matching shows a weaker but consistent 1.04--1.06$\times$ lift, reflecting that category is a stronger relevance signal than brand in TaobaoAd.
Interestingly, attention weights for historically clicked items are slightly \emph{lower} than for unclicked items (0.84--0.97$\times$ ratio), suggesting the model leverages both positive and negative interaction signals.

\begin{figure}[h]
\centering
\includegraphics[width=\textwidth]{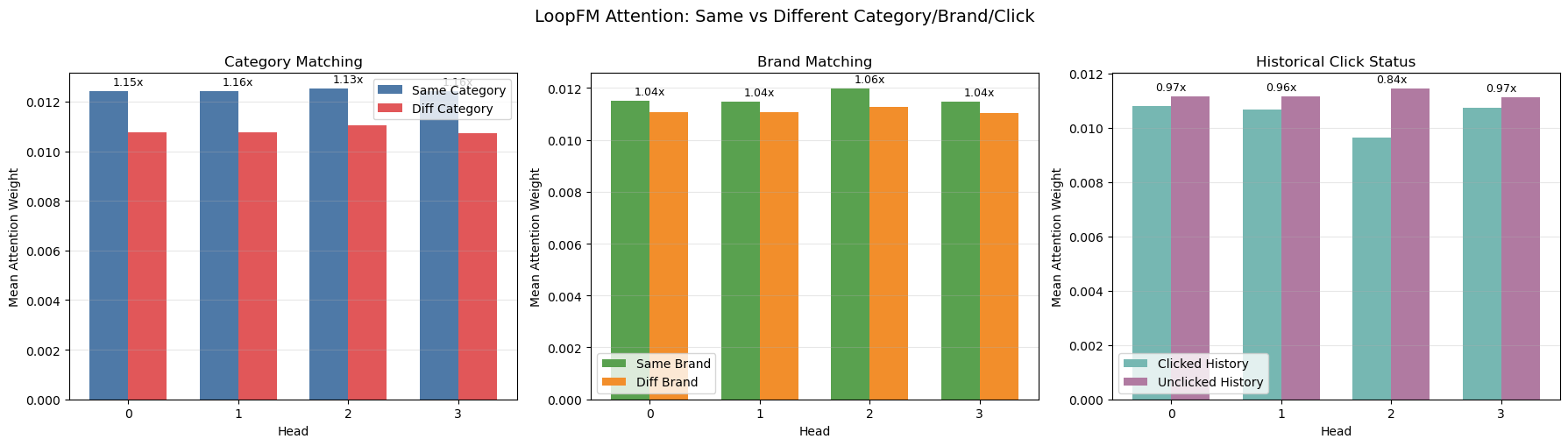}
\caption{Attention weight comparison by semantic matching. \textbf{Left:} Same vs.\ different category (1.13--1.16$\times$ lift). \textbf{Center:} Same vs.\ different brand (1.04--1.06$\times$ lift). \textbf{Right:} Historically clicked vs.\ unclicked items (0.84--0.97$\times$ ratio), showing the model uses negative signals.}
\label{fig:attn_semantic}
\end{figure}

\section{Future Directions}
\label{app:future}

The modularity of LoopFM's three-stage design naturally suggests extensions along each axis.
We highlight several promising directions.

\paragraph{Alternative structuring: item sequences and graph-based features.}
Our experiments use user-level temporal sequences for Stage~3. An equally natural choice is \emph{ad-side sequences}: grouping FM embeddings by item (ad) rather than user to capture how different users have interacted with the same ad over time, providing item-level demand signals.
Beyond flat sequences, FM embeddings can be organized into \emph{graph-structured neighborhoods}---for example, constructing user-item bipartite graphs where edges carry FM embeddings, enabling graph neural network encoders in the VM.

\paragraph{Cross-domain transfer.}
LoopFM's decoupled architecture naturally supports cross-domain knowledge transfer: an FM trained on one domain (e.g., organic content) can generate embeddings consumed by VMs in a different domain (e.g., ads).
Because the VM consumes FM embeddings as opaque input features, no alignment of prediction objectives or feature schemas is needed

\paragraph{Self-LoopFM.}
An intriguing direction is \emph{self-LoopFM}: having the FM itself consume its own historical embeddings as input features in subsequent training iterations, creating a self-improving loop.

\paragraph{Checkpoint-aligned embeddings.}
Our FM checkpoint frequency ablation (Section~\ref{sec:rq5}) reveals that embedding consistency within a user's sequence matters more than individual embedding freshness.
When the FM is updated incrementally, its embeddings drift (paired cosine similarity decays to ${\sim}$0.80 over 3 days), degrading sequence encoder performance.
Techniques such as anchored training, projection alignment (learning a linear map from new-checkpoint space to old-checkpoint space), or EMA-based checkpoint interpolation could enable LoopFM to benefit from fresher FM knowledge without sacrificing within-sequence consistency.

\paragraph{Vector quantization.} Currently we use scalar quantization such as INT4 for storage efficiency. While K-means can minimize INT4 quantization loss, it is intriguing to pursue quantization in vector space, with notable approaches such as RQ-VAE~\citep{lee2022rqvae}.

\end{document}